\documentclass[twoside]{article}

\usepackage[preprint]{aistats2027}
\usepackage[round]{natbib}

\usepackage[utf8]{inputenc}
\usepackage[T1]{fontenc}

\usepackage{amsmath,amssymb,amsthm,mathtools}
\usepackage{booktabs}
\usepackage{array}
\usepackage{graphicx}
\usepackage{float}
\usepackage{placeins}
\usepackage{microtype}
\usepackage{xcolor}
\usepackage{enumitem}
\usepackage{hyperref}
\usepackage{url}
\usepackage{xurl}
\usepackage{aliascnt}
\usepackage{tikz}
\usepackage{pgfplots}
\usepackage{algorithm}
\usepackage{algpseudocode}
\pgfplotsset{compat=1.17}
\usetikzlibrary{arrows.meta,positioning,decorations.pathreplacing,calc}
\hypersetup{hidelinks}

\theoremstyle{plain}
\newtheorem{theorem}{Theorem}[section]
\newaliascnt{proposition}{theorem}
\newtheorem{proposition}[proposition]{Proposition}
\aliascntresetthe{proposition}
\newaliascnt{lemma}{theorem}
\newtheorem{lemma}[lemma]{Lemma}
\aliascntresetthe{lemma}
\newaliascnt{corollary}{theorem}
\newtheorem{corollary}[corollary]{Corollary}
\aliascntresetthe{corollary}
\theoremstyle{definition}
\newaliascnt{definition}{theorem}

\aliascntresetthe{definition}
\theoremstyle{remark}
\newaliascnt{remark}{theorem}
\newtheorem{remark}[remark]{Remark}
\aliascntresetthe{remark}
\theoremstyle{plain}
\newaliascnt{fact}{theorem}
\newtheorem{fact}[fact]{Fact}
\aliascntresetthe{fact}

\usepackage[nameinlink,noabbrev,capitalise]{cleveref}
\crefname{theorem}{Theorem}{Theorems}
\Crefname{theorem}{Theorem}{Theorems}
\crefname{proposition}{Proposition}{Propositions}
\Crefname{proposition}{Proposition}{Propositions}
\crefname{lemma}{Lemma}{Lemmas}
\Crefname{lemma}{Lemma}{Lemmas}
\crefname{corollary}{Corollary}{Corollaries}
\Crefname{corollary}{Corollary}{Corollaries}
\crefname{remark}{Remark}{Remarks}
\Crefname{remark}{Remark}{Remarks}
\crefname{fact}{Fact}{Facts}
\Crefname{fact}{Fact}{Facts}

\definecolor{oiBlue}{RGB}{0,114,178}
\definecolor{oiOrange}{RGB}{213,94,0}
\definecolor{oiGreen}{RGB}{0,158,115}
\definecolor{oiGray}{RGB}{120,120,120}
\definecolor{slotFill}{RGB}{236,240,244}

\DeclareMathOperator{\TV}{TV}
\DeclareMathOperator{\Var}{Var}

\newcommand{\dd}{\,\mathrm d}
\newcommand{\E}{\mathbb E}
\newcommand{\R}{\mathbb R}
\newcommand{\one}{\mathbf 1}
\newcommand{\eps}{\varepsilon}

\newcommand{\Sep}{\operatorname{sep}}
\newcommand{\norm}[1]{\lVert #1\rVert_2}

\newcommand{\methodsFScoarse}{$9.0\%$ to $35.6\%$ and $10$ to $33$\,s}

\definecolor{mbFeedback}{RGB}{204,0,153}

\begin{document}

\twocolumn[
\aistatstitle{Closed-Form Noise Calibration Against Membership Inference for Random-Allocation DP-SGD}
\aistatsauthor{ Murat Bilgehan Ertan$^{*}$ \And Marten van Dijk$^{*}$ }
\runningauthor{Ertan, van Dijk}
\aistatsaddress{ CWI Amsterdam \And CWI Amsterdam } ]

\ifnum\statePaper=1 {\renewcommand{\thefootnote}{\fnsymbol{footnote}}\footnotetext[1]{Affiliated with Vrije Universiteit Amsterdam.}}\fi

\begin{abstract}
DP-SGD protects training data by adding Gaussian noise to clipped gradients. The amount of noise is usually chosen by running a numerical privacy accountant inside a search. We study DP-SGD with random allocation, where each epoch uses every record once, at a randomly chosen step.  For this setting we give a one-line formula that bounds the accuracy of every membership inference attack (MIA) on the trained model. With $M$ steps per epoch, $E$ epochs and noise multiplier $\sigma$, and with membership and non-membership equally likely a priori, the attack accuracy is at most $\frac12+\frac14\sqrt{(1+(e^{1/\sigma^2}-1)/M)^E-1}$. The formula comes from the chi-square divergence between a Gaussian distribution and a Gaussian mixture that dominates random allocation. It is interpretable and gives $\sigma$ in about a microsecond. Where applicable, our formula needs at most about half the noise of the state-of-the-art closed-form bound.  To measure how close the bound is, we also derive an exact expression for the attack accuracy of these two distributions and evaluate it numerically.  Calibrating to this exact expression requires $13.0\%$ to $20.2\%$ less noise than the formula in our main experiments, and since it is exact, no accountant that knows only $M$, $E$ and $\sigma$ can certify a smaller $\sigma$.  In training, the resulting $\sigma$ outperforms the formula and matches a published accountant in test accuracy.  It is found in seconds and certified in minutes, whereas every search we ran with that accountant took longer or returned at least $0.62\%$ more noise. We show that MIAs on the trained models stay below the bound.\footnote{\raggedright Code is available at {\Urlmuskip=0mu\def\UrlBreaks{\do\/}\url{https://github.com/bilgehanertan/closed-form-noise-calibration}}.}
\end{abstract}

\section{Introduction}
\label{sec:introduction}

DP-SGD trains a model on private data by clipping each record's gradient and adding Gaussian noise to every update \citep{AbadiEtAl2016}.  Its privacy analysis depends on how records are grouped into minibatches.  Most analyses assume Poisson subsampling, where each record joins each minibatch independently.  Training loops commonly use every record once per epoch instead \citep{ChuaEtAl2024,ChuaScalable2024}.  \emph{Random allocation}, also called balls-and-bins sampling, keeps this property.  In every epoch each record is sent to one of the $M$ steps, uniformly and independently, so the minibatch sizes vary \citep{ChuaBallsBins2025,FeldmanShenfeld2025,DongChenOzgur2025}.  \Cref{alg:dpsgd} states the algorithm.

For random allocation, \citet{ChuaBallsBins2025} and \citet{FeldmanShenfeld2025} reduce one epoch to two distributions built from Gaussian noise.  In this reduced form, an epoch without the record is $M$ independent standard Gaussian values, one for each step.  With the record, the value of one step, chosen uniformly at random, is also shifted by $1/\sigma$, the clipping norm divided by the noise scale.  We write $P_M$ and $Q_M$ for these two distributions and call them the Gaussian pair.  Detecting the record from one epoch of training is never easier than telling $Q_M$ from $P_M$.

A privacy accountant for random allocation turns this pair into a privacy guarantee for the whole training run.  Given $M$, $E$ and $\sigma$, it computes how the privacy loss of the pair adds up over the $E$ epochs and reports a guarantee such as $(\eps,\delta)$-differential privacy \citep{DworkEtAl2006Calibrating,DworkEtAl2006OurData}.  Accountants based on R\'enyi differential privacy \citep{Mironov2017} use the R\'enyi divergences of $Q_M$ from $P_M$, which are known in closed form at integer orders two and higher \citep{FeldmanShenfeld2025,DongChenOzgur2025}.  Accountants based on the privacy loss distribution \citep{KoskelaEtAl2020} compute the whole distribution of the privacy loss numerically, and \citet{FeldmanShenfeld2026} give a numerical accountant for random allocation.  An accountant tells how private a given $\sigma$ is, but choosing $\sigma$ needs the reverse, the smallest $\sigma$ that meets a privacy target.  The usual approach therefore runs the accountant inside a search over $\sigma$.  In our experiments, one run of the accountant of \citet{FeldmanShenfeld2026} with its default route in one direction at the $\sigma$ it certified for our training took $6.5$ minutes to $7.1$ hours on one CPU core (\cref{tab:calib-timing} in Appendix~\ref{app:experiments}), and a search repeats such runs.  Such a run is slow because its running time grows as the square of the number of grid points \citep[Theorem~4.3]{FeldmanShenfeld2026}, and certifying a $\sigma$ close to the smallest one that meets the target needs a fine grid.  A search with the accountant's fast Fourier route \citep[Appendix~F]{FeldmanShenfeld2026} took only $4.9$ to $10.9$ minutes, but it certified up to $4.07\%$ more noise than the exact $\sigma$, more as $M$ grew.  For a detailed comparison and discussion of both routes, see Appendix~\ref{app:experiments}.  We ask whether this search can be replaced by a formula.  It can.  We give a formula that bounds the accuracy of every membership inference attack on the trained model, at every $M$, $E$ and $\sigma$, and that gives $\sigma$ in about a microsecond.  We also compute the exact value that the formula bounds.  Calibrating to it takes seconds and needs slightly less noise than the $\sigma$ that accountant certifies.  No accountant that knows only $M$, $E$ and $\sigma$ can certify a $\sigma$ below the exact one (\cref{sec:mechanism}).  A finite computation in ball arithmetic certifies in minutes that the exact $\sigma$ meets the target (Appendix~\ref{app:certificate}).

We measure privacy by membership inference.  An attacker sees the final model of a training run and must decide whether a given record was in the training data.  Let $P$ and $Q$ be the probability distributions of the final model without and with the record. Their total variation (TV) distance  is
\begin{equation}
 d_{\TV}(P,Q)=\sup_A\,\lvert P(A)-Q(A)\rvert ,
 \label{eq:tv-def}
\end{equation}
where $A$ ranges over all events.  When both answers are equally likely a priori, the optimal test is correct with probability \citep{ChatzikokolakisEtAl2023,MahloujifarEtAl2022}
\begin{equation}
 \operatorname{Acc}^*(P,Q)=\frac12+\frac12\,d_{\TV}(P,Q).
 \label{eq:sep-tv-acc}
\end{equation}
For any two distributions we call $\operatorname{Acc}^*(P,Q)$ their attack accuracy.  We write $\operatorname{Acc}^*$ alone for the distributions of the final model.

Total variation also has a meaning in $f$-differential privacy \citep{DongRothSuGDP}.  The trade-off function $T(P,Q)(\alpha)$ is the smallest type-II error of a test of $P$ against $Q$ with type-I error at most $\alpha$.  Random guessing gives $1-\alpha$, and
\begin{equation}
 d_{\TV}(P,Q)=\sup_{0\le\alpha\le1}\bigl\{1-\alpha-T(P,Q)(\alpha)\bigr\}
 \label{eq:tv-tradeoff}
\end{equation}
is the largest gap between the two curves (Appendix~\ref{app:background}).  So $d_{\TV}\le\lambda$ means $T(P,Q)(\alpha)\ge1-\lambda-\alpha$ for every $\alpha$. \citet{ErtanVanDijk2026} call the largest distance of the curve from the random-guessing line its separation, which is $d_{\TV}/\sqrt2$.  \Cref{thm:finite-rule} bounds the total variation of the Gaussian pair from above at every $\sigma$, and \cref{thm:exact-tv} gives it exactly.

Our starting point is the chi-square divergence
\begin{equation}
 \chi^2(Q\|P)=\E_P\Bigl(\frac{\dd Q}{\dd P}-1\Bigr)^2 ,
 \label{eq:chi2-def}
\end{equation}
which bounds total variation by $d_{\TV}(P,Q)\le\frac12\sqrt{\chi^2(Q\|P)}$ \citep{SasonVerdu2016}.  For the Gaussian pair it is exactly $(e^{1/\sigma^2}-1)/M$ (\cref{prop:one-epoch}).  Since the allocation is redrawn every epoch, the reference pair for $E$ epochs is the product pair $(P_M^{\otimes E},Q_M^{\otimes E})$, and $1+\chi^2$ multiplies over epochs (\cref{fig:mechanism}).  By a known reduction, DP-SGD with random allocation is never easier to attack than this product pair (\cref{sec:mechanism}).  We show that for every $M$, $E$ and $\sigma$, the final model of DP-SGD with random allocation satisfies
\begin{equation}
 \operatorname{Acc}^*
 \le \frac12+\frac14
 \sqrt{\left(1+\frac{e^{1/\sigma^2}-1}{M}\right)^{\!E}-1}\,.
 \label{eq:main-bound}
\end{equation}
Every quantity on the right is set before training, so the inequality can be solved for $\sigma$.  \Cref{fig:mechanism} shows where the division by $M$ and the power $E$ come from.

\begin{figure}[t]
\centering
\resizebox{\columnwidth}{!}{%
\begin{tikzpicture}[
 box/.style={draw=oiGray,fill=slotFill,minimum width=6.5mm,minimum height=6.5mm,rounded corners=1pt},
 hit/.style={box,draw=oiOrange!70!black,fill=oiOrange!80,text=white},
 every node/.style={font=\scriptsize}]
\node[anchor=east,font=\scriptsize\bfseries] at (-0.2,0) {one epoch};
\foreach \i in {1,...,7}{\node[box] at (0.75*\i,0) {};}
\node[hit] at (0.75*4,0) {$+1/\sigma$};
\draw[decorate,decoration={brace,amplitude=3pt,mirror},oiGray]
  (0.45,-0.45)--(5.55,-0.45) node[midway,below=4pt] {$M$ steps, the record is in one};
\node[draw=oiOrange!60!black,fill=oiOrange!6,rounded corners=2pt,align=center,
      minimum width=3.6cm,minimum height=1.0cm] at (8.0,0)
      {$\chi^2=(e^{1/\sigma^2}-1)/M$\\[-1pt] averaging over the step};
\draw[-{Latex[length=2mm]},oiGray] (5.65,0)--(6.15,0);
\node[anchor=east,font=\scriptsize\bfseries] at (-0.2,-1.4) {$E$ epochs};
\foreach \x/\h in {0.75/2,3.0/1}{
  \foreach \i in {1,2,3}{\node[box,minimum width=5mm,minimum height=5mm] at (\x+0.52*\i,-1.4) {};}
  \node[hit,minimum width=5mm,minimum height=5mm] at (\x+0.52*\h,-1.4) {};
}
\node at (5.25,-1.4) {$\cdots$};
\node[draw=oiGreen!60!black,fill=oiGreen!6,rounded corners=2pt,align=center,
      minimum width=3.6cm,minimum height=1.0cm] at (8.0,-1.4)
      {$1+\chi^2=\bigl(1+\tfrac{e^{1/\sigma^2}-1}{M}\bigr)^E$\\[1pt] independent epochs};
\draw[-{Latex[length=2mm]},oiGray] (5.55,-1.4)--(6.15,-1.4);
\end{tikzpicture}}
\caption{The record is in one of the $M$ steps of each epoch, chosen uniformly at random, so averaging over the step divides the chi-square divergence by $M$, and $1+\chi^2$ multiplies over the independent epochs.}
\label{fig:mechanism}
\end{figure}

\textbf{Contributions.}
\begin{enumerate}
\item \emph{A closed-form bound and noise multiplier} (\cref{thm:finite-rule}).  We derive the bound \eqref{eq:main-bound} from the known chi-square divergence of one epoch and solve the bound for $\sigma$ in closed form (\cref{cor:inverse}).  Both hold at every finite $M$ and $E$, and the noise multiplier takes about a microsecond to compute, while a search over $\sigma$ takes seconds to hours.
\item \emph{The exact total variation at every $E$} (\cref{thm:exact-tv}).  The bound \eqref{eq:main-bound} uses only the second moment of the likelihood ratio, which multiplies over the epochs.  We derive the exact total variation as an integral of the moments of complex order $\frac12+\mathrm iu$ of the likelihood ratio, which multiply over the epochs in the same way, and we reduce the moment of one epoch to a one-dimensional integral (\cref{thm:exact-tv}).  We find the smallest $\sigma$ that meets a target by a root search on this total variation and call it the exact $\sigma$, the least noise that any accountant that knows only $M$, $E$ and $\sigma$ can certify.  We prove that at fixed $\sigma$ the total-variation bound of \cref{thm:finite-rule} divided by the exact value tends to $\sqrt{\pi/2}$ as $M$ grows with $E/M\to0$.  On nine single-epoch targets the closed-form $\sigma$ of \cref{cor:inverse} needs $7.4\%$ to $16.7\%$ more noise than the exact $\sigma$.
\item \emph{The limit at fixed noise} (\cref{thm:fixed-rate}).  We prove that, as $M\to\infty$ with $E/M\to r<\infty$ and $\sigma$ fixed, the trade-off curve of the pair converges to Gaussian differential privacy with $\mu=\sqrt{r(e^{1/\sigma^2}-1)}$.  \citet[Theorem~4.2]{VanDijkErtan2026} show that in this limit the curve is asymptotically at least that of Gaussian differential privacy with the same $\mu$, so that $E=o(M)$ suffices for the curve to approach random guessing, and we show that $E=o(M)$ is also necessary.
\item \emph{Training and membership inference attacks} (\cref{sec:experiments}).  We train ResNet-18 on CIFAR-10, a linear classifier on CIFAR-100 and RoBERTa-base on SST-2 and AG News.  We show that the exact $\sigma$ gives the mean test or validation accuracy of the $\sigma$ of the accountant of \citet{FeldmanShenfeld2026} to within $0.06$ points, and that it is found in about ten seconds on one CPU core and certified to meet the target in minutes.  Over the closed-form $\sigma$ of \cref{cor:inverse}, the exact $\sigma$ raises the mean test or validation accuracy of every model, by $0.43$ to $2.38$ points.  We also show empirically that none of our membership inference attacks has an attack accuracy above our bound \eqref{eq:main-bound}.  On a linear model the optimal attack almost reaches the exact attack accuracy of the Gaussian pair, while attacks on ResNet-18 stay close to random guessing.
\end{enumerate}

\textbf{Organization.} \Cref{sec:mechanism} sets up the training algorithm, the membership test and the Gaussian pair.  \Cref{sec:closed-form} derives the closed-form bound and its noise multiplier, and \cref{sec:exact} computes the exact total variation.  \Cref{sec:experiments} reports the experiments, and \cref{sec:related} discusses related work and limitations.  

\section{Setting}
\label{sec:mechanism}

We now describe the training algorithm and the membership test, and we show that no membership test on the released model has a higher attack accuracy than the optimal test on a Gaussian pair, so \cref{sec:closed-form,sec:exact} study only this pair.

\textbf{The training algorithm.} \Cref{alg:dpsgd} is the algorithm we analyze, DP-SGD with random allocation.  A \emph{record} is the unit that is added or removed.  We use the zero-out adjacency \citep{ChuaEtAl2024,ChuaBallsBins2025}, and we pad the dataset to a public size $N_0$ with ghost records $\bot$, whose gradient is zero.  The $N_0$ records are put in a uniformly random order, which gives each record an index in $\{1,\dots,N_0\}$ (line~\ref{line:place}).  In every epoch each index is assigned to one of the $M$ steps, uniformly and independently (line~\ref{line:alloc}).  At each step the gradients of the records assigned to it are clipped one by one, $\operatorname{clip}_C(v)=v/\max\{1,\|v\|/C\}$, and summed (line~\ref{line:clip}).  Gaussian noise $N(0,\sigma^2C^2I)$ is added, and the result is divided by the public expected batch size $B_0=N_0/M$ (line~\ref{line:noise}).  A step with no genuine record still adds noise and still updates the model, so every run makes exactly $EM$ updates. Because the divisor $B_0$ is public, the scale of the update does not depend on the data \citep{WangEtAl2026Rethinking}.  Only the final model is released (line~\ref{line:release}).

The data enter the algorithm only through the records.  Everything else is fixed in advance and does not depend on the data.  This includes $N_0$, $M$, $E$, $\sigma$, $C$, the optimizer with its learning-rate schedule, and the distribution of the initial model.  The loss $\ell(w;x)$ depends on one record, so layers that mix the records of a step, such as batch normalization, are excluded, and group normalization can be used instead.  The loss may use data augmentation.  Its randomness, like that of the initial model and of the optimizer, is drawn independently of the data, the indices, the allocation and the noise. The random order, the allocations of all epochs and the noise of all steps are independent of each other, and every step draws fresh noise.

\begin{algorithm}[t]
\caption{DP-SGD with random allocation}
\label{alg:dpsgd}
\begin{algorithmic}[1]
\Require dataset $D$ with $|D|\le N_0$, public $N_0,M,E,\sigma,C$ and $B_0=N_0/M$, an optimizer, an initial model $w$
\State pad $D$ with $N_0-|D|$ ghost records $\bot$ and put the $N_0$ records in uniformly random order \label{line:place}
\For{$e=1,\dots,E$}
  \State assign each index $i\in\{1,\dots,N_0\}$ to a step in $\{1,\dots,M\}$, uniformly and independently \label{line:alloc}
  \For{$m=1,\dots,M$}
    \State $S\gets\sum_{x\text{ in step }m}\operatorname{clip}_C(\nabla\ell(w;x))$ \label{line:clip}
    \State $S\gets\bigl(S+N(0,\sigma^2C^2I)\bigr)/B_0$ \Comment{also if step $m$ has no genuine record} \label{line:noise}
    \State $w\gets$ optimizer update of $w$ with $S$ \label{line:update}
  \EndFor
\EndFor
\State \Return $w$ \label{line:release}
\end{algorithmic}
\end{algorithm}

\textbf{The membership test.} Let $D$ be a dataset and $D^+=D\cup\{x_\star\}$ the same dataset with one target record added.  The attacker receives the model released by a run on $D$ or on $D^+$, each with probability $1/2$, and must say which.  The optimal test is correct with probability \eqref{eq:sep-tv-acc}, and \eqref{eq:tv-tradeoff} relates its error to the trade-off function.

\textbf{The Gaussian pair.} Divide each noisy sum by $\sigma C$.  Then the noise is standard Gaussian, and the target's scaled clipped gradient has length at most $t=1/\sigma$.  The Gaussian pair of the introduction is
\begin{equation}
 P_M=N(0,I_M),\qquad
 Q_M=\frac1M\sum_{j=1}^{M}N(te_j,I_M),
 \label{eq:one-record-pair}
\end{equation}
where $e_j$ is the $j$th standard basis vector.  With the target, one coordinate is shifted by $t$, at a step the attacker does not know.  Allocations are redrawn every epoch, so $E$ epochs give the product pair $(P_M^{\otimes E},Q_M^{\otimes E})$.

Theorem~3.1 of \citet{ChuaBallsBins2025}, a special case of Theorem~3.1 of \citet{FeldmanShenfeld2025}, shows that for one epoch of \cref{alg:dpsgd} the Gaussian pair dominates the noisy updates, even when each gradient depends on the earlier updates.  There exist a model, a loss and an optimizer for which, on every dataset $D$ that does not contain $x_\star$ and has $|D|+1\le N_0$, the model that \cref{alg:dpsgd} releases has exactly the distribution $P_M^{\otimes E}$ on $D$ and $Q_M^{\otimes E}$ on $D^+$ (Appendix~\ref{app:attain}).  An accountant whose guarantee holds for every run of \cref{alg:dpsgd} with the same $M$, $E$ and $\sigma$ must also cover this run, so for a target $\lambda$ on the total variation it cannot certify a $\sigma$ below the exact $\sigma$ of \cref{thm:exact-tv}.  By Lemma~D.4 of \citet{ChuaBallsBins2025}, one randomized map turns the pair into the noisy updates under both hypotheses, so at every type-I error no test on the updates has a smaller type-II error than the optimal test on the pair.  Adaptive composition \citep[Theorem~10]{ZhuDongWang2022}, Lemma~D.4 of \citet{ChuaBallsBins2025} applied once more to the composed pair, and post-processing (Appendix~\ref{app:background}) extend this to the model released after $E$ epochs, so for every dataset $D$ with $|D|+1\le N_0$ and $D^+=D\cup\{x_\star\}$, the probability distributions $P_{\rm out}$ and $Q_{\rm out}$ of the model released on $D$ and on $D^+$ satisfy
\begin{equation}
 \begin{gathered}
 d_{\TV}(P_{\rm out},Q_{\rm out})\le d_{\TV}(P_M^{\otimes E},Q_M^{\otimes E}),\\
 T(P_{\rm out},Q_{\rm out})\ge T(P_M^{\otimes E},Q_M^{\otimes E}).
 \end{gathered}
 \label{eq:transfer}
\end{equation}

\section{The Closed-Form Bound}
\label{sec:closed-form}
\label{sec:fixed-rate}

We now bound the total variation of the Gaussian pair through its chi-square divergence and solve the bound for $\sigma$.  The chi-square divergence has a closed form, because averaging over the $M$ steps divides it by $M$ and the $E$ epochs are independent.  Throughout the paper we defer all proofs to the appendix.

Under $P_M$ the coordinates $X_1,\dots,X_M$ are independent standard Gaussians, and the likelihood ratio of \eqref{eq:one-record-pair} is the average
\begin{equation}
 L_M=\frac{\dd Q_M}{\dd P_M}=\frac1M\sum_{j=1}^{M}Y_j,\qquad
 Y_j=\exp\!\left(tX_j-\tfrac{t^2}{2}\right).
 \label{eq:one-epoch-lr}
\end{equation}
Given the step $j$, only coordinate $j$ changes, and its density ratio is $Y_j$.  Each $Y_j$ has mean $1$ and variance $e^{t^2}-1$, a standard integral that Proposition~D.2 of \citet{VanDijkErtan2026} also computes.  Since $L_M$ has mean $1$, the chi-square divergence is the variance of $L_M$.  Averaging $M$ independent copies divides the variance by $M$, which proves the following (Appendix~\ref{app:one-epoch-proof}).

\begin{proposition}[One epoch]
\label{prop:one-epoch}
For the pair \eqref{eq:one-record-pair}, $\chi^2(Q_M\|P_M)=\E_{P_M}(L_M-1)^2=(e^{1/\sigma^2}-1)/M$.
\end{proposition}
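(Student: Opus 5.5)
The plan is to compute the chi-square divergence as the variance of the likelihood ratio $L_M$ under $P_M$. This requires three ingredients: the likelihood ratio formula, the mean and variance of each $Y_j$, and independence of the $Y_j$.

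\emph{The likelihood ratio.} With $t=1/\sigma$, the density of $N(te_j,I_M)$ divided by that of $N(0,I_M)$ at $x$ is $\exp(t x_j-t^2/2)$, since all coordinates except $j$ cancel. Averaging over $j$ gives $L_M=\frac1M\sum_j Y_j$ with $Y_j=\exp(tX_j-t^2/2)$, which is \eqref{eq:one-epoch-lr}. Since $Q_M\ll P_M$ with this density, the definition \eqref{eq:chi2-def} gives $\chi^2(Q_M\|P_M)=\E_{P_M}(L_M-1)^2$.

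\emph{Moments of $Y_j$.} Under $P_M$ we have $X_j\sim N(0,1)$, so the Gaussian moment generating function $\E e^{sX_j}=e^{s^2/2}$ applies. Taking $s=t$ gives $\E Y_j=e^{t^2/2}e^{-t^2/2}=1$. Taking $s=2t$ gives $\E Y_j^2=e^{2t^2}e^{-t^2}=e^{t^2}$, so $\Var Y_j=e^{t^2}-1$.

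\emph{Averaging.} The coordinates $X_1,\dots,X_M$ are independent under $P_M$, so the $Y_j$ are i.i.d. Hence $\E L_M=1$, and $\E(L_M-1)^2=\Var L_M=\frac1{M^2}\sum_j\Var Y_j=(e^{t^2}-1)/M$. Substituting $t^2=1/\sigma^2$ gives the claim.

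There is no real obstacle here. The only point needing care is that the cross terms vanish: $\E[(Y_i-1)(Y_j-1)]=0$ for $i\ne j$ follows from independence together with $\E Y_j=1$.
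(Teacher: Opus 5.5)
Your proposal is correct and follows essentially the same route as the paper: derive $L_M$ by coordinate cancellation, obtain $\E Y_j=1$ and $\E Y_j^2=e^{t^2}$ from the Gaussian moment generating function at $s=t$ and $s=2t$, and use independence to kill the cross terms so the variance divides by $M$. The paper gets $\E e^{sX}=e^{s^2/2}$ by completing the square and invokes the product rule for the cross terms, but the argument is the same.
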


This value follows from the order-two R\'enyi divergence $D_2(Q_M\|P_M)=\ln\E_{P_M}L_M^2$ of random allocation \citep{FeldmanShenfeld2025,DongChenOzgur2025}, since $\chi^2=e^{D_2}-1$.  \citet{LiewTakahashi2022} computed the same divergence as the R\'enyi divergence of the shuffle Gaussian mechanism.  For product distributions R\'enyi divergence adds \citep[Theorem~28]{vanErvenHarremoes2014}, so $1+\chi^2$ multiplies over epochs and
\begin{equation}
 1+\chi^2(Q_M^{\otimes E}\|P_M^{\otimes E})
 =\left(1+\frac{e^{1/\sigma^2}-1}{M}\right)^E=:1+c_E .
 \label{eq:tensorize}
\end{equation}

The chi-square divergence does not determine the total variation, but it bounds it.

\begin{remark}[From chi-square to total variation]
\label{rem:cs-extremal}
For any pair with $\chi^2(Q\|P)=c$ \citep[Eq.~(7)]{SasonVerdu2016},
\begin{equation}
 d_{\TV}(P,Q)\le U(c):=\begin{cases}\frac12\sqrt c,&0\le c\le1,\\[2pt]
 \frac{c}{1+c},&c\ge1.\end{cases}
 \label{eq:sharp-envelope}
\end{equation}
Both branches are attained by distributions on two points, so no bound that depends only on $c$ can be smaller (Appendix~\ref{app:sharp-envelope-proof}).
\end{remark}

\begin{theorem}[Closed-form bound]
\label{thm:finite-rule}
For all integers $M,E\ge1$ and every $\sigma>0$, with $c_E$ as in \eqref{eq:tensorize},
\begin{equation}
 \begin{gathered}
 d_{\TV}(P_M^{\otimes E},Q_M^{\otimes E})\le U(c_E),\\
 \operatorname{Acc}^*(P_M^{\otimes E},Q_M^{\otimes E})\le\frac12+\frac12U(c_E).
 \end{gathered}
 \label{eq:finite-rule}
\end{equation}
\end{theorem}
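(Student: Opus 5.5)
The plan is to chain three facts already in place: the one-epoch chi-square divergence (\cref{prop:one-epoch}), its tensorization over epochs \eqref{eq:tensorize}, and the chi-square-to-total-variation envelope of \cref{rem:cs-extremal}. The accuracy statement then follows from the total-variation statement through \eqref{eq:sep-tv-acc}.

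\emph{Step 1: the chi-square divergence of $E$ epochs.} By \cref{prop:one-epoch}, $\chi^2(Q_M\|P_M)=(e^{t^2}-1)/M$ with $t=1/\sigma$. To derive \eqref{eq:tensorize} self-contained rather than through R\'enyi additivity, I would note that the likelihood ratio of the product pair is $\prod_{e=1}^E L_M^{(e)}$, a product of independent copies of $L_M$ under $P_M^{\otimes E}$. Hence
\[
1+\chi^2(Q_M^{\otimes E}\|P_M^{\otimes E})=\E\prod_e (L_M^{(e)})^2=\bigl(\E_{P_M}L_M^2\bigr)^E=(1+c_1)^E,
\]
and so $\chi^2(Q_M^{\otimes E}\|P_M^{\otimes E})=c_E$. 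All moments involved are finite, since $\E Y_j^2=e^{t^2}<\infty$.

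\emph{Step 2: the envelope.} I would apply \eqref{eq:sharp-envelope} with $P=P_M^{\otimes E}$, $Q=Q_M^{\otimes E}$ and $c=c_E$. If the remark's proof is to be made explicit, there are two branches.
\begin{itemize}
\item For $c\le1$: write $d_{\TV}=\frac12\E_P\lvert L-1\rvert$ and apply Cauchy--Schwarz, which gives $\frac12\sqrt{\E_P(L-1)^2}=\frac12\sqrt c$.
\item For $c\ge1$: write $d_{\TV}=Q(A)-P(A)$ with $A=\{L>1\}$. By Cauchy--Schwarz, $Q(A)^2=(\E_P L\one_A)^2\le(1+c)P(A)$, so $P(A)\ge Q(A)^2/(1+c)$. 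Then $Q(A)-Q(A)^2/(1+c)$ is increasing in $Q(A)$ on $[0,1]$ because $(1+c)/2\ge1$, so it is at most $1-1/(1+c)=c/(1+c)$.
\end{itemize}
This gives the first inequality of \eqref{eq:finite-rule}.

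\emph{Step 3: accuracy.} Substituting into $\operatorname{Acc}^*=\frac12+\frac12 d_{\TV}$ yields the second inequality.

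I do not expect a real obstacle here. The only point needing care is the $c\ge1$ branch. That branch is in fact also implied by the $\frac12\sqrt c$ branch wherever $\frac12\sqrt c\le c/(1+c)$, but it is needed because $\frac12\sqrt c$ exceeds $1$ for large $c$. The monotonicity argument above handles it.
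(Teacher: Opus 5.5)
Your proposal is correct and matches the paper's proof step for step. It tensorizes $1+\chi^2$ through the product of independent likelihood ratios, uses Cauchy--Schwarz for the $\frac12\sqrt c$ branch, bounds $Q(A)^2\le(1+c)P(A)$ on $A=\{L>1\}$ with $x\mapsto x-x^2/(1+c)$ increasing on $[0,1]$ for the $c/(1+c)$ branch, and finishes with $\operatorname{Acc}^*=\frac12+\frac12d_{\TV}$.

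One small correction to your closing remark: since $1+c\ge2\sqrt c$, we have $c/(1+c)\le\frac12\sqrt c$ for every $c\ge0$, with equality only at $c=1$. So the second branch is strictly sharper than the first for every $c>1$, not merely where $\frac12\sqrt c>1$. It is implied by the first branch for no $c>1$, and that is exactly why it needs its own argument.
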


\Cref{prop:one-epoch}, \eqref{eq:tensorize} and \eqref{eq:sharp-envelope} give the theorem.  Since $U(c)\le\frac12\sqrt c$ for every $c$, the theorem and \eqref{eq:transfer} give the bound \eqref{eq:main-bound} for training runs.  To see the size of $c_E$, put $x=E(e^{1/\sigma^2}-1)/M$.  Then $x\le c_E\le e^x-1$ (Appendix~\ref{app:sharp-envelope-proof}), so $c_E$ is close to $x$ when $x$ is small.

Because $U$ increases with $c$ and $c_E$ decreases as $\sigma$ grows, the bound can be solved for $\sigma$.

\begin{corollary}[Noise multiplier for a target total variation]
\label{cor:inverse}
Let $\lambda\in(0,1)$ be a target for $d_{\TV}$, and let $c_\lambda=4\lambda^2$ if $\lambda\le1/2$ and $c_\lambda=\lambda/(1-\lambda)$ if $\lambda>1/2$, so that $U(c_\lambda)=\lambda$.  The noise multiplier
\begin{equation}
 \sigma(M,E,\lambda)=
 \left[\ln\!\left\{1+M\left((1+c_\lambda)^{1/E}-1\right)\right\}\right]^{-1/2}
 \label{eq:inverse-sigma}
\end{equation}
gives $c_E=c_\lambda$, hence $d_{\TV}(P_M^{\otimes E},Q_M^{\otimes E})\le\lambda$, and $\operatorname{Acc}^*\le\frac12+\frac{\lambda}2$ for the final model of \cref{alg:dpsgd} by \eqref{eq:transfer}.
\end{corollary}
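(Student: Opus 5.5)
The plan is to invert the closed form \eqref{eq:tensorize} directly and then apply \cref{thm:finite-rule} together with \eqref{eq:transfer}. There are three steps.

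\textbf{Step 1: the threshold $c_\lambda$.} First I check that $U(c_\lambda)=\lambda$ in both regimes.
\begin{itemize}
\item If $\lambda\le 1/2$, then $c_\lambda=4\lambda^2\le1$. The first branch of \eqref{eq:sharp-envelope} applies and gives $\frac12\sqrt{4\lambda^2}=\lambda$.
\item If $\lambda>1/2$, then $c_\lambda=\lambda/(1-\lambda)>1$. The second branch gives $c_\lambda/(1+c_\lambda)=\lambda$.
\end{itemize}
The two branches agree at $c=1$, where both equal $\frac12$. Each branch is strictly increasing, so $U$ is continuous and strictly increasing on $[0,\infty)$. Hence $c_\lambda$ is the unique solution of $U(c)=\lambda$, and $c\le c_\lambda$ implies $U(c)\le\lambda$.

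\textbf{Step 2: solving for $\sigma$.} Set $c_E=c_\lambda$ in \eqref{eq:tensorize} and solve:
\[
\Bigl(1+\tfrac{e^{1/\sigma^2}-1}{M}\Bigr)^E=1+c_\lambda
\iff e^{1/\sigma^2}=1+M\bigl((1+c_\lambda)^{1/E}-1\bigr)
\iff \sigma^{-2}=\ln\bigl\{1+M\bigl((1+c_\lambda)^{1/E}-1\bigr)\bigr\}.
\]
Two points need care here.
\begin{itemize}
\item Taking the $E$th root is legitimate because $1+\frac{e^{1/\sigma^2}-1}{M}>0$, so the equivalence holds.
\item Since $c_\lambda>0$, we have $(1+c_\lambda)^{1/E}>1$. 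The argument of the logarithm therefore exceeds $1$, the logarithm is strictly positive, and $\sigma=[\,\cdot\,]^{-1/2}$ in \eqref{eq:inverse-sigma} is a well-defined positive real.
\end{itemize}
With this $\sigma$, $c_E=c_\lambda$ holds exactly. I also note that $c_E$ is strictly decreasing in $\sigma$, so this is the unique $\sigma$ with $c_E=c_\lambda$, and any larger $\sigma$ gives $c_E<c_\lambda$. This is not needed for the statement, but it justifies calling \eqref{eq:inverse-sigma} the smallest noise this bound certifies.

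\textbf{Step 3: conclusion.} \Cref{thm:finite-rule} gives $d_{\TV}(P_M^{\otimes E},Q_M^{\otimes E})\le U(c_E)=U(c_\lambda)=\lambda$. Then \eqref{eq:transfer} gives $d_{\TV}(P_{\rm out},Q_{\rm out})\le\lambda$, and \eqref{eq:sep-tv-acc} yields $\operatorname{Acc}^*\le\frac12+\frac\lambda2$ for the final model of \cref{alg:dpsgd}.

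No step is a real obstacle; the argument is algebra plus previously stated results. The only point needing care is the positivity and well-definedness in Step 2, together with the monotonicity and continuity of $U$ across the branch point $c=1$ in Step 1.
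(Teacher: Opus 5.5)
Your proposal is correct and follows the paper's proof: check that $c_\lambda$ solves $U(c_\lambda)=\lambda$ on each branch, solve $c_E=c_\lambda$ for $\sigma$, then apply the closed-form bound, the transfer inequality and the accuracy identity. The details you add, namely positivity of the logarithm's argument and monotonicity of $U$ and of $c_E$ in $\sigma$, are correct. The statement does not need them, though the monotonicity does justify calling this $\sigma$ the least noise the bound certifies.
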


All targets in our experiments are at most $0.4$, so only the first branch of $U$ is used.  The same equation can also be solved for $E$ or for $M$.

To read \eqref{eq:inverse-sigma}, take one epoch and a fixed target $\lambda<1/2$.  Then $\sigma=[\ln(1+4\lambda^2M)]^{-1/2}=(1+o(1))/\sqrt{\ln M}$ as $M$ grows.  The proof of Theorem~6.1 of \citet{ErtanVanDijk2026}, which is stated for one-epoch shuffling and applies to the same Gaussian pair, shows that for $M\ge2$ the pair has $d_{\TV}(P_M,Q_M)\ge\frac12(1-1/\sqrt{4\pi\ln M})(1-O(1/M))$ unless $\sigma\ge1/\sqrt{2\ln M}$.  This lower bound tends to $\frac12$ as $M$ grows, and $\lambda<\frac12$, so every $\sigma<1/\sqrt{2\ln M}$ leaves $d_{\TV}(P_M,Q_M)$ above $\lambda$ for large $M$.  Hence for large $M$ the smallest $\sigma$ that meets the target is at least $1/\sqrt{2\ln M}$, and the closed form is at most $\sqrt2\,(1+o(1))$ times this smallest $\sigma$.

\textbf{The limit at fixed noise.} The bound depends on $M$ and $E$ only through $c_E$, and at fixed $\sigma$, $c_E\to0$ exactly when $E/M\to0$.  The next theorem shows that the exact total variation behaves in the same way, and it finds the limit when $E$ grows in proportion to $M$.  Write $\Phi$ for the standard normal distribution function and $G_\mu(\alpha)=\Phi(\Phi^{-1}(1-\alpha)-\mu)$ for the trade-off function of $N(0,1)$ against $N(\mu,1)$.  It defines Gaussian differential privacy (GDP) \citep{DongRothSuGDP}.

\begin{theorem}[Gaussian differential privacy limit]
\label{thm:fixed-rate}
Fix $\sigma>0$ and let $M\to\infty$.  Let $E_M$ be positive integers with $E_M/M\to r$ for some $r\in[0,\infty)$.  Put $\mu=\sqrt{r(e^{1/\sigma^2}-1)}$.  Then
\begin{equation}
 \sup_{\alpha\in[0,1]}
 \left|T(P_M^{\otimes E_M},Q_M^{\otimes E_M})(\alpha)
       -G_\mu(\alpha)\right|\longrightarrow0 .
 \label{eq:gdp-limit}
\end{equation}
Hence $\operatorname{Acc}^*(P_M^{\otimes E_M},Q_M^{\otimes E_M})\to\Phi(\mu/2)$.  For any sequence $E_M$, the total variation tends to $0$ if and only if $E_M/M\to0$.
\end{theorem}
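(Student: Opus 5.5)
The plan is to prove a central limit theorem for the log-likelihood ratio of the product pair under both hypotheses. Uniform convergence of trade-off functions to $G_\mu$ then follows from standard arguments, as in the GDP CLT of \citet{DongRothSuGDP}. Write $\Lambda=\sum_{e=1}^{E_M}\ln L_M^{(e)}$, where the $L_M^{(e)}$ are i.i.d.\ copies of \eqref{eq:one-epoch-lr}.

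\textbf{Step 1: the log-likelihood ratio under $P$.} Set $V_M=L_M-1$. By \cref{prop:one-epoch}, $\E_{P_M}V_M=0$ and $\E_{P_M}V_M^2=v/M$, where $v=e^{t^2}-1$ and $t=1/\sigma$. I will expand $\ln L_M=V_M-\tfrac12V_M^2+R_M$ with $|R_M|\lesssim|V_M|^3$ on the event $\{|V_M|\le1/2\}$. I also need higher moments. Since $V_M$ is an average of $M$ i.i.d.\ centered variables $Y_j-1$, and these have all moments finite (they are lognormal), Rosenthal's inequality gives $\E|V_M|^3=O(M^{-3/2})$ and $\E V_M^4=O(M^{-2})$. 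Consequently
\[
\E\ln L_M=-\tfrac{v}{2M}+O(M^{-3/2})
\]
on the good event. The contribution of the bad event $\{|V_M|>1/2\}$ is handled separately. There, $\ln L_M\ge\ln(Y_j/M)$ for any $j$, so its negative part has all moments. Chebyshev's inequality gives probability $O(M^{-k})$, and Cauchy--Schwarz then makes this contribution negligible. Summing over $E_M\sim rM$ epochs, $\Lambda$ has mean $\to-\mu^2/2$ and variance $\to rv=\mu^2$. The Lindeberg condition for the triangular array $\{\ln L_M^{(e)}\}$ follows from the fourth-moment bound: $E_M\cdot O(M^{-2})\to0$. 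Hence $\Lambda\Rightarrow N(-\mu^2/2,\mu^2)$ under $P$.

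\textbf{Step 2: transfer to $Q$, the trade-off limit, and the consequences.} By Le Cam's third lemma, which applies because the limit $N(-\mu^2/2,\mu^2)$ satisfies $\E e^{Z}=1$ and so gives contiguity, $\Lambda\Rightarrow N(\mu^2/2,\mu^2)$ under $Q$. The optimal tests are Neyman--Pearson thresholds on $\Lambda$, with randomization at atoms. The limit laws are continuous, so Pólya's theorem makes the convergence of both distribution functions uniform. It follows that $T(P^{\otimes E},Q^{\otimes E})(\alpha)\to G_\mu(\alpha)$ pointwise. Since all these trade-off functions are convex, nonincreasing and bounded on $[0,1]$, and $G_\mu$ is continuous, the convergence is uniform. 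For $r=0$ the same computation gives a degenerate limit, so $\mu=0$ and $G_0(\alpha)=1-\alpha$. Alternatively, this case follows directly from \cref{thm:finite-rule}, because $c_E\to0$.

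The accuracy claim follows from \eqref{eq:tv-tradeoff}. The supremum functional is continuous under uniform convergence, and $d_{\TV}$ of $G_\mu$ equals $2\Phi(\mu/2)-1$, so the accuracy tends to $\Phi(\mu/2)$.

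For the "if and only if" statement, one direction is immediate: if $E_M/M\to0$, then $c_E\to0$, and \cref{thm:finite-rule} gives $d_{\TV}\to0$. For the converse, suppose $E_M/M\not\to0$. Then there is a subsequence along which $E_M/M\to r\in(0,\infty]$. If $r<\infty$, the limit of $d_{\TV}$ is $2\Phi(\mu/2)-1>0$. If $r=\infty$, I use monotonicity of $d_{\TV}$ in the number of epochs, since dropping factors is post-processing. Comparing with $E'_M=\lfloor M\rfloor\le E_M$ reduces this case to $r=1$.

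\textbf{Main obstacle.} I expect the hardest part to be Step 1: controlling $\ln L_M$ uniformly in $M$. The difficulty is the lower tail, where $L_M$ is near $0$. The bound $\ln L_M\ge\ln Y_1-\ln M$ costs a factor $\ln M$, which must be beaten by the $O(M^{-k})$ probability of the bad event. The mean expansion must be accurate to $o(1/M)$ so that the drift summed over $E_M\asymp M$ epochs converges.
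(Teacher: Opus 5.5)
Your proposal is correct, but it takes a different route from the paper's proof at each of its three stages.

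\emph{The null limit.} You apply Lindeberg--Feller to the summands $\ln L_M^{(e)}$ themselves. This forces you to compute $\E\ln L_M$ to accuracy $o(1/M)$ and to control the lower tail of $\ln L_M$. That bound needs Markov's inequality with moments of order at least six; Chebyshev's variance bound alone would leave an $O(\ln M/M)$ error. The paper instead applies Lindeberg--Feller to $U_{M,e}=L_M^{(e)}-1$, using only the explicit fourth moment. It then shows that $\sum_eU_{M,e}^2\to s$, $\max_e|U_{M,e}|\to0$ and $\sum_e|U_{M,e}|^3\to0$ in probability. It expands $\ln(1+u)$ only on the event $\{\max_e|U_{M,e}|\le\frac12\}$, whose probability tends to one. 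The drift $-s/2$ then comes from $-\frac12\sum_eU_{M,e}^2$ in probability rather than from an expectation. No moment of $\ln L_M$ is ever needed, so your ``main obstacle'' does not arise.

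\emph{The trade-off curve.} You use Le Cam's third lemma, P\'olya's theorem, threshold matching, and uniform convergence of monotone functions to a continuous limit. This needs a continuous limit law, which is why you treat $r=0$ separately; your fallback to \cref{thm:finite-rule} together with $T\le1-\alpha$ is valid. The paper stays under the null. It writes $1-T(P,Q)(\alpha)=\int_{1-\alpha}^1H^{-1}(v)\dd v$, with $H$ the null distribution function of the likelihood ratio, and couples $\Lambda_M$ and $e^Z$ through quantile transforms. Since both are nonnegative with mean one, a Scheff\'e-type step gives $\E|\tilde\Lambda_M-\tilde\Lambda_\infty|\to0$. This single bound controls $\sup_\alpha|T_M(\alpha)-G_{\sqrt s}(\alpha)|$ directly and covers $s=0$ with no special case.

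\emph{The converse.} You pass to subsequences, apply the limit theorem along them, and use monotonicity in the number of epochs when $r=\infty$. The paper projects onto the sum of all coordinates. This gives $d_{\TV}\ge2\Phi(t\sqrt{E_M/M}/2)-1$ at every $M$, with no limit theorem and no subsequences.

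Your route follows the familiar Le Cam / Dong--Roth--Su template and yields the limit law under both hypotheses. The paper's route is shorter and self-contained. It avoids the tail analysis, the contiguity machinery and the degenerate case.
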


The proof applies a central limit theorem to the log-likelihood ratio summed over the epochs (Appendix~\ref{app:gdp-proof}).  The parameter $\mu$ of the limit is the one \citet{BuEtAl2020} found for Poisson subsampling.  For the same pair, \citet[Theorem~4.2]{VanDijkErtan2026} prove that the curve is asymptotically at least $G_\mu$, which is one half of the limit.  \Cref{fig:attack-empirical} in Appendix~\ref{app:numerics} checks the limit by Monte Carlo sampling at $M$ as small as $64$.  Setting the limit $\Phi(\mu/2)$ equal to $\frac12+\frac\lambda2$ with $r=E/M$ gives a second one-line noise multiplier, but a limit is not a bound, so unlike the closed form of \cref{cor:inverse}, the second multiplier carries no guarantee at finite $M$ and $E$ (Appendix~\ref{app:gdp-proof}).

When $E_M/M\to0$, the exact total variation and the bound both tend to zero at the same rate.  Put $s_M=\sqrt{E_M(e^{1/\sigma^2}-1)/M}$.  The exact total variation is $s_M(1+o(1))/\sqrt{2\pi}$, and the bound of \cref{thm:finite-rule} is $s_M(1+o(1))/2$ (Appendix~\ref{app:small-div}).  The constant $\sqrt{2\pi}$ comes from the Gaussian limit. For small $s$, the total variation between $N(0,1)$ and $N(s,1)$ is $2\Phi(s/2)-1=s/\sqrt{2\pi}+O(s^3)$, because the standard normal density at zero is $1/\sqrt{2\pi}$.  So the closed form depends on $M$, $E$ and $\sigma$ in the same way as the exact value, and it is larger by a factor that tends to $\sqrt{\pi/2}\approx1.2533$.  \Cref{sec:exact} computes this factor at finite sizes.

\section{The Exact Total Variation}
\label{sec:exact}

We now compute the total variation of the Gaussian pair exactly, to measure how much extra noise the closed form needs and to find the smallest $\sigma$ that meets a target.  The bound of \cref{thm:finite-rule} uses only the second moment of the likelihood ratio. The exact total variation depends on its whole distribution.  For random allocation it can still be written with one-dimensional integrals, because the likelihood ratio is a product of independent factors, one for each epoch, and each factor is an average of independent terms, one for each step.  These integrals are evaluated by computer.

Let $\varphi_Y(u)=\E e^{\mathrm iuY}$ be the characteristic function of the lognormal variable $Y=e^{tX-t^2/2}$, $X\sim N(0,1)$, and write $\Re$ for the real part of a complex number.  In one epoch, \eqref{eq:one-epoch-lr} gives $L_M-1=Z/M$ with $Z=\sum_{j=1}^M(Y_j-1)$.  The $M$ terms $Y_j-1$ are independent, and each has characteristic function $\E e^{\mathrm iv(Y_j-1)}=e^{-\mathrm iv}\varphi_Y(v)$.  Over $E$ epochs there are $E$ independent copies $L_{M,1},\dots,L_{M,E}$ of $L_M$, one for each epoch, and the likelihood ratio is their product $\Lambda=L_{M,1}\cdots L_{M,E}$.  For complex $q$ with $0\le\Re q\le1$ we call $F_M(q)=\E_{P_M}L_M^q$ the one-epoch factor.  The characteristic function of a sum of independent terms is the product of theirs, and the moment of a product of independent factors is the product of their moments, so
\[
 \varphi_Z(v)=\bigl(e^{-\mathrm iv}\varphi_Y(v)\bigr)^M,\qquad
 \E\Lambda^q=F_M(q)^E .
\]

\begin{theorem}[Exact TV]
\label{thm:exact-tv}
For every $E\ge1$,
\begin{equation}
 d_{\TV}(P_M^{\otimes E},Q_M^{\otimes E})=\frac1\pi\int_0^\infty
 \frac{1-\Re\bigl[F_M(\frac12+\mathrm iu)^E\bigr]}{u^2+\frac14}\dd u ,
 \label{eq:exact-tv-E}
\end{equation}
and for $0<\Re q<1$ the one-epoch factor is
\begin{equation}
 F_M(q)=1+\frac{q\,e^{\mathrm i\pi q/2}}{\Gamma(1-q)}\int_0^\infty
 s^{-q-1}e^{\mathrm is}\bigl(1-\varphi_Z(s/M)\bigr)\dd s .
 \label{eq:mellin-cf}
\end{equation}
For $E=1$ the total variation is also
\begin{equation}
 d_{\TV}(P_M,Q_M)=\frac1{\pi M}\int_0^\infty
 \frac{1-\Re\varphi_Z(u)}{u^2}\dd u .
 \label{eq:exact-tv-cf}
\end{equation}
All three integrals converge absolutely, and the integrand of \eqref{eq:exact-tv-E} is nonnegative.
\end{theorem}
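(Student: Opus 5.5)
We would prove the three formulas separately. Each one expresses a simple functional of the likelihood ratio through an integral transform and then exchanges expectation and integral by Fubini.

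\emph{Formula \eqref{eq:exact-tv-E}.} For the product pair the likelihood ratio is $\Lambda=L_{M,1}\cdots L_{M,E}$. We have
\[
 d_{\TV}=\tfrac12\E_P|\Lambda-1|=1-\E_P\min(1,\Lambda),
\]
using $\E_P\Lambda=1$. Write $x=e^y$, so that $\min(1,e^y)=e^{y/2}e^{-|y|/2}$. The Fourier transform of $e^{-|y|/2}$ is $1/(u^2+\frac14)$, so Fourier inversion gives
\[
 \min(1,x)=\frac1{2\pi}\int_{-\infty}^{\infty}\frac{x^{1/2+\mathrm iu}}{u^2+\frac14}\dd u .
\]
Since $|\Lambda^{1/2+\mathrm iu}|=\Lambda^{1/2}$ and $\E\Lambda^{1/2}\le1$, Fubini applies. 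We then use $\frac1{2\pi}\int_{-\infty}^{\infty}(u^2+\frac14)^{-1}\dd u=1$ and fold the integral onto $u\ge0$ by conjugate symmetry. This gives \eqref{eq:exact-tv-E}, once we note that $\E\Lambda^q=F_M(q)^E$ by independence of the epochs. Nonnegativity of the integrand follows from $|F_M(q)^E|\le(\E L_M^{1/2})^E\le1$, which holds by Jensen. Absolute convergence then follows from the integrand being bounded by $2/(u^2+\frac14)$.

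\emph{Formula \eqref{eq:mellin-cf}.} For $0<\Re q<1$ we use $\int_0^\infty s^{-q-1}(1-e^{\mathrm ias})\dd s=-\Gamma(-q)(-\mathrm ia)^q$ for $a>0$. This is the standard Gamma-integral, obtained by rotating the contour. Subtracting the cases $a$ and $1$ and using $-\Gamma(-q)=\Gamma(1-q)/q$ yields the pointwise identity
\[
 a^q-1=\frac{q\,e^{\mathrm i\pi q/2}}{\Gamma(1-q)}\int_0^\infty s^{-q-1}e^{\mathrm is}\bigl(1-e^{\mathrm is(a-1)}\bigr)\dd s .
\]
We apply this with $a=L_M=1+Z/M$ and take $\E_{P_M}$, which turns $\E e^{\mathrm isZ/M}$ into $\varphi_Z(s/M)$. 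The exchange is the delicate step; we justify it with the bound
\[
 |1-e^{\mathrm is(a-1)}|\le\min\bigl(2,s|a-1|\bigr).
\]
This makes the $s$-integral absolutely convergent for every $a$. The bound $s^{1-\Re q-1}$ is integrable at $0$ because $\Re q<1$, and $s^{-\Re q-1}$ is integrable at $\infty$ because $\Re q>0$. The resulting dominating integral equals $C_q|a-1|^{\Re q}$. Its expectation $C_q\E|L_M-1|^{\Re q}$ is finite, since $\E L_M=1$. So Tonelli and Fubini apply, and the same bound with $\E|Z|<\infty$ shows the final integral converges absolutely.

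\emph{Formula \eqref{eq:exact-tv-cf}.} For $E=1$ we have $d_{\TV}=\frac12\E|L_M-1|=\frac1{2M}\E|Z|$. We use the classical identity $|z|=\frac2\pi\int_0^\infty(1-\cos uz)u^{-2}\dd u$. Taking expectations by Tonelli, which is valid because the integrand is nonnegative, gives $\E|Z|=\frac2\pi\int_0^\infty(1-\Re\varphi_Z(u))u^{-2}\dd u$, and \eqref{eq:exact-tv-cf} follows. Absolute convergence here is automatic from nonnegativity and $\E|Z|<\infty$.

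The main obstacle is the justification in the second part. Each piece $\int s^{-q-1}(1-e^{\mathrm ias})\dd s$ on its own is only conditionally convergent at infinity. Our remedy is to work only with the combined integrand $e^{\mathrm is}-e^{\mathrm isa}$ and to invoke Fubini only after establishing the $\min(2,s|a-1|)$ domination. To justify the Gamma-integral itself, we would insert a damping factor $e^{-\eps s}$ and let $\eps\to0$.
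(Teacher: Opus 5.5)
Your proposal is correct and follows essentially the paper's route: the same integral representations of $|z|$, of $\min\{1,x\}$ (the paper uses the Cauchy characteristic function $e^{-|s|}$, which is the same Fourier pair as your $e^{-|y|/2}\leftrightarrow(u^2+\frac14)^{-1}$) and of $x^q$ via a damped Gamma integral, with the same Fubini and Tonelli justifications; the only difference is that you subtract the case $x=1$ before taking expectations, while the paper takes the expectation of the representation of $L_M^q$ and splits afterwards. One correction to your closing remark: each piece $\int_0^\infty s^{-q-1}(1-e^{\mathrm ias})\dd s$ is already absolutely convergent, since it is dominated by $s^{-\Re q-1}\min\{as,2\}$, so the paper exchanges expectation and integral directly using $\E\min\{sL_M,2\}\le\min\{s,2\}$, and your detour through the combined integrand is valid but not needed.
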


Formula \eqref{eq:exact-tv-E} needs only the one-epoch factor $F_M$, raised to the power $E$, one for each epoch.  The accountant of \citet{ZhuDongWang2022} composes steps in the same way, with powers of the characteristic function of the privacy loss.  Formula \eqref{eq:mellin-cf} computes $F_M$ from the characteristic function of $Y$ raised to the power $M$, one factor for each step.  Appendix~\ref{app:exact-tv} gives the proofs.

\textbf{Comparison with the closed form.} We evaluate the total variation numerically, in a few seconds per value (Appendix~\ref{app:exact-numerics} gives the method and its checks).  At one epoch, over ten settings with $\sigma\in\{1.5,2\}$ and $M$ from $4{,}096$ to $1{,}048{,}576$, the ratio of the closed-form bound to the exact total variation lies between $1.253314$ and $1.253413$ and decreases as $M$ grows, toward its limit $\sqrt{\pi/2}=1.253314\ldots$ as $E/M\to0$.  With several epochs the ratio grows with the target.  On the $27$ configurations of \cref{tab:exact27} in Appendix~\ref{app:numerics}, it is about $1.27$ at $\lambda=0.10$ and between $1.41$ and $1.53$ at $\lambda=0.35$.  At a fixed $c_E$, a small enough $\sigma$ makes the exact distance much smaller than the bound (Appendix~\ref{app:chi2-underdetermines}).

We call the smallest $\sigma$ whose exact total variation is at most a target $\lambda$ \textit{the exact} $\sigma$.  Finding it needs a search over $\sigma$, which repeats this evaluation many times.

\section{Experiments}
\label{sec:experiments}

We compare the closed form with the exact value and with other ways of setting $\sigma$ (\cref{sec:tightness}), train four models at each $\sigma$ to measure what the extra noise of the closed form costs (\cref{sec:calib-trial}), and run membership inference attacks on the trained models (\cref{sec:attacks}).

\subsection{Noise and cost of calibration}
\label{sec:tightness}

We compare ways of setting $\sigma$ for a target $\lambda$ at one epoch on nine targets (see Appendix~\ref{app:numerics} for the full comparison.)  A target $\lambda$ caps the attack accuracy of every membership inference attack at $\frac12+\frac\lambda2$, for example at $0.55$ for $\lambda=0.10$.  The closed form takes about a microsecond and needs $7.4\%$ to $16.7\%$ more noise than the exact $\sigma$, which is well defined (Appendix~\ref{app:exact-numerics}) and which a root search on \eqref{eq:exact-tv-cf} finds in $0.27$ to $42$ seconds.  Theorem~4.3 of \citet{FeldmanShenfeld2025}, which we call FS, bounds random allocation through Poisson subsampling at an auxiliary rate $q$.  At $q=1/M$ it needs $9.0\%$ to $34.3\%$ more noise than the exact $\sigma$.  A smaller $q$ brings its bound closer to the exact value, but the bound then needs the total variation of a Poisson pair computed more precisely.  The closed form of \citet{VanDijkErtan2026} needs $128\%$ to $267\%$ more noise where it applies (Appendix~\ref{app:numerics}).

\subsection{What the extra noise costs in training}
\label{sec:calib-trial}

\begin{table*}[t]
\caption{Noise and time of three ways of setting $\sigma$ for the target $\lambda=0.10$, and the test accuracy of training with each.  \emph{Closed form} is \cref{cor:inverse}, \emph{exact} is \cref{thm:exact-tv}, and \emph{accountant} is that of \citet{FeldmanShenfeld2026} with its default route.  Noise and time are ranges over the nine cells $M\in\{100,250,500\}$, $E\in\{10,20,40\}$, with \emph{find} the time to find $\sigma$, \emph{certify} the time to certify that a given $\sigma$ meets the target, and \emph{find and certify} the time for both, on one CPU core.  The closed form needs no run to certify, and \emph{exact} is certified as in Appendix~\ref{app:certificate}.  The accountant's search runs the accountant at each $\sigma$ it tries and certifies the $\sigma$ it returns, so it has no separate find time.  The search ran with no limit on threads, and the accountant's \emph{certify} is the runs at the returned $\sigma$, in one direction, until one certified it.  Under training, the first three rows give the mean test accuracy in percent (validation accuracy for SST-2), and the last two rows give differences in points.}
\label{tab:calib-trial}
\centering
\small
\setlength{\tabcolsep}{3pt}
\begin{tabular}{@{}lcccccccc@{}}
\toprule
& \multicolumn{4}{c}{setting $\sigma$} & \multicolumn{4}{c}{training} \\
\cmidrule(lr){2-5}\cmidrule(lr){6-9}
$\sigma$ & \begin{tabular}[b]{@{}c@{}}noise\\above exact\end{tabular} & \textit{find} & \textit{certify} & \begin{tabular}[b]{@{}c@{}}\textit{find and}\\\textit{certify}\end{tabular} & CIFAR-10 & \begin{tabular}[b]{@{}c@{}}linear\\CIFAR-100\end{tabular} & SST-2 & AG News \\
\midrule
closed form & $14.9$--$25.3$\% & $0.7$--$1.5$\,$\mu$s & none & $0.7$--$1.5$\,$\mu$s & $39.68\pm0.26$ & $53.58\pm0.11$ & $85.89\pm0.87$ & $89.61\pm0.14$ \\
exact & $0$ & $8.8$--$10.4$ s & $70$--$670$ s & $79$--$680$ s & $40.75\pm0.37$ & $55.96\pm0.15$ & $87.47\pm0.52$ & $90.05\pm0.17$ \\
accountant & $0.24$--$0.52$\% & -- & $6.5$m--$7.1$ h & $5.5$--$249$ h & $40.76\pm0.30$ & $55.91\pm0.15$ & $87.47\pm0.59$ & $90.04\pm0.17$ \\
\midrule
\multicolumn{5}{@{}l}{exact minus closed form} & $+1.08$ & $+2.38$ & $+1.58$ & $+0.43$ \\
\multicolumn{5}{@{}l}{exact minus accountant} & $0.00$ & $+0.05$ & $0.00$ & $+0.01$ \\
\bottomrule
\end{tabular}
\end{table*}

We set $\sigma$ for the target $\lambda=0.10$ in three ways and train with each.  A root search finds the exact $\sigma$, rounded up to a multiple of $10^{-6}$, in $8.8$ to $10.4$ seconds on one CPU core, and a finite computation in ball arithmetic certifies in another $70$ to $670$ seconds that this $\sigma$ meets the target (Appendix~\ref{app:certificate}).  The accountant of \citet{FeldmanShenfeld2026} certifies a $\sigma$ when its upper bound on the total variation is at most $\lambda$, and proves that a $\sigma$ misses the target when its lower bound exceeds $\lambda$.  With its default route the accountant certified a $\sigma$ $0.24\%$ to $0.52\%$ above the exact one, and one run in one direction at that $\sigma$, with no search, took $6.5$ minutes to $7.1$ hours on one CPU core.  The accountant's fast Fourier route searched in $4.9$ to $10.9$ minutes but certified $0.26\%$ to $4.07\%$ more noise than the exact $\sigma$, and Appendix~\ref{app:experiments} compares both routes in detail.  The closed form needs $14.9\%$ to $25.3\%$ more noise than the exact $\sigma$, and at its $\sigma$ the exact total variation is only $0.0787$ to $0.0789$, about $79\%$ of $\lambda$ (\cref{tab:exact27} in Appendix~\ref{app:numerics}).  At $\delta=10^{-5}$ the exact $\sigma$ gives $\eps$ between $0.939$ and $1.022$ (\cref{tab:calib-eps} in Appendix~\ref{app:experiments}).  The exact $\sigma$ of \cref{thm:exact-tv} is therefore the smallest of the three.  No accountant that knows only $M$, $E$ and $\sigma$ can certify a $\sigma$ below the exact $\sigma$ before rounding.  All three are certified to meet the target, the closed form by \cref{cor:inverse}, the accountant's $\sigma$ by its upper bound, and the exact $\sigma$ by ball arithmetic.

All training uses \cref{alg:dpsgd}.  ResNet-18 \citep{HeEtAl2016} with group normalization \citep{WuHe2018} is trained from scratch on CIFAR-10 \citep{Krizhevsky2009}, a linear classifier is trained on frozen ResNet-50 features of CIFAR-100, and RoBERTa-base \citep{LiuEtAl2019RoBERTa} is fine-tuned in full on SST-2 \citep{SocherEtAl2013} and on AG News \citep{ZhangZhaoLeCun2015}.  The two image models use the nine cells $M\in\{100,250,500\}$, $E\in\{10,20,40\}$, and the two RoBERTa-base models use $M\in\{100,250,500\}$ at $E=20$.  Each model uses three seeds and one recipe, chosen in advance on public data, and the three runs of a cell and seed differ only in $\sigma$.  \Cref{tab:calib-trial} gives the mean over the cells and seeds of each model, and \cref{tab:calib-cells} in Appendix~\ref{app:experiments} lists every cell.

The exact $\sigma$ raises the mean test accuracy over the closed form by $1.08$ points on CIFAR-10, by $2.38$ points for the linear classifier on CIFAR-100 and by $0.43$ points on AG News, and the validation accuracy on SST-2 by $1.58$ points.  The gain is positive in each of the $24$ cells of the four models, and it persists at the targets $\lambda\in\{0.01,0.05,0.20\}$ (\cref{tab:calib-cells,tab:calib-lambda} in Appendix~\ref{app:experiments}).  Training at the exact $\sigma$ and at the accountant's $\sigma$ gives mean test or validation accuracies within $0.06$ points of each other, as expected for noise levels at most $0.52\%$ apart.  \emph{So the exact $\sigma$ of \cref{thm:exact-tv} gives the training result of the accountant's $\sigma$, and it is found in seconds and certified in minutes, whereas each search with the accountant in our experiments took longer or certified at least $0.62\%$ more noise.}

\subsection{Membership inference attacks (MIA)}
\label{sec:attacks}

We run MIA on two models, with the membership test of \cref{sec:mechanism}.  By \eqref{eq:transfer} no attack has a higher attack accuracy than the Gaussian pair, whose attack accuracy is at most the bound.  \Cref{tab:attacks} collects the results.

\textbf{ResNet-18 on CIFAR-10.} Each trial trains two ResNet-18 models at $M=100$, $E=10$ and $\lambda=0.10$, so $\sigma=1.737$ and the bound is $0.55$.  The two runs differ only in one target record and share all other randomness.  The target is a CIFAR-10 test image of a cat relabeled as a deer, a mislabeled record of the kind used as a canary in privacy audits \citep{NasrEtAl2023Tight}.  The score is the margin between the model's logit for the target's assigned label and its largest other logit \citep{CarliniEtAl2022}.  We chose this score and its threshold on $8$ separate pilot trials, from seven candidates, and then applied them once to $150$ new trials.  The balanced accuracy of the attack, the average of its fraction of correct answers on the $150$ models trained with the target and on the $150$ trained without it, is $0.513$, well below the bound of $0.55$ and below $0.539$, the attack accuracy of the pair itself.  A one-sided $97.5\%$ confidence bound puts it at most $0.534$.  The attack stays near chance because the standard deviation of the score across trials is $62$ times the average change that adding the record causes.

A second attack gives the same picture.  Its target is a ClipBKD canary \citep{JagielskiEtAl2020}, its score is the logit-scaled confidence \citep{CarliniEtAl2022} averaged over six views, and its balanced accuracy is $0.510\pm0.006$, again below $0.539$ and $0.55$.  Appendix~\ref{app:experiments} describes both attacks in detail.

\begin{table*}[t]
\caption{Membership inference attacks on trained models, against the closed-form bound and the
Gaussian pair.  Each trial trains two models that differ only in one target record, and the attacker
sees one of them.  \emph{Bound} is $\frac12+\frac12U(c_E)$ of \cref{thm:finite-rule}, \emph{Gaussian pair} is the
attack accuracy of $(P_M^{\otimes E},Q_M^{\otimes E})$ from \cref{thm:exact-tv}, and \emph{optimal} is the
attack accuracy $\Phi(\mu/2)$ of the optimal attack on the released linear model, with
$\mu=\sqrt{E/M}/\sigma$ for the target $2C\hat x$ and $\mu=0$ for $x_\star=0$.  \emph{Measured} is the
balanced accuracy of our attack with one standard error, over $150$ trials for ResNet-18 and $300$
for each linear row.  Values of $\sigma$ are rounded to three decimals, and the runs and computed columns use the
unrounded values.}
\label{tab:attacks}
\centering
\begin{tabular}{@{}lrrrcccc@{}}
\toprule
model & $M$ & $E$ & $\sigma$ & bound & Gaussian pair & optimal & measured \\
\midrule
ResNet-18 & $100$ & $10$ & $1.737$ & $0.55$ & $0.5394$ & -- & $0.513\pm0.007$ \\
ResNet-18, ClipBKD & $100$ & $10$ & $1.737$ & $0.55$ & $0.5394$ & -- & $0.510\pm0.006$ \\
linear & $128$ & $125$ & $5.039$ & $0.55$ & $0.5394$ & $0.5391$ & $0.533\pm0.007$ \\
linear, exact $\sigma$ & $128$ & $125$ & $3.994$ & $0.564$ & $0.5500$ & $0.5492$ & $0.552\pm0.009$ \\
linear & $128$ & $125$ & $1.561$ & $0.70$ & $0.6374$ & $0.6242$ & $0.613\pm0.012$ \\
linear, $x_\star=0$ & $128$ & $125$ & $1.561$ & $0.70$ & $0.6374$ & $0.5000$ & $0.500\pm0.000$ \\
\bottomrule
\end{tabular}
\end{table*}

\textbf{A linear model with a known optimal attack.} Both attacks on ResNet-18 stay well below the Gaussian pair, so they leave open whether any attack on a trained model can come close to the pair.  We now show that an attack can come close, on a linear model whose optimal attack is known in closed form.  As in the linear-loss analysis of \citet[Theorem~1]{SteinkeEtAl2025}, which treats Poisson subsampling, the gradient of the target is the same at every model, and only the coordinate of the released model along it depends on the target.  Here the model is a vector $w\in\R^{16}$ with loss $-\langle w,x\rangle$, trained by plain SGD from $w=0$ with learning rate $B_0$.  Twelve records have all features zero, so their clipped gradients are zero.  The target has features $2C\hat x$ for a fixed unit vector $\hat x$, so its gradient is clipped to $-C\hat x$ at every model.  The released model is then the sum of the target's contributions and all the noise, and with $\mu=\sqrt{E/M}/\sigma$ the optimal test is correct with probability $\Phi(\mu/2)$ under each hypothesis (Appendix~\ref{app:experiments}).  We take $N_0=16$, $M=128$ steps and $E=125$ epochs, so most steps hold no genuine record and add only noise.

With $300$ trials per setting, the attack reaches $0.533$ against a bound of $0.55$ at $\lambda=0.10$, and $0.613$ against $0.70$ at $\lambda=0.40$, in agreement with the predicted $\Phi(\mu/2)$ of $0.539$ and $0.624$.  At the exact $\sigma=3.994$ for $\lambda=0.10$, where the Gaussian pair has attack accuracy $0.55$, it reaches $0.552\pm0.009$ against the predicted $0.549$.  In every linear row of \cref{tab:attacks} the optimal attack is below the Gaussian pair, which is below the bound, and the measured attack is within one standard error of the optimal one.  At the closed-form $\sigma$ for $\lambda=0.10$ the optimal test on this model is correct with probability $0.5391$, and the optimal test on the pair $(P_M^{\otimes E},Q_M^{\otimes E})$ itself, computed with \cref{thm:exact-tv}, with probability $0.5394$.  So the linear model keeps $99.0\%$ of the pair's advantage over random guessing, and $90.4\%$ at $\lambda=0.40$, where the two are $0.6242$ and $0.6374$.  Training can therefore come close to the pair.  The remaining distance to the bound, from $0.5394$ to $0.55$, is the gap between the closed form and the exact value.

\section{Related Work and Limitations}
\label{sec:related}
\textbf{Random allocation.} Random allocation gives each record a fixed number of participations \citep{ChuaBallsBins2025,FeldmanShenfeld2025,DongChenOzgur2025}.  \citet{ChoquetteChooEtAl2025} use the same sampling for one epoch in matrix mechanisms with correlated noise and compute $(\eps,\delta)$ guarantees by Monte Carlo, and \citet{SchuchardtKalinin2026} compute such guarantees without sampling.  We study DP-SGD with independent noise and a fresh allocation in every epoch.

\textbf{Attack-based measures and calibration.} Total variation is the optimal advantage $2\operatorname{Acc}^*-1$ \citep{YeomEtAl2018} of a membership test for one pair of neighboring datasets \citep{ChatzikokolakisEtAl2023}.  For DP-SGD with Poisson subsampling, \citet{MahloujifarEtAl2022} estimate a bound on this advantage by Monte Carlo, \citet{KulynychEtAl2024} search with an accountant for the $\sigma$ that meets a target advantage, \citet{CherubinEtAl2024} give closed-form record-inference bounds up to an error term, and \citet{VanDijkEtAl2024ProactiveDP} choose parameters before training from a closed-form $(\eps,\delta)$ guarantee.  Appendix~\ref{app:related} gives related work in full.

\textbf{Limitations.} Our bound and the exact $\sigma$ concern the total variation of the pair, which caps the accuracy of every membership test, and we do not obtain an expression for the full trade-off function or the privacy profile.  These may be of separate interest when protection against membership inference is needed at small false-positive rates.  Appendix~\ref{app:limitations} discusses the limitations, including a chi-square bound at small false-positive rates, groups of records and $(\eps,\delta)$ guarantees.

\section{Conclusion}
\label{sec:conclusion}

For DP-SGD with random allocation, one number, $c_E=(1+(e^{1/\sigma^2}-1)/M)^E-1$, bounds the accuracy of every membership test on the released model, and the bound solves for $\sigma$ in closed form.  At fixed $\sigma$ the total-variation bound exceeds the exact value by a factor that tends to $\sqrt{\pi/2}$ as $M$ grows with $E/M\to0$, and on single-epoch targets the closed form needs $7.4\%$ to $16.7\%$ more noise than the exact $\sigma$.  The exact $\sigma$ is the smallest $\sigma$ that any accountant that knows only $M$, $E$ and $\sigma$ could certify.  It takes seconds to find and minutes to certify.  The exact $\sigma$ gives the mean test or validation accuracy of the accountant's $\sigma$ to within $0.06$ points.  Over the closed form it raised the mean test or validation accuracy of every model by $0.43$ to $2.38$ points.  As $M\to\infty$ with $E/M\to r<\infty$ at fixed $\sigma$, the trade-off curve of the pair tends to Gaussian differential privacy, which is random guessing if and only if $r=0$.  A linear model shows that training can come close to the pair, while an attack on ResNet-18 stays well below the bound.

\clearpage
\subsection*{AI use statement}
In this work, we used generative AI tools (large language models) to formulate mathematical claims, to provide critical ingredients for proofs, to assist in writing and checking proofs, to give feedback on the design of experiments, to implement methods by writing and reviewing the code for experiments, and to help interpret results.  We have not used generative AI tools to develop theoretical models or conceptual frameworks or to propose or refine hypotheses, and generating synthetic datasets, translation, cleaning or reformatting datasets and qualitative or thematic data analysis are not applicable to this work.  Additionally, we used generative AI tools to draft and edit text, to write code for analysis, figures and tables, and to search and read the literature.  We have reviewed all AI-assisted work.  The language models wrote the proofs in an iterative process that the authors led.  The authors chose which statements to prove, steered each argument toward an approach, pointed out gaps and errors, and reviewed every result.  All AI-assisted proofs were checked line by line by the authors.  We take responsibility for the final content of this work, including text, claims, or artifacts produced with the aid of generative AI.

\subsubsection*{Acknowledgements}
The contribution of Marten van Dijk and Murat Bilgehan Ertan to this publication is part of the
project CiCS of the research program Gravitation which is (partly) financed by the Dutch Research
Council (NWO) under the grant 024.006.037.
We acknowledge the use of the DAS-6 HPC cluster at VU Amsterdam \citep{bal2016medium}.

\bibliographystyle{plainnat}
\bibliography{references}

\clearpage
\appendix
\thispagestyle{empty}
\onecolumn
\aistatstitle{Closed-Form Noise Calibration Against Membership Inference for Random-Allocation DP-SGD: \\
Supplementary Materials}

\section{Notation and Standard Facts}
\label{app:background}

\textbf{Distributions and likelihood ratios.} $P$ and $Q$ are probability distributions on a common space.  We write $Q\ll P$ ($Q$ is absolutely continuous with respect to $P$) when every event of $P$-probability zero also has $Q$-probability zero.  Then $Q$ has a density $L=\dd Q/\dd P$ with respect to $P$, the likelihood ratio, by the Radon--Nikodym theorem (\cref{fact:radon-nikodym}).  Also $\E_PL=1$, because $Q$ has total mass one.  For the Gaussian pair $L$ is strictly positive, because by \eqref{eq:one-epoch-lr} it is an average of exponentials.

\textbf{Total variation and tests.} The total variation distance is $d_{\TV}(P,Q)=\sup_A|P(A)-Q(A)|$, over all events $A$.  Let $p$ and $q$ be densities of $P$ and $Q$ with respect to a measure $\nu$ that dominates both, for example $\nu=P+Q$, and write $x_+=\max\{x,0\}$.  For every event $A$, $Q(A)-P(A)=\int_A(q-p)\dd\nu\le\int(q-p)_+\dd\nu$, with equality for $A=\{q>p\}$.  Since $\int(q-p)\dd\nu=1-1=0$, the positive and negative parts of $q-p$ have the same integral, so
\[
 P(A)-Q(A)=\int_A(p-q)\dd\nu\le\int(p-q)_+\dd\nu=\int(q-p)_+\dd\nu .
\]
With $|x|=x_++(-x)_+$ this gives
\[
 d_{\TV}(P,Q)=\int(q-p)_+\dd\nu=\frac12\int|q-p|\dd\nu .
\]
When $Q\ll P$ this reads $d_{\TV}(P,Q)=\E_P(L-1)_+=\frac12\E_P|L-1|=\E_P(1-L)_+$, where the last form uses $\E_P(L-1)=0$.

A test is a measurable function $\phi$ with values in $[0,1]$, the probability of deciding for $Q$.  Its type-I error is $\alpha_\phi=\E_P\phi$ and its type-II error is $\beta_\phi=1-\E_Q\phi$.  For every test,
\begin{equation}
 1-\alpha_\phi-\beta_\phi=\int\phi\,(q-p)\dd\nu\le\int(q-p)_+\dd\nu=d_{\TV}(P,Q),
 \label{eq:test-gap}
\end{equation}
with equality for the likelihood-ratio test $\phi=\one\{q>p\}$, which is $\one\{L>1\}$ when $Q\ll P$.  When the two hypotheses are equally likely, a test is correct with probability $\frac12(1-\alpha_\phi)+\frac12(1-\beta_\phi)=\frac12+\frac12(1-\alpha_\phi-\beta_\phi)$. So the optimal test is correct with probability $\frac12+\frac12d_{\TV}(P,Q)$.

\textbf{Trade-off functions and Gaussian differential privacy.} The trade-off function of $P$ against $Q$ is
\[
 T(P,Q)(\alpha)=\inf\{\beta_\phi:\alpha_\phi\le\alpha\},\qquad 0\le\alpha\le1,
\]
the smallest type-II error of a test with type-I error at most $\alpha$ \citep[Definition~2.1]{DongRothSuGDP}.  Random guessing, $\phi\equiv\alpha$, gives $1-\alpha$. When $Q\ll P$, tests that threshold $L$ are optimal, by the Neyman--Pearson lemma \citep{NeymanPearson1933}.  To see this, let $\phi^*$ equal $1$ on $\{L>k\}$ and $0$ on $\{L<k\}$ for some $k\ge0$.  Every test $\phi$ with $\alpha_\phi\le\alpha_{\phi^*}$ then satisfies
\begin{equation}
 \beta_\phi-\beta_{\phi^*}=\E_P[(\phi^*-\phi)L]
 =\E_P[(\phi^*-\phi)(L-k)]+k(\alpha_{\phi^*}-\alpha_\phi)\ge0,
 \label{eq:np}
\end{equation}
because $(\phi^*-\phi)(L-k)\ge0$ everywhere.

For each test with $\alpha_\phi\le\alpha$, \eqref{eq:test-gap} gives $1-\alpha-\beta_\phi\le1-\alpha_\phi-\beta_\phi\le d_{\TV}(P,Q)$, so $1-\alpha-T(P,Q)(\alpha)\le d_{\TV}(P,Q)$.  At the type-I error $\alpha$ of the test $\one\{q>p\}$, equality in \eqref{eq:test-gap} gives the reverse inequality.  Hence
\begin{equation}
 d_{\TV}(P,Q)=\sup_{0\le\alpha\le1}\bigl\{1-\alpha-T(P,Q)(\alpha)\bigr\}.
 \label{eq:tv-sup}
\end{equation}
Since $T(P,Q)\ge0$, a bound $d_{\TV}(P,Q)\le\lambda$ is the same as $T(P,Q)\ge f_{0,\lambda}$, where $f_{0,\lambda}(\alpha)=\max\{0,1-\lambda-\alpha\}$ is the trade-off function of $(0,\lambda)$-differential privacy (\citealp[Proposition~2.5]{DongRothSuGDP} and \citealp[Theorem~2.1]{KairouzOhViswanath2015}). \citet[Definitions~5.1 and~5.2]{ErtanVanDijk2026} define the separation of a trade-off function $f$ as $\Sep(f)=\max_\alpha\{(1-\alpha)-f(\alpha)\}/\sqrt2$, the largest Euclidean distance from the curve of $f$ to the random-guessing line.  By \eqref{eq:tv-sup}, $\Sep(T(P,Q))=d_{\TV}(P,Q)/\sqrt2$.

Let $\Phi$ be the standard normal distribution function.  For $\mu\ge0$, the likelihood ratio $e^{\mu x-\mu^2/2}$ of $N(\mu,1)$ against $N(0,1)$ is nondecreasing in $x$.  So by \eqref{eq:np} the test that rejects when $x>\Phi^{-1}(1-\alpha)$ is optimal, and the trade-off function of $N(0,1)$ against $N(\mu,1)$ is
\[
 G_\mu(\alpha)=\Phi(\Phi^{-1}(1-\alpha)-\mu).
\]
For $\mu>0$ the event $\{q>p\}$ is $A=\{x>\mu/2\}$, so the total variation of this pair is $Q(A)-P(A)=\Phi(\mu/2)-\Phi(-\mu/2)=2\Phi(\mu/2)-1$.

\textbf{Chi-square and R\'enyi divergence.} $\chi^2(Q\|P)=\E_P(L-1)^2=\E_PL^2-1$.  The R\'enyi divergence of order $\alpha>1$ is $D_\alpha(Q\|P)=\frac1{\alpha-1}\ln\E_PL^\alpha$, so $D_2=\ln(1+\chi^2)$.  R\'enyi divergence adds over product distributions \citep[Theorem~28]{vanErvenHarremoes2014}, which is why $1+\chi^2$ multiplies.

\textbf{Privacy profile.} For $\eps\ge0$ the hockey-stick divergence is $H_{e^\eps}(P\|Q)=\int(p-e^\eps q)_+\dd\nu$ for densities $p,q$ with respect to any $\nu$ dominating both.  The privacy profile of a pair is $\delta(\eps)=\max\{H_{e^\eps}(P\|Q),H_{e^\eps}(Q\|P)\}$.  At $\eps=0$ both orders equal $d_{\TV}(P,Q)$.  As in \eqref{eq:test-gap}, each hockey-stick divergence is a supremum over tests, $H_{e^\eps}(P\|Q)=\sup_\phi\{\E_P\phi-e^\eps\E_Q\phi\}$, attained at $\phi=\one\{p>e^\eps q\}$.  A randomized algorithm is $(\eps,\delta)$-differentially private if, for every two datasets that differ by adding or removing one record, the privacy profile of its two output distributions satisfies $\delta(\eps)\le\delta$, that is, $P(A)\le e^\eps Q(A)+\delta$ and $Q(A)\le e^\eps P(A)+\delta$ for every event $A$.  A bound $\lambda$ on the total variation alone gives no such guarantee with $\delta<\lambda$.  On three points the distributions $P=(1-\lambda,\lambda,0)$ and $Q=(1-\lambda,0,\lambda)$ have $d_{\TV}(P,Q)=\lambda$, and $H_{e^\eps}(P\|Q)=H_{e^\eps}(Q\|P)=\lambda$ for every $\eps\ge0$, since each puts mass $\lambda$ on a point where the other puts none.

\textbf{Markov kernels and data processing.} A Markov kernel $K$ from $\mathcal X$ to $\mathcal Y$ assigns to each $x\in\mathcal X$ a probability distribution $K(x,\cdot)$ on $\mathcal Y$, measurably in $x$.  It is a randomized map.  Applying it to $P$ gives $PK(B)=\int K(x,B)\dd P(x)$.  Pushing both distributions through the same randomized map does not make them easier to tell apart, $T(PK,QK)\ge T(P,Q)$ \citep[Lemma~2.9]{DongRothSuGDP}, because a test $\phi$ on $\mathcal Y$ gives the test $x\mapsto\int\phi(y)K(x,\dd y)$ on $\mathcal X$, which has the same type-I and type-II errors.  The same argument proves the inequality for total variation and for both hockey-stick divergences, since each of them is a supremum over tests.

\textbf{Other notation.} Throughout, $N_0$, $M$ and $E$ are positive integers and $\sigma,C>0$. $V_M\Rightarrow V$ denotes convergence in distribution, and for distribution functions $F_n\Rightarrow F$ means that $F_n(y)\to F(y)$ at every $y$ where $F$ is continuous.  $\Re z$ and $\Im z$ are the real and imaginary parts of a complex number $z$, and $\bar z$ its conjugate. $O_\sigma(M^{-2})$ is a quantity bounded in absolute value by $K_\sigma M^{-2}$, with $K_\sigma$ depending only on $\sigma$.  $\varphi_V(u)=\E e^{\mathrm iuV}$ is the characteristic function of $V$.

\textbf{Standard results.} The proofs use the following standard results of measure theory, probability and complex analysis.  Each is stated in the form we use, with its hypotheses, and the proofs refer to it by its number.  All functions in these results are assumed measurable.

\emph{Measure and integration.}

\begin{fact}[{Radon--Nikodym theorem, \citealp[Theorem~A.4.8]{Durrett2019}}]
\label{fact:radon-nikodym}
If $\mu$ and $\nu$ are $\sigma$-finite measures and $\nu\ll\mu$, then there is a measurable $g\ge0$ with $\nu(A)=\int_Ag\dd\mu$ for every measurable set $A$, and $g$ is unique up to a set of $\mu$-measure zero.
\end{fact}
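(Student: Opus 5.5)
The plan is the classical maximal-density argument: first for finite measures, then extended by $\sigma$-finiteness, with uniqueness handled separately.

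\textbf{Finite case, existence.} Assume first that $\mu$ and $\nu$ are finite. Let $\mathcal G$ be the set of measurable $g\ge0$ with $\int_Ag\dd\mu\le\nu(A)$ for every measurable $A$. It contains $0$, and it is closed under pairwise maxima. Indeed, for $g_1,g_2\in\mathcal G$, split $A$ into $A\cap\{g_1\ge g_2\}$ and $A\cap\{g_1<g_2\}$ and apply the defining inequality on each piece.

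Put $s=\sup_{g\in\mathcal G}\int g\dd\mu\le\nu(\text{whole space})<\infty$. Choose $g_n\in\mathcal G$ with $\int g_n\dd\mu\to s$ and replace them by their running maxima, so they increase. Let $g=\lim g_n$. By monotone convergence, $g\in\mathcal G$ and $\int g\dd\mu=s$.

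Now consider the finite nonnegative measure $\nu_0(A)=\nu(A)-\int_Ag\dd\mu$; the goal is to show $\nu_0=0$.

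\textbf{Main step: showing $\nu_0=0$.} This is where absolute continuity enters. Suppose $\nu_0$ is not zero, so $\nu_0(\text{whole space})>0$. Since $\nu_0\ll\mu$, also $\mu(\text{whole space})>0$, and we can pick $\eps>0$ with $\nu_0(\text{whole space})>\eps\,\mu(\text{whole space})$.

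Take a Hahn decomposition of the signed measure $\nu_0-\eps\mu$ with positive set $S$. Then $(\nu_0-\eps\mu)(S)\ge(\nu_0-\eps\mu)(\text{whole space})>0$. This gives $\nu_0(S)>0$, and hence $\mu(S)>0$ by absolute continuity.

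Next, $g+\eps\one_S$ lies in $\mathcal G$, because $\eps\mu(A\cap S)\le\nu_0(A\cap S)\le\nu_0(A)$. But its integral is $s+\eps\mu(S)>s$, which contradicts the maximality of $s$. Hence $\nu_0=0$ and $g$ is the required density.

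If I prefer to avoid citing the Hahn decomposition, I would instead use von Neumann's argument. There one applies the Riesz representation theorem to the bounded linear functional $f\mapsto\int f\dd\nu$ on $L^2(\mu+\nu)$ and reads off the density from the representing function.

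\textbf{Extension to $\sigma$-finite measures.} Partition the space into countably many sets $X_k$ on which both $\mu$ and $\nu$ are finite; intersecting the two $\sigma$-finite partitions gives such a partition. Apply the finite case to $\mu,\nu$ restricted to each $X_k$ and set $g=\sum_k g_k\one_{X_k}$. Countable additivity of $\nu$ and monotone convergence then give $\nu(A)=\int_Ag\dd\mu$ for every measurable $A$.

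\textbf{Uniqueness.} Suppose $g$ and $h$ both work. Fix a set $X_k$ as above and $c>0$, and let $A=X_k\cap\{g>h\}\cap\{g\le c\}$. On this set both integrals are finite and equal, so $\int_A(g-h)\dd\mu=0$. As the integrand is strictly positive on $A$, this forces $\mu(A)=0$. Letting $c\to\infty$ and taking the union over $k$ gives $\mu(\{g>h\})=0$. By symmetry $\mu(\{h>g\})=0$, so $g=h$ up to a $\mu$-null set.
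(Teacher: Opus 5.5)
Your proposal is correct. The paper gives no proof of this fact; it cites Durrett's Theorem~A.4.8, whose proof goes through the Lebesgue decomposition and uses the same maximal-density argument you give: the class $\mathcal G$ closed under maxima, monotone convergence to a maximizer, a Hahn decomposition of the remainder minus $\eps\mu$ to contradict maximality, and the same truncation $\{g>h,\,g\le n\}$ on finite-measure pieces for uniqueness. The only difference is that you use $\nu\ll\mu$ directly inside the contradiction, whereas Durrett first shows that the remainder is singular to $\mu$ and then uses absolute continuity to conclude it is zero.
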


\begin{fact}[{Uniqueness of measures, \citealp[Theorem~A.1.5]{Durrett2019}}]
\label{fact:uniqueness}
Let $\mathcal P$ be a $\pi$-system, a class of sets closed under finite intersections.  If two measures agree on $\mathcal P$, and there are sets $A_n\in\mathcal P$ with $A_n\uparrow\Omega$ and finite measure, then the two measures agree on the $\sigma$-field generated by $\mathcal P$.  In particular, two measures on a product space that agree on every rectangle $A_1\times A_2$ and give the whole space finite measure are equal, since the rectangles are closed under intersection and generate the product $\sigma$-field.
\end{fact}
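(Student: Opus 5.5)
The plan is the standard Dynkin $\pi$--$\lambda$ argument. I would first handle the finite case and then reduce the general statement to it by truncating along the sets $A_n$. Let $\mu$ and $\nu$ be the two measures and $\mathcal F=\sigma(\mathcal P)$.

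\textbf{Finite case.} Suppose $\mu(\Omega)=\nu(\Omega)<\infty$ and $\mu=\nu$ on $\mathcal P$. Consider the class $\mathcal L=\{B\in\mathcal F:\mu(B)=\nu(B)\}$. I check that $\mathcal L$ is a $\lambda$-system (Dynkin class):
\begin{itemize}
\item it contains $\Omega$;
\item it is closed under proper differences, since for $B\subseteq C$ in $\mathcal L$ we have $\mu(C\setminus B)=\mu(C)-\mu(B)=\nu(C)-\nu(B)=\nu(C\setminus B)$, and the subtraction is legitimate because all masses are finite;
\item it is closed under increasing unions, by continuity from below.
\end{itemize}
Since $\mathcal P\subseteq\mathcal L$ and $\mathcal P$ is a $\pi$-system, Dynkin's $\pi$--$\lambda$ theorem gives $\sigma(\mathcal P)\subseteq\mathcal L$, so $\mu=\nu$ on $\mathcal F$.

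\textbf{General case.} Take $A_n\in\mathcal P$ with $A_n\uparrow\Omega$ and $\mu(A_n)=\nu(A_n)<\infty$. Define the truncations $\mu_n(B)=\mu(B\cap A_n)$ and $\nu_n(B)=\nu(B\cap A_n)$. These are finite measures with equal total mass $\mu(A_n)=\nu(A_n)$. Because $\mathcal P$ is closed under intersections, $P\cap A_n\in\mathcal P$ for every $P\in\mathcal P$, so $\mu_n$ and $\nu_n$ agree on $\mathcal P$. The finite case then gives $\mu_n=\nu_n$ on $\mathcal F$. For any $B\in\mathcal F$ we have $B\cap A_n\uparrow B$, so continuity from below yields $\mu(B)=\lim_n\mu_n(B)=\lim_n\nu_n(B)=\nu(B)$.

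\textbf{Product statement.} The rectangles $A_1\times A_2$ form a $\pi$-system, because $(A_1\times A_2)\cap(B_1\times B_2)=(A_1\cap B_1)\times(A_2\cap B_2)$. By definition they generate the product $\sigma$-field. The whole space is itself a rectangle of finite measure, so the constant sequence $A_n=\Omega$ works, and the general case applies.

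\textbf{Main obstacle.} The only nontrivial ingredient is the $\pi$--$\lambda$ theorem itself. I would quote it from \citet{Durrett2019}. If a self-contained argument were wanted, I would show that the smallest $\lambda$-system $\ell(\mathcal P)$ containing $\mathcal P$ is closed under intersections. This is done in two steps:
\begin{enumerate}
\item fix $A\in\mathcal P$ and show that $\{B:A\cap B\in\ell(\mathcal P)\}$ is a $\lambda$-system containing $\mathcal P$;
\item repeat with $A\in\ell(\mathcal P)$.
\end{enumerate}
A class that is both a $\lambda$-system and a $\pi$-system is a $\sigma$-field, so $\ell(\mathcal P)\supseteq\sigma(\mathcal P)$.

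The one point of care in the main argument is the subtraction in the difference step, which is exactly why the finiteness assumption is needed.
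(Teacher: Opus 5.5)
Your proposal is correct and takes the same route as the proof the paper relies on. The paper does not prove this fact but cites Theorem~A.1.5 of Durrett, whose proof applies the $\pi$--$\lambda$ theorem to $\{B\in\sigma(\mathcal P):\mu(A\cap B)=\nu(A\cap B)\}$ for $A\in\mathcal P$ of finite measure and then lets $A=A_n\uparrow\Omega$ by continuity from below, which is your finite case applied to the truncations $\mu_n,\nu_n$; your product step, with the whole space as a finite-measure rectangle and $A_n=\Omega$, is also correct.
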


\begin{fact}[{Fubini--Tonelli theorem, \citealp[Theorem~1.7.2]{Durrett2019}}]
\label{fact:fubini}
Let $\mu_1$ and $\mu_2$ be $\sigma$-finite measures, and let $f$ be measurable on the product space.  If $f\ge0$, or if $\int|f|\dd(\mu_1\otimes\mu_2)<\infty$, then
\[
 \int\Bigl(\int f(x,y)\,\mu_2(\dd y)\Bigr)\mu_1(\dd x)=\int f\dd(\mu_1\otimes\mu_2)=\int\Bigl(\int f(x,y)\,\mu_1(\dd x)\Bigr)\mu_2(\dd y).
\]
The case $f\ge0$ is Tonelli's theorem and the integrable case is Fubini's theorem.  A complex $f$ with $\int|f|\dd(\mu_1\otimes\mu_2)<\infty$ satisfies the same identity, applied to its real and imaginary parts.
\end{fact}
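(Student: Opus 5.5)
This is a textbook result, and we would follow the standard route: prove Tonelli's theorem for nonnegative functions by building up from indicators of rectangles, then derive Fubini's theorem by splitting into positive and negative parts. Throughout, we first reduce to finite measures. Since $\mu_1$ and $\mu_2$ are $\sigma$-finite, we choose sets $A_n\uparrow\mathcal X$ and $B_n\uparrow\mathcal Y$ of finite measure. We prove every identity with $\mu_i$ restricted to $A_n$ or $B_n$, and pass to the limit with the monotone convergence theorem, since the integrands restricted to $A_n\times B_n$ increase to $f$ when $f\ge0$.

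\emph{Step 1: indicators.} Let $\mathcal L$ be the class of measurable $S\subseteq\mathcal X\times\mathcal Y$ with three properties: $x\mapsto\mu_2(S_x)$ is measurable, $y\mapsto\mu_1(S^y)$ is measurable, and $\int\mu_2(S_x)\,\mu_1(\dd x)=\int\mu_1(S^y)\,\mu_2(\dd y)$. For a rectangle $A_1\times A_2$ both iterated integrals equal $\mu_1(A_1)\mu_2(A_2)$, so rectangles belong to $\mathcal L$. With finite measures, $\mathcal L$ is closed under proper differences, because subtraction is allowed when everything is finite. It is closed under increasing unions by monotone convergence. So $\mathcal L$ is a $\lambda$-system containing the $\pi$-system of rectangles, and by Dynkin's $\pi$--$\lambda$ theorem (the same mechanism behind \cref{fact:uniqueness}) it contains the whole product $\sigma$-field. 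The common value defines $(\mu_1\otimes\mu_2)(S)$. This measure is countably additive by monotone convergence, and it is the unique measure that agrees with $\mu_1(A_1)\mu_2(A_2)$ on rectangles, by \cref{fact:uniqueness}.

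\emph{Step 2: nonnegative functions.} By linearity, the identity and the measurability of the sections' integrals extend to nonnegative simple functions. For a general measurable $f\ge0$, we take simple $0\le f_k\uparrow f$. Monotone convergence, applied once in the inner integral and once in the outer, gives measurability of $x\mapsto\int f(x,y)\,\mu_2(\dd y)$ and the equality of all three integrals. This is Tonelli's theorem.

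\emph{Step 3: integrable and complex functions.} If $\int|f|\dd(\mu_1\otimes\mu_2)<\infty$, we apply Tonelli's theorem to $f_+$ and $f_-$. Each iterated integral is finite, so the inner integrals are finite for $\mu_1$-almost every $x$. On that full-measure set we subtract the two identities; on the exceptional null set we define the inner integral as $0$, which does not change the outer integral. A complex $f$ is handled by applying this to $\Re f$ and $\Im f$, which are both dominated by $|f|$.

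The main obstacle is Step 1, specifically the measurability of the sections and the closure of $\mathcal L$ under differences. Closure under differences genuinely needs finite measures, which is exactly why the $\sigma$-finite reduction must be made before the $\pi$--$\lambda$ argument rather than after it.
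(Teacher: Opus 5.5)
Your proposal is correct and follows the standard textbook proof that the paper relies on by citing \citet[Theorem~1.7.2]{Durrett2019}; the paper states this fact without proof. The steps are: $\pi$--$\lambda$ on rectangles for finite measures, the $\sigma$-finite case by monotone convergence, then simple functions, monotone limits, and positive and negative parts. The differences from Durrett are cosmetic. He builds $\mu_1\otimes\mu_2$ first by the extension theorem and identifies each iterated integral with it, while you define it as the common iterated value, which is the same measure by \cref{fact:uniqueness}. The measurability of the sections $S_x$ and $S^y$ is best handled by writing it into the definition of $\mathcal L$, where it survives proper differences and increasing unions.
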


\begin{fact}[{Dominated convergence, \citealp[Theorems~1.5.8 and~1.6.7]{Durrett2019}}]
\label{fact:dominated}
If $f_n\to f$ almost everywhere, $|f_n|\le g$ for all $n$ and $\int g\dd\mu<\infty$, then $\int f_n\dd\mu\to\int f\dd\mu$.  For expectations, if $X_n\to X$ almost surely, $|X_n|\le Y$ for all $n$ and $\E Y<\infty$, then $\E X_n\to\E X$.  Complex $f_n$ satisfy the same, applied to their real and imaginary parts.
\end{fact}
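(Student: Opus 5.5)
The statement to prove is the dominated convergence fact. It is a standard textbook result, cited from Durrett, so my plan is to reduce it to Fatou's lemma.

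\emph{Step 1 (real case).} Take real $f_n\to f$ almost everywhere with $|f_n|\le g$ and $\int g\dd\mu<\infty$. Then $|f|\le g$ almost everywhere, so $f$ is integrable. The functions $g+f_n$ and $g-f_n$ are nonnegative, and Fatou's lemma applies to each. From $g+f_n$ I get
\[
\int (g+f)\dd\mu\le\liminf_n\int(g+f_n)\dd\mu .
\]
Since $\int g\dd\mu$ is finite, I can subtract it, which gives $\int f\dd\mu\le\liminf_n\int f_n\dd\mu$. In the same way, $g-f_n$ gives $\int f\dd\mu\ge\limsup_n\int f_n\dd\mu$. The two inequalities together give $\int f_n\dd\mu\to\int f\dd\mu$.

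\emph{Step 2 (null sets).} The convergence holds only almost everywhere, so I would first redefine $f_n$ and $f$ to be $0$ on the exceptional null set. This changes no integral.

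\emph{Step 3 (the other forms).} The expectation version is Step~1 with $\mu$ a probability measure, $X_n$ in place of $f_n$, and $Y$ in place of $g$. For complex $f_n$, note that $|\Re f_n|\le|f_n|\le g$ and likewise $|\Im f_n|\le g$. So Step~1 applies separately to the real and imaginary parts, and adding the two limits gives the complex statement.

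\emph{Main obstacle.} The only nontrivial ingredient is Fatou's lemma. I would take it as the standard consequence of monotone convergence applied to $\inf_{k\ge n}h_k$. The one point to handle with care is that subtracting $\int g\dd\mu$ is legitimate only because it is finite; this is exactly why the dominating function must be integrable.
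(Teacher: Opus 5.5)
Your argument is correct and is the standard one. The paper gives no proof of this fact and only cites Durrett, whose proof of Theorem~1.5.8 is exactly your Fatou's-lemma argument on $g+f_n$ and $g-f_n$, subtracting the finite $\int g\dd\mu$; Theorem~1.6.7 is the probability-measure case, and your treatment of complex $f_n$ through real and imaginary parts is what the statement itself prescribes.
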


\emph{Inequalities.}

\begin{fact}[{H\"older's inequality, \citealp[Theorem~1.6.3]{Durrett2019}}]
\label{fact:holder}
If $p,q\in(1,\infty)$ and $1/p+1/q=1$, then $\E|XY|\le(\E|X|^p)^{1/p}(\E|Y|^q)^{1/q}$.  The case $p=q=2$ is the Cauchy--Schwarz inequality $\E|XY|\le(\E X^2\,\E Y^2)^{1/2}$.
\end{fact}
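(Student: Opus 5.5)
The plan is the classical normalization argument, built on Young's inequality for products. First I would dispose of the degenerate cases. Write $A=(\E|X|^p)^{1/p}$ and $B=(\E|Y|^q)^{1/q}$. If $A=0$, then $|X|^p=0$ almost surely, so $XY=0$ almost surely and both sides vanish; the case $B=0$ is the same. If $A=\infty$ or $B=\infty$ while the other is positive, the right side is $+\infty$ and there is nothing to prove. So I may assume $0<A,B<\infty$.

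The key step is Young's inequality: for $a,b\ge0$ and conjugate exponents $p,q\in(1,\infty)$,
\[
 ab\le\frac{a^p}{p}+\frac{b^q}{q}.
\]
If $a=0$ or $b=0$ this is trivial. Otherwise, I would use the concavity of $\ln$ with weights $1/p+1/q=1$:
\[
 \ln\Bigl(\frac{a^p}{p}+\frac{b^q}{q}\Bigr)\ge\frac1p\ln a^p+\frac1q\ln b^q=\ln(ab),
\]
and exponentiate. Concavity of $\ln$ holds because its second derivative $-1/x^2$ is negative.

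Next I would apply Young's inequality pointwise with $a=|X|/A$ and $b=|Y|/B$, and take expectations. This is legitimate because everything is nonnegative, so monotonicity and linearity of the integral apply. The result is
\[
 \frac{\E|XY|}{AB}\le\frac{\E|X|^p}{pA^p}+\frac{\E|Y|^q}{qB^q}=\frac1p+\frac1q=1.
\]
Multiplying by $AB$ gives the claim, and taking $p=q=2$ gives Cauchy--Schwarz.

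I expect no real obstacle here. The only step that needs care is the bookkeeping of the zero and infinite cases before dividing by $A$ and $B$. Everything else is elementary convexity applied pointwise, followed by integration.
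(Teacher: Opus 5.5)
Your proof is correct and follows the same route as the textbook proof the paper cites (the paper gives no proof of its own and only cites Durrett's Theorem~1.6.3). That route disposes of the case where a norm vanishes, normalizes so that both norms equal one, applies Young's inequality $ab\le a^p/p+b^q/q$ pointwise, and takes expectations. The only differences are cosmetic. Durrett obtains Young's inequality by minimizing $x\mapsto x^p/p+y^q/q-xy$ with calculus, while you use the concavity of $\ln$. You also treat infinite norms explicitly, which is fine with the convention $0\cdot\infty=0$.
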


\begin{fact}[{Markov and Chebyshev inequalities, \citealp[Theorem~1.6.4]{Durrett2019}}]
\label{fact:markov}
If $g\ge0$ and $A$ is a Borel set, then $\inf_{y\in A}g(y)\,\Pr(X\in A)\le\E g(X)$.  Taking $g(x)=|x|^p$ gives Markov's inequality $\Pr(|X|\ge\xi)\le\E|X|^p/\xi^p$ for $p>0$ and $\xi>0$, and for $\E|X|<\infty$, Markov's inequality for $X-\E X$ with $p=2$ is Chebyshev's inequality $\Pr(|X-\E X|\ge\xi)\le\Var(X)/\xi^2$.
\end{fact}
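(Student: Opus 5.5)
This statement is a cited standard fact, so I would not reprove it from scratch. Instead I would show how it follows from the definition of expectation and the Markov form already given in the statement, in the form the later proofs use.

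\textbf{Step 1: the basic inequality.} Let $A$ be a Borel set and put $a=\inf_{y\in A}g(y)\ge0$. Pointwise we have $a\,\one\{X\in A\}\le g(X)\one\{X\in A\}\le g(X)$. The first inequality holds because $g\ge a$ on $A$, and the second because $g\ge0$. Taking expectations uses only monotonicity of the integral for nonnegative measurable functions, and gives $a\Pr(X\in A)\le\E g(X)$. If $\E g(X)=\infty$ there is nothing to prove.

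\textbf{Step 2: Markov.} Take $g(x)=|x|^p$ with $p>0$ and $A=\{x:|x|\ge\xi\}$ with $\xi>0$. On $A$ we have $|x|^p\ge\xi^p$, so the infimum is at least $\xi^p$. This gives $\xi^p\Pr(|X|\ge\xi)\le\E|X|^p$, and dividing by $\xi^p>0$ yields the stated bound.

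\textbf{Step 3: Chebyshev.} When $\E|X|<\infty$, the variable $X-\E X$ is well defined. Apply Step 2 to it with $p=2$ and note that $\E(X-\E X)^2=\Var(X)$. If the variance is infinite the bound is trivial.

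\textbf{Main obstacle.} There is no real obstacle. The only care needed is measurability of $\one\{X\in A\}$, which holds because $A$ is Borel and $X$ is measurable. The other point is that in Step 1, if $A$ is empty, the convention $\inf\emptyset=+\infty$ multiplies $\Pr(X\in A)=0$, so the product must be read as $0$.
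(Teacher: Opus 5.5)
Your proof is correct and is the standard argument behind the cited Durrett Theorem~1.6.4, which the paper invokes without reproving. You take expectations in the pointwise bound $\inf_{y\in A}g(y)\,\one\{X\in A\}\le g(X)\one\{X\in A\}\le g(X)$, and then specialize to $g(x)=|x|^p$ on $\{|x|\ge\xi\}$ and to $X-\E X$ with $p=2$.
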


\emph{Independence.}

\begin{fact}[{Independence from distribution functions, \citealp[Theorem~2.1.8]{Durrett2019}}]
\label{fact:indep-df}
Random variables $X_1,\dots,X_n$ are independent if $\Pr(X_1\le x_1,\dots,X_n\le x_n)=\prod_{i=1}^n\Pr(X_i\le x_i)$ for all $x_1,\dots,x_n\in(-\infty,\infty]$.
\end{fact}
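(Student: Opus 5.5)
The statement is the standard fact that independence of $X_1,\dots,X_n$ follows from factorization of the joint distribution function over $(-\infty,\infty]^n$. I would prove it with the $\pi$-system uniqueness result, \cref{fact:uniqueness}.

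First, for each $i$ let $\mathcal P_i$ be the class of sets $\{X_i\le x_i\}$ with $x_i\in(-\infty,\infty]$. This class is a $\pi$-system, since $\{X_i\le a\}\cap\{X_i\le b\}=\{X_i\le\min(a,b)\}$. It contains $\Omega$ (take $x_i=\infty$), and it generates $\sigma(X_i)$ because the half-lines generate the Borel sets. The role of allowing $x_i=\infty$ is that the hypothesis then gives the factorization for every subfamily $i\in I\subseteq\{1,\dots,n\}$: set $x_j=\infty$ for $j\notin I$. So
\[
\Pr\Bigl(\bigcap_{i}A_i\Bigr)=\prod_i\Pr(A_i)\quad\text{for all }A_i\in\mathcal P_i .
\]

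Next, I would upgrade one coordinate at a time. Fix $A_2\in\mathcal P_2,\dots,A_n\in\mathcal P_n$. Consider the two finite measures on $\sigma(X_1)$ given by $B\mapsto\Pr(B\cap A_2\cap\cdots\cap A_n)$ and $B\mapsto\Pr(B)\prod_{i\ge2}\Pr(A_i)$. They agree on $\mathcal P_1$, and $\Omega\in\mathcal P_1$ has finite measure under both. Taking $A_n\uparrow\Omega$ to be the constant sequence $\Omega$, \cref{fact:uniqueness} shows that they agree on $\sigma(X_1)$. Hence the factorization holds with $A_1$ ranging over all of $\sigma(X_1)$.

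Then I would repeat the argument for coordinate $2$, with $A_1\in\sigma(X_1)$ and $A_3,\dots,A_n$ fixed, using the $\pi$-system $\mathcal P_2$. Continuing through coordinate $n$ gives $\Pr(\bigcap A_i)=\prod\Pr(A_i)$ for all $A_i\in\sigma(X_i)$. Because $\Omega$ lies in each $\sigma(X_i)$, this also covers every subfamily, which is the definition of independence of $X_1,\dots,X_n$.

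The only delicate point is the first step: the hypothesis has to be used with $x_j=\infty$ so that subfamily factorization and the condition $\Omega\in\mathcal P_i$ are available. That is exactly why the statement lets $x_i$ range over $(-\infty,\infty]$. The remaining steps are routine applications of \cref{fact:uniqueness}.
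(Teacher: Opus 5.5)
Your proof is correct and is essentially the standard argument behind Durrett's Theorem~2.1.8, which the paper cites without proving. That argument takes the $\pi$-system of events $\{X_i\le x_i\}$, which contains $\Omega$ and generates $\sigma(X_i)$, and extends the factorization one coordinate at a time; you do the extension with \cref{fact:uniqueness} where Durrett uses a direct $\pi$--$\lambda$ argument. One small correction: you say the hypothesis must be used with $x_j=\infty$, but allowing $x_j=\infty$ is a convenience, not a necessity. The events $\{X_1\le m\}\uparrow\Omega$ already satisfy the hypothesis of \cref{fact:uniqueness}, and continuity of $\Pr$ recovers the cases $x_j=\infty$ from real thresholds.
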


\begin{fact}[{Functions of independent groups, \citealp[Theorem~2.1.10]{Durrett2019}}]
\label{fact:indep-functions}
If the random variables $X_{i,j}$ with $1\le i\le n$ and $1\le j\le m(i)$ are independent and each $f_i:\R^{m(i)}\to\R$ is measurable, then the variables $f_i(X_{i,1},\dots,X_{i,m(i)})$ with $1\le i\le n$ are independent.
\end{fact}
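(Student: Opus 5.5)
The plan is to reduce the statement to independence of $\sigma$-fields, and then use that each $f_i(X_{i,1},\dots,X_{i,m(i)})$ is measurable with respect to the $\sigma$-field of its own group. For $1\le i\le n$ let $\mathcal F_i=\sigma(X_{i,1},\dots,X_{i,m(i)})$. Since $f_i$ is Borel measurable, every event $\{f_i(X_{i,1},\dots,X_{i,m(i)})\in B\}$ lies in $\mathcal F_i$. So it suffices to prove the following claim: for all $A_1\in\mathcal F_1,\dots,A_n\in\mathcal F_n$,
\[
\Pr(A_1\cap\dots\cap A_n)=\prod_{i=1}^n\Pr(A_i).
\]
Applying the claim with $A_i=\{f_i(X_{i,1},\dots,X_{i,m(i)})\in B_i\}$ for arbitrary Borel sets $B_i$ then gives the independence of the $f_i$.

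\textbf{A generating $\pi$-system.} For each $i$ let $\mathcal P_i$ be the class of sets $\bigcap_{j=1}^{m(i)}\{X_{i,j}\le x_j\}$ with $x_j\in(-\infty,\infty]$. This class is closed under finite intersections, because the intersection of two such sets is the set with coordinatewise minima $x_j\wedge x_j'$. It contains $\Omega$, taking every $x_j=\infty$. It generates $\mathcal F_i$, because the sets $\{X_{i,j}\le x\}$ already generate $\sigma(X_{i,j})$. By hypothesis the family $(X_{i,j})$ is independent. Hence for $A_i\in\mathcal P_i$ the product formula holds: both sides factor over all pairs $(i,j)$, and $\Pr$ of one group's rectangle is the product of its own coordinate factors.

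\textbf{Extending one group at a time.} Fix $A_2\in\mathcal P_2,\dots,A_n\in\mathcal P_n$ and set $C=A_2\cap\dots\cap A_n$. On $\mathcal F_1$ consider the two finite measures
\[
A\mapsto\Pr(A\cap C),\qquad A\mapsto\Pr(A)\,\Pr(A_2)\cdots\Pr(A_n).
\]
They agree on $\mathcal P_1$ by the previous step. Both give $\Omega\in\mathcal P_1$ the same finite mass, so the constant sequence $A_k=\Omega$ meets the hypothesis of \cref{fact:uniqueness}. That fact therefore makes the two measures agree on all of $\sigma(\mathcal P_1)=\mathcal F_1$.

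Now fix $A_1\in\mathcal F_1$ and $A_3\in\mathcal P_3,\dots,A_n\in\mathcal P_n$, and repeat the same argument in the second slot, extending from $\mathcal P_2$ to $\mathcal F_2$. Continuing through all $n$ slots proves the claim for arbitrary $A_i\in\mathcal F_i$.

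\textbf{The delicate step.} The only step needing care is this extension. The product formula must be applied to measures, not to individual events, and the measures must be checked to agree on a $\pi$-system that contains $\Omega$. Including $x_j=\infty$ in $\mathcal P_i$ is what places $\Omega$ in the class, and it is also what lets the product formula for a subfamily of the variables follow from the formula for the whole family. Everything else is bookkeeping with preimages.
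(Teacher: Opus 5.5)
Your proof is correct and is essentially the standard argument behind Durrett's Theorem~2.1.10, which the paper cites without proof. Durrett derives that theorem from his Theorems~2.1.9 and~2.1.7: for each group he takes a $\pi$-system containing $\Omega$ that generates the group's $\sigma$-field, checks the product formula on it, and extends one slot at a time with a $\pi$--$\lambda$ argument. Your version differs only cosmetically, using lower-orthant events $\bigcap_j\{X_{i,j}\le x_j\}$ and the uniqueness of measures (\cref{fact:uniqueness}) where Durrett uses intersections of arbitrary events from each $\sigma(X_{i,j})$ and checks the $\lambda$-system properties directly.
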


\begin{fact}[{Product rule, \citealp[Theorem~2.1.13]{Durrett2019}}]
\label{fact:product}
If $X_1,\dots,X_n$ are independent and either $X_i\ge0$ for all $i$, or $\E|X_i|<\infty$ for all $i$, then the expectation of $\prod_{i=1}^nX_i$ exists and $\E\prod_{i=1}^nX_i=\prod_{i=1}^n\E X_i$.
\end{fact}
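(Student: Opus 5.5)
We will prove the product rule from the two facts just before it in the list, \cref{fact:uniqueness} and \cref{fact:fubini}. The idea is that the joint law of independent variables is the product of their marginal laws. We first reduce to $n=2$ by induction. Assume the rule holds for $n-1$ variables, and put $W=\prod_{i=1}^{n-1}X_i$. By \cref{fact:indep-functions}, applied with the groups $(X_1,\dots,X_{n-1})$ and $(X_n)$, the variables $W$ and $X_n$ are independent. If all $X_i\ge0$, then $W\ge0$. If all $X_i$ are integrable, we apply the induction hypothesis to the nonnegative variables $|X_i|$, which are independent by \cref{fact:indep-functions}, and get $\E|W|=\prod_{i<n}\E|X_i|<\infty$. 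In both cases the pair $(W,X_n)$ meets the hypothesis of the case $n=2$, and the induction hypothesis gives $\E W=\prod_{i<n}\E X_i$, which closes the induction.

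For $n=2$, let $\mu_1,\mu_2$ be the laws of $X_1,X_2$ on $\R$, and let $\mu$ be the joint law of $(X_1,X_2)$ on $\R^2$. By independence, $\mu(A_1\times A_2)=\Pr(X_1\in A_1)\Pr(X_2\in A_2)=(\mu_1\otimes\mu_2)(A_1\times A_2)$ for all Borel sets $A_1,A_2$. Both $\mu$ and $\mu_1\otimes\mu_2$ are probability measures, so by \cref{fact:uniqueness} they agree on rectangles and hence are equal on the Borel sets of $\R^2$. The change of variables $\E f(X_1,X_2)=\int f\dd\mu$, taken with $f(x,y)=xy$, then writes $\E X_1X_2$ as an integral against $\mu_1\otimes\mu_2$.

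If $X_1,X_2\ge0$, then $f\ge0$ on the support, and Tonelli's theorem (\cref{fact:fubini}) gives
\[
 \int xy\,\dd(\mu_1\otimes\mu_2)=\int x\,\mu_1(\dd x)\int y\,\mu_2(\dd y)=\E X_1\,\E X_2,
\]
with the value $+\infty$ allowed. If $X_1,X_2$ are integrable, we first apply the nonnegative case to $|X_1|,|X_2|$, which are independent by \cref{fact:indep-functions}. This gives $\int|xy|\dd(\mu_1\otimes\mu_2)=\E|X_1|\,\E|X_2|<\infty$, so Fubini's theorem applies and gives the same factorization.

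The only step that needs care is identifying $\E g(X)$ with $\int g\dd(\text{law of }X)$ for nonnegative or integrable $g$. This is the standard change-of-variables formula. If it has to be justified from scratch, we would check it for indicators, extend it to simple functions by linearity, to nonnegative $g$ by monotone convergence, and to integrable $g$ through positive and negative parts. We expect no real obstacle beyond this bookkeeping.
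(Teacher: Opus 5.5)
Your proposal is correct and follows the same argument as the source the paper cites, Durrett's Theorems~2.1.11--2.1.13; the paper itself gives no proof. In both, the joint law is the product of the marginals by the $\pi$-system uniqueness theorem, $\E X_1X_2$ factors by Tonelli's theorem applied first to $|xy|$ and then by Fubini's theorem, and the general case follows by induction using independence of functions of disjoint groups (Durrett splits off $X_1$ rather than $X_n$, which changes nothing), with the one tidy-up being that in the nonnegative case Tonelli's theorem should be applied to $|xy|$, or to $xy$ restricted to $[0,\infty)^2$, which equals $xy$ almost everywhere under the joint law, as your remark about the support already indicates.
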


\emph{Convergence in distribution.}

\begin{fact}[{Quantile transform and coupling, \citealp[proofs of Theorems~1.2.2 and~3.2.8]{Durrett2019}}]
\label{fact:quantile}
For a distribution function $F$ let $F^{-1}(v)=\sup\{y:F(y)<v\}$ for $0<v<1$.  If $V$ is uniform on $(0,1)$, then $F^{-1}(V)$ has distribution function $F$.  If distribution functions satisfy $F_n\Rightarrow F_\infty$, then $F_n^{-1}(v)\to F_\infty^{-1}(v)$ for all but countably many $v\in(0,1)$, so $F_n^{-1}(V)\to F_\infty^{-1}(V)$ almost surely.
\end{fact}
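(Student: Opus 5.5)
The plan is to prove the two claims separately. First I establish the switching relation $F^{-1}(v)\le y \iff v\le F(y)$ for every $v\in(0,1)$ and $y\in\R$, and deduce the distribution of $F^{-1}(V)$ from it. Then I prove pointwise convergence of the quantile functions at every continuity point of $F_\infty^{-1}$, by squeezing between continuity points of $F_\infty$.

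\emph{Step 1 (switching relation).} Suppose $F(y)\ge v$. Every $y'$ with $F(y')<v$ satisfies $F(y')<F(y)$, so monotonicity of $F$ forces $y'<y$. Taking the supremum gives $F^{-1}(v)\le y$. Conversely, suppose $F^{-1}(v)\le y$. Then every $y''>y$ lies above the supremum, so $F(y'')\ge v$. Letting $y''\downarrow y$ and using right continuity of $F$ gives $F(y)\ge v$. The set $\{y:F(y)<v\}$ is nonempty and bounded above because $0<v<1$, so $F^{-1}(v)$ is finite.

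\emph{Step 2 (distribution of $F^{-1}(V)$).} By Step 1, $\Pr(F^{-1}(V)\le y)=\Pr(V\le F(y))=F(y)$ for every $y$, which is the first claim.

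\emph{Step 3 (convergence of quantiles).} The function $F_\infty^{-1}$ is nondecreasing on $(0,1)$, so it has at most countably many discontinuities. Fix a continuity point $v$ of $F_\infty^{-1}$; I bound the $\liminf$ and the $\limsup$ of $F_n^{-1}(v)$ separately. The continuity points of $F_\infty$ are dense, since its discontinuities are countable.
\begin{itemize}
\item For the $\liminf$, take any continuity point $y$ of $F_\infty$ with $y<F_\infty^{-1}(v)$. By Step 1, $F_\infty(y)<v$. Hence $F_n(y)\to F_\infty(y)$ gives $F_n(y)<v$ for all large $n$, so $F_n^{-1}(v)\ge y$. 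Letting $y\uparrow F_\infty^{-1}(v)$ through such points gives $\liminf_n F_n^{-1}(v)\ge F_\infty^{-1}(v)$.
\item For the $\limsup$, take $v'\in(v,1)$ and a continuity point $y$ of $F_\infty$ with $y>F_\infty^{-1}(v')$. By Step 1, $F_\infty(y)\ge v'>v$. So $F_n(y)>v$ for all large $n$, and Step 1 applied to $F_n$ gives $F_n^{-1}(v)\le y$. Letting $y\downarrow F_\infty^{-1}(v')$ and then $v'\downarrow v$, continuity of $F_\infty^{-1}$ at $v$ gives $\limsup_n F_n^{-1}(v)\le F_\infty^{-1}(v)$.
\end{itemize}

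\emph{Step 4 (almost sure convergence).} The exceptional set of $v$ in Step 3 is countable, so it has Lebesgue measure zero. Therefore $F_n^{-1}(V)\to F_\infty^{-1}(V)$ almost surely.

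The only delicate point is the $\limsup$ bound, where the strict inequality $F_\infty(y)>v$ is needed. That is why I pass through $v'>v$ and use continuity of $F_\infty^{-1}$ at $v$, rather than working at $v$ directly; the rest is routine monotonicity.
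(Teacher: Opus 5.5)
Your proof is correct and follows the argument the paper cites from Durrett. Step~1 is the switching identity from the proof of Theorem~1.2.2, and Step~3 is the $\liminf$/$\limsup$ squeeze through continuity points of $F_\infty$ from the proof of Theorem~3.2.8, with one cosmetic difference: Durrett's exceptional set is the levels $v$ with $\sup\{y:F_\infty(y)<v\}<\inf\{y:F_\infty(y)>v\}$, which is exactly the discontinuity set of $F_\infty^{-1}$, and off it he gets the strict inequality $F_\infty(y)>v$ for $y>F_\infty^{-1}(v)$ directly rather than through your auxiliary $v'>v$.
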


\begin{fact}[{Continuous mapping theorem, \citealp[Theorem~3.2.10]{Durrett2019}}]
\label{fact:continuous-mapping}
Let $g$ be measurable with set of discontinuity points $D_g$.  If $X_n\Rightarrow X_\infty$ and $\Pr(X_\infty\in D_g)=0$, then $g(X_n)\Rightarrow g(X_\infty)$.  If in addition $g$ is bounded, then $\E g(X_n)\to\E g(X_\infty)$.
\end{fact}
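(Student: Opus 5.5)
The plan is to reduce everything to almost sure convergence on one probability space, using the coupling of \cref{fact:quantile}. Let $F_n$ and $F_\infty$ be the distribution functions of $X_n$ and $X_\infty$, let $V$ be uniform on $(0,1)$, and put $Y_n=F_n^{-1}(V)$ and $Y_\infty=F_\infty^{-1}(V)$. By \cref{fact:quantile}, $Y_n$ has the law of $X_n$, $Y_\infty$ has the law of $X_\infty$, and $Y_n\to Y_\infty$ almost surely. Both conclusions concern only the laws of $g(X_n)$ and $g(X_\infty)$, so it suffices to prove them for $g(Y_n)$ and $g(Y_\infty)$.

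On the event $\{Y_n\to Y_\infty\}\cap\{Y_\infty\notin D_g\}$, continuity of $g$ at $Y_\infty$ gives $g(Y_n)\to g(Y_\infty)$. The first event has probability one by the coupling, and the second by the hypothesis $\Pr(X_\infty\in D_g)=0$. I will take for granted the standard fact that $D_g$ is a Borel set, so that this probability is defined. Hence $g(Y_n)\to g(Y_\infty)$ almost surely.

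It remains to show that almost sure convergence $Z_n\to Z$ implies $Z_n\Rightarrow Z$. Fix $y$ with $\Pr(Z=y)=0$. The function $\one\{z\le y\}$ is continuous everywhere except at $z=y$. So $\one\{Z_n\le y\}\to\one\{Z\le y\}$ almost surely, and these indicators are bounded by $1$. \Cref{fact:dominated} then gives $\Pr(Z_n\le y)\to\Pr(Z\le y)$. Since $F_Z$ is continuous at $y$ exactly when $\Pr(Z=y)=0$, this is convergence at every continuity point of $F_Z$, which is the definition of $\Rightarrow$. Applied with $Z_n=g(Y_n)$ and $Z=g(Y_\infty)$, this proves the first claim.

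For the second claim, let $g$ be bounded, say $|g|\le K$. Then $|g(Y_n)|\le K$, and \cref{fact:dominated} applied to the almost sure convergence $g(Y_n)\to g(Y_\infty)$ gives $\E g(Y_n)\to\E g(Y_\infty)$, which is the same as $\E g(X_n)\to\E g(X_\infty)$. The only step I expect to need care is measurability. Everything else is routine once the coupling of \cref{fact:quantile} is available, since that coupling does the real work.
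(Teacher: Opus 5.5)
Your proof is correct and matches the proof behind this citation: the paper gives no proof of its own and cites Durrett's Theorem~3.2.10, whose proof likewise uses the quantile (Skorokhod) coupling of \cref{fact:quantile}, obtains $g(Y_n)\to g(Y_\infty)$ almost surely on $\{Y_\infty\notin D_g\}$, and gets the second claim from bounded convergence. The one difference is in the first claim. Durrett composes with an arbitrary bounded continuous $f$ and appeals to the characterization of $\Rightarrow$ by $\E f$, whereas you check the distribution functions directly with indicators at continuity points, which is equally valid and uses only facts listed in the paper.
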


\begin{fact}[{Converging together lemma, \citealp[Exercises~3.2.12 and~3.2.13]{Durrett2019}}]
\label{fact:converging-together}
Convergence in probability implies convergence in distribution, and $X_n\Rightarrow c$ for a constant $c$ implies $X_n\to c$ in probability.  If $X_n\Rightarrow X$ and $Y_n\Rightarrow c$ for a constant $c$, then $X_n+Y_n\Rightarrow X+c$.
\end{fact}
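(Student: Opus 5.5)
The three claims can all be proved directly from distribution functions, using only the definition of $\Rightarrow$ stated under \emph{Other notation}. Throughout, let $F_n$ and $F$ be the distribution functions of $X_n$ and $X$.

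\emph{Convergence in probability implies convergence in distribution.} Suppose $X_n\to X$ in probability, and fix a continuity point $y$ of $F$ and an $\eps>0$. I would use the two event inclusions
\[
\{X_n\le y\}\subseteq\{X\le y+\eps\}\cup\{|X_n-X|>\eps\}
\]
and
\[
\{X\le y-\eps\}\subseteq\{X_n\le y\}\cup\{|X_n-X|>\eps\}.
\]
They give
\[
F(y-\eps)-\Pr(|X_n-X|>\eps)\le F_n(y)\le F(y+\eps)+\Pr(|X_n-X|>\eps).
\]
Letting $n\to\infty$ and then $\eps\downarrow0$ squeezes $\liminf F_n(y)$ and $\limsup F_n(y)$ between $F(y-)$ and $F(y+)$. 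These are both equal to $F(y)$, because $y$ is a continuity point.

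\emph{Convergence in distribution to a constant implies convergence in probability.} Suppose $X_n\Rightarrow c$. The limit distribution function is $\one\{y\ge c\}$, which is continuous at every $y\neq c$, in particular at $c\pm\eps$. Hence
\[
\Pr(|X_n-c|>\eps)\le F_n(c-\eps)+1-F_n(c+\eps)\to0+1-1=0.
\]

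\emph{The sum.} Suppose $X_n\Rightarrow X$ and $Y_n\Rightarrow c$. By the previous part, $Y_n\to c$ in probability. Fix a continuity point $y$ of the distribution function of $X+c$, which is $F(\cdot-c)$. I would sandwich again, using
\[
\{X_n+Y_n\le y\}\subseteq\{X_n\le y-c+\eps\}\cup\{|Y_n-c|>\eps\}
\]
together with the symmetric lower inclusion. The step I expect to need some care is this one. The points $y-c\pm\eps$ must be continuity points of $F$, but $F$ has at most countably many discontinuities, so I would take $\eps\downarrow0$ along a sequence avoiding them. Taking limits as in the first part then gives
\[
F(y-c-\eps)\le\liminf_n\Pr(X_n+Y_n\le y)\le\limsup_n\Pr(X_n+Y_n\le y)\le F(y-c+\eps).
\]
Letting $\eps\downarrow0$ and using continuity of $F$ at $y-c$ finishes the proof.
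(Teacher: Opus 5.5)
Your proof is correct: each of the three sandwiches is valid, and you handle the one delicate step properly by choosing $\eps\downarrow0$ so that $y-c\pm\eps$ avoid the countably many jumps of $F$. The paper gives no proof of its own and only cites Durrett's Exercises~3.2.12 and~3.2.13; your argument is the standard solution to those exercises and is built directly on the paper's definition of $\Rightarrow$ under \emph{Other notation}.
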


\begin{fact}[{Integration to the limit, \citealp[Exercise~3.2.5]{Durrett2019}}]
\label{fact:integration-limit}
Let $g$ and $h$ be continuous with $g>0$ and $|h(x)|/g(x)\to0$ as $|x|\to\infty$.  If $F_n\Rightarrow F$ and $\sup_n\int g\dd F_n<\infty$, then $\int h\dd F_n\to\int h\dd F$.
\end{fact}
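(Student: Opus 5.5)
This is a standard fact, and I would prove it by truncating $h$ at a large radius. The bounded part of $h$ is handled by the continuous mapping theorem (\cref{fact:continuous-mapping}), and the tails are controlled by the uniform bound on $\int g\dd F_n$. Put $K_0=\sup_n\int g\dd F_n<\infty$. Let $X_n$ have distribution function $F_n$ and $X$ have $F$, so that $X_n\Rightarrow X$.

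\textbf{Step 1: the limit inherits the moment bound.} For each $K>0$ the function $\min\{g,K\}$ is bounded and continuous. By \cref{fact:continuous-mapping},
\[
\int\min\{g,K\}\dd F=\lim_n\int\min\{g,K\}\dd F_n\le K_0 .
\]
Letting $K\uparrow\infty$ and using monotone convergence gives $\int g\dd F\le K_0$. The hypothesis $|h|/g\to0$ makes $|h|\le g$ outside a compact set, and $h$ is bounded on that compact set. Hence $\int|h|\dd F<\infty$, and likewise $\sup_n\int|h|\dd F_n<\infty$, so all the integrals in the statement are finite.

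\textbf{Step 2: truncation.} Fix $\eps>0$ and choose $R$ with $|h(x)|\le\eps g(x)$ whenever $|x|\ge R$. Let $\psi$ be continuous with values in $[0,1]$, equal to $1$ on $[-R,R]$ and to $0$ outside $[-R-1,R+1]$. Split $h=h\psi+h(1-\psi)$.

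The function $h\psi$ is continuous with compact support, so it is bounded. \Cref{fact:continuous-mapping} then gives $\int h\psi\dd F_n\to\int h\psi\dd F$.

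The function $h(1-\psi)$ vanishes on $[-R,R]$ and satisfies $|h(1-\psi)|\le|h|\le\eps g$ off $[-R,R]$. So $\bigl|\int h(1-\psi)\dd F_n\bigr|\le\eps K_0$ for every $n$, and by Step 1 the same bound holds for $F$.

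\textbf{Step 3: conclusion.} Combining the two parts of Step 2,
\[
\limsup_n\Bigl|\int h\dd F_n-\int h\dd F\Bigr|\le2\eps K_0 .
\]
Since $\eps>0$ was arbitrary, the limit is zero.

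The only step that needs care is Step 1, the transfer of the uniform bound to the limit $F$. The map $\int g\,\cdot$ is not continuous under weak convergence, so the bound on $\int g\dd F$ has to come from the truncations $\min\{g,K\}$ and monotone convergence rather than from passing to the limit directly.
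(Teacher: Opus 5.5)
Your proof is correct and complete. The smooth cutoff $\psi$ makes $h\psi$ bounded and continuous, so the continuous mapping theorem applies to it. The pointwise bound $|h(1-\psi)|\le\eps g$ controls the tails uniformly in $n$, and your Step~1 (truncating $g$ at level $K$, then monotone convergence) correctly gives $\int g\dd F\le K_0$, which the tail of the limit needs. The paper gives no proof of its own here, only the citation to Durrett's Exercise~3.2.5, and your truncation argument is the standard way to prove it.
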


\begin{fact}[{Central limit theorem, \citealp[Theorem~3.4.1]{Durrett2019}}]
\label{fact:clt}
If $X_1,X_2,\dots$ are independent and identically distributed with $\E X_1=0$ and $\E X_1^2=b\in(0,\infty)$, and $S_n=X_1+\dots+X_n$, then $S_n/\sqrt{nb}\Rightarrow N(0,1)$.
\end{fact}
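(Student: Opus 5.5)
We would prove \cref{fact:clt} with characteristic functions. First we reduce to $b=1$ by replacing $X_i$ with $X_i/\sqrt b$, so the claim becomes $S_n/\sqrt n\Rightarrow N(0,1)$. The target has characteristic function $e^{-u^2/2}$. By independence and \cref{fact:product}, applied to real and imaginary parts (all factors are bounded),
\[
 \varphi_{S_n/\sqrt n}(u)=\bigl(\varphi_{X_1}(u/\sqrt n)\bigr)^n .
\]
The proof then rests on two ingredients: a second-order expansion of $\varphi_{X_1}$ at $0$, and L\'evy's continuity theorem. The latter says that pointwise convergence of characteristic functions to a characteristic function implies convergence in distribution. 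It is not among the facts listed, so we would either quote it from \citet{Durrett2019} or prove it from tightness.

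\textbf{Expansion.} We use the elementary bound $\bigl|e^{\mathrm ix}-1-\mathrm ix+x^2/2\bigr|\le\min\{|x|^3/6,\,x^2\}$, obtained from Taylor's formula with integral remainder. Since $\E X_1=0$ and $\E X_1^2=1$, it gives
\[
 \Bigl|\varphi_{X_1}(v)-1+\tfrac{v^2}{2}\Bigr|\le\E\min\{|vX_1|^3,\,v^2X_1^2\}.
\]
After dividing by $v^2$, the integrand $\min\{|v||X_1|^3,X_1^2\}$ tends to $0$ pointwise as $v\to0$ and is dominated by the integrable $X_1^2$. \Cref{fact:dominated} therefore gives $\varphi_{X_1}(v)=1-v^2/2+o(v^2)$.

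\textbf{Taking the power.} With $v=u/\sqrt n$ this reads $\varphi_{X_1}(u/\sqrt n)=1-u^2/(2n)+o(1/n)$. We compare the $n$-th power with $(1-u^2/2n)^n\to e^{-u^2/2}$ using $|\prod z_i-\prod w_i|\le\sum|z_i-w_i|$ for complex numbers of modulus at most one. For large $n$ both factors have modulus at most one, so the difference is at most $n\cdot o(1/n)\to0$. This step is the main obstacle: a naive logarithm of a complex number is awkward, and the product comparison lemma avoids it.

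\textbf{Conclusion.} Pointwise convergence $\varphi_{S_n/\sqrt n}(u)\to e^{-u^2/2}$ and the continuity theorem then give $S_n/\sqrt n\Rightarrow N(0,1)$.
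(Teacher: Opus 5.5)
Your proof is correct, and it is the standard argument. The paper gives no proof of its own and cites Theorem~3.4.1 of \citet{Durrett2019}, whose proof is exactly your route: the expansion $\varphi_{X_1}(v)=1-v^2/2+o(v^2)$ from $|e^{\mathrm ix}-1-\mathrm ix+x^2/2|\le\min\{|x|^3/6,x^2\}$ and dominated convergence, the comparison $|\prod z_i-\prod w_i|\le\sum|z_i-w_i|$ for the $n$-th power, and L\'evy's continuity theorem (which, as you note, is not among the paper's listed facts).
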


\begin{fact}[{Lindeberg--Feller central limit theorem, \citealp[Proposition~2.27]{VanDerVaart1998}}]
\label{fact:lindeberg}
For each $j$ let $Y_{j,1},\dots,Y_{j,k_j}$ be independent random variables with finite variances.  If $\sum_{i=1}^{k_j}\E\bigl[Y_{j,i}^2\one\{|Y_{j,i}|>\xi\}\bigr]\to0$ for every $\xi>0$ and $\sum_{i=1}^{k_j}\Var(Y_{j,i})\to s$ as $j\to\infty$, then $\sum_{i=1}^{k_j}(Y_{j,i}-\E Y_{j,i})\Rightarrow N(0,s)$.  The numbers $k_j$ may depend on $j$ in any way.
\end{fact}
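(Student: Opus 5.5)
The plan is the classical characteristic-function argument: first reduce to centered variables, then compare each characteristic function with its quadratic approximation, and conclude by L\'evy's continuity theorem. Put $X_{j,i}=Y_{j,i}-\E Y_{j,i}$, $\sigma_{j,i}^2=\Var(Y_{j,i})=\E X_{j,i}^2$, and $L_j(\xi)=\sum_i\E[Y_{j,i}^2\one\{|Y_{j,i}|>\xi\}]$. By hypothesis $\sum_i\sigma_{j,i}^2\to s$ and $L_j(\xi)\to0$ for every $\xi>0$, and we must show that $S_j=\sum_iX_{j,i}$ converges in distribution to $N(0,s)$.

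\emph{Step 1: the Lindeberg condition transfers to the centered variables, and the largest variance vanishes.} For every $\xi>0$ and every $i$,
\[
(\E Y_{j,i})^2\le\E Y_{j,i}^2\le\xi^2+L_j(\xi).
\]
Since $L_j(\xi)\to0$ and $\xi$ is arbitrary, $\max_i|\E Y_{j,i}|\to0$ and, in the same way, $\max_i\sigma_{j,i}^2\to0$. Now fix $\xi>0$ and take $j$ so large that $\max_i|\E Y_{j,i}|\le\xi$. Then $|X_{j,i}|>2\xi$ forces $|Y_{j,i}|>\xi$. Using $(y-a)^2\le2y^2+2a^2$, we get
\[
\E\bigl[X_{j,i}^2\one\{|X_{j,i}|>2\xi\}\bigr]\le2\E\bigl[Y_{j,i}^2\one\{|Y_{j,i}|>\xi\}\bigr]+2(\E Y_{j,i})^2\Pr(|Y_{j,i}|>\xi).
\]
By Markov's inequality (\cref{fact:markov}), $(\E Y_{j,i})^2\Pr(|Y_{j,i}|>\xi)\le(\E Y_{j,i})^2\,\E[Y_{j,i}^2\one\{|Y_{j,i}|>\xi\}]/\xi^2$, and $(\E Y_{j,i})^2\le\xi^2$ here, so the second term is at most $2\E[Y_{j,i}^2\one\{|Y_{j,i}|>\xi\}]$. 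Summing over $i$ shows that the centered Lindeberg sum is at most $4L_j(\xi)\to0$.

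\emph{Step 2: characteristic functions.} We use the standard Taylor bound
\[
\bigl|e^{\mathrm ix}-1-\mathrm ix+x^2/2\bigr|\le\min\{|x|^3/6,\,x^2\}.
\]
Taking expectations and splitting on whether $|X_{j,i}|\le\xi$ gives
\[
\bigl|\varphi_{X_{j,i}}(u)-(1-u^2\sigma_{j,i}^2/2)\bigr|\le|u|^3\xi\,\sigma_{j,i}^2+u^2\,\E\bigl[X_{j,i}^2\one\{|X_{j,i}|>\xi\}\bigr].
\]
By Step 1, for large $j$ every factor $1-u^2\sigma_{j,i}^2/2$ lies in $[0,1]$. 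For complex numbers of modulus at most $1$ we have $|\prod_ia_i-\prod_ib_i|\le\sum_i|a_i-b_i|$. Together with independence (\cref{fact:product}), this gives
\[
\Bigl|\varphi_{S_j}(u)-\prod_i(1-u^2\sigma_{j,i}^2/2)\Bigr|\le|u|^3\xi\sum_i\sigma_{j,i}^2+o(1).
\]
The right side has $\limsup$ at most $|u|^3\xi s$, and $\xi$ is arbitrary, so the difference tends to $0$. Next, $\ln(1-z)=-z+O(z^2)$ for small $z$, and
\[
\sum_i\sigma_{j,i}^4\le\max_i\sigma_{j,i}^2\cdot\sum_i\sigma_{j,i}^2\to0,
\]
so the product tends to $e^{-u^2s/2}$. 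Hence $\varphi_{S_j}(u)\to e^{-u^2s/2}$ for every $u$, and L\'evy's continuity theorem gives $S_j\Rightarrow N(0,s)$. When $s=0$ the limit is the point mass at $0$, which already follows from Chebyshev's inequality (\cref{fact:markov}).

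\emph{The main obstacle} is Step 1. The hypothesis is stated for the uncentered $Y_{j,i}$, so we must check that the means are uniformly negligible before the usual centered argument applies. The bound $(\E Y)^2\le\xi^2+L_j(\xi)$ is what I would use first to get this. After that, the remaining steps are the textbook Lindeberg proof.
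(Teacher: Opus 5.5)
Your proof is correct. It is the standard characteristic-function proof of the Lindeberg--Feller theorem, and your Step~1 properly handles the one nonstandard feature of the statement: the Lindeberg condition is imposed on the uncentered $Y_{j,i}$. In the paper's only use of this fact, \cref{lem:product-gdp}, the summands are already centered, so that step is vacuous there.

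The paper gives no proof of its own and simply cites Proposition~2.27 of van der Vaart. The only ingredient you use beyond the paper's listed facts is L\'evy's continuity theorem, which is standard.
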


\emph{Characteristic functions and Gaussian vectors.}

\begin{fact}[{Characteristic functions, \citealp[Theorem~3.3.1(e) and Examples~3.3.5 and~3.3.16]{Durrett2019}}]
\label{fact:chf}
For real $a$ and $b$, $\varphi_{aX+b}(u)=e^{\mathrm iub}\varphi_X(au)$.  The standard Gaussian distribution has characteristic function $e^{-u^2/2}$, so $N(0,v)$ has $e^{-vu^2/2}$.  The standard Cauchy distribution, with density $1/(\pi(1+x^2))$, has characteristic function $e^{-|u|}$.
\end{fact}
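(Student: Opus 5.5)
The first identity follows directly from the definition. For real $a,b$ we have $e^{\mathrm iu(aX+b)}=e^{\mathrm iub}e^{\mathrm i(au)X}$, and $e^{\mathrm iub}$ is a constant of modulus one. Taking expectations gives $\varphi_{aX+b}(u)=e^{\mathrm iub}\E e^{\mathrm i(au)X}=e^{\mathrm iub}\varphi_X(au)$. Both sides are finite because the integrand has modulus one. Once the standard Gaussian case is settled, the case $N(0,v)$ follows from this identity with $X\sim N(0,1)$, $a=\sqrt v$ and $b=0$.

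For the standard Gaussian I will use a differential-equation argument, which avoids contour shifting. Set $\varphi(u)=\int e^{\mathrm iux}(2\pi)^{-1/2}e^{-x^2/2}\dd x$.

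\begin{itemize}
\item The $u$-derivative of the integrand is bounded by $|x|e^{-x^2/2}$, which is integrable. So \cref{fact:dominated}, applied to difference quotients via the mean value theorem, allows differentiation under the integral: $\varphi'(u)=\int \mathrm ix\,e^{\mathrm iux}(2\pi)^{-1/2}e^{-x^2/2}\dd x$.
\item Write $x e^{-x^2/2}=-\frac{\dd}{\dd x}e^{-x^2/2}$ and integrate by parts. The boundary terms vanish, and this gives $\varphi'(u)=-u\varphi(u)$.
\item Then $\frac{\dd}{\dd u}\bigl(e^{u^2/2}\varphi(u)\bigr)=0$. Since $\varphi(0)=1$, we get $\varphi(u)=e^{-u^2/2}$.
\end{itemize}

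For the Cauchy distribution I will go in the opposite direction and compute a Fourier transform that is elementary.

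\begin{itemize}
\item For $x\in\R$, a direct computation of the two one-sided exponential integrals gives
\[
 \int_{-\infty}^\infty e^{-|u|}e^{-\mathrm iux}\dd u=\frac1{1-\mathrm ix}+\frac1{1+\mathrm ix}=\frac2{1+x^2}.
\]
So $\frac1{2\pi}\int e^{-|u|}e^{-\mathrm iux}\dd u$ is exactly the Cauchy density $1/(\pi(1+x^2))$.
\item To conclude $\int e^{\mathrm iux}\frac{\dd x}{\pi(1+x^2)}=e^{-|u|}$, I need a Fourier inversion step. I would do it concretely rather than cite an inversion theorem. Insert a factor $e^{-\eps x^2/2}$, write the Cauchy density as the $u$-integral above, and swap the order of integration by \cref{fact:fubini}. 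This is legitimate because $e^{-|u|}e^{-\eps x^2/2}$ is integrable on $\R^2$.
\item The inner $x$-integral is then a Gaussian characteristic function, already computed. This turns the expression into a convolution of $e^{-|\cdot|}$ with the $N(0,\eps)$ density, evaluated at $u$.
\item As $\eps\to0$, the left side tends to the Cauchy characteristic function by \cref{fact:dominated}. The convolution tends to $e^{-|u|}$ by continuity and boundedness of $e^{-|\cdot|}$.
\end{itemize}

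The Gaussian part and the scaling identity are routine. The main obstacle is the Cauchy inversion step, specifically justifying the interchange of integrals and the $\eps\to0$ limit. If that becomes messy, the fallback is a residue computation: close the contour of $\int e^{\mathrm iux}/(\pi(1+x^2))\dd x$ in the upper half-plane for $u>0$, pick up the pole at $x=\mathrm i$ with residue value $e^{-u}$, and use symmetry for $u<0$. That route replaces the interchange with a Jordan-lemma estimate on the arc.
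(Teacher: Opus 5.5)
Your proof is correct. For the Cauchy part it is essentially the paper's own derivation in the proof of \cref{lem:min-int}: both start from the elementary transform of $e^{-|u|}$ (the Laplace characteristic function $1/(1+s^2)$, obtained there from \eqref{eq:laplace-cos}) and then invert it. The only difference is that the paper cites the inversion formula (\cref{fact:inversion}) and compares two continuous densities, whereas you carry out the inversion by hand with a Gaussian regularization. The paper does not prove the scaling identity or the Gaussian case but only cites them from \citet{Durrett2019}, and your definition-based and differential-equation arguments for them are sound.
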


\begin{fact}[{Inversion formula, \citealp[Theorem~3.3.14]{Durrett2019}}]
\label{fact:inversion}
If $\int|\varphi_X(u)|\dd u<\infty$, then $X$ has the bounded continuous density $f(y)=\frac1{2\pi}\int e^{-\mathrm iuy}\varphi_X(u)\dd u$.
\end{fact}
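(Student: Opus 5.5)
We would prove the inversion formula by Gaussian smoothing. Let $N\sim N(0,1)$ be independent of $X$, and for $\eps>0$ put $X_\eps=X+\eps N$. The plan has three steps. First, show that $X_\eps$ has an explicit density $f_\eps$, given by the inversion integral with a Gaussian damping factor. Second, let $\eps\to0$ inside that integral, using $\int|\varphi_X|<\infty$. Third, transfer the limit from $X_\eps$ to $X$ itself.

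\emph{Step 1: density of the smoothed variable.} The density of $\eps N$ at $z$ is $\frac{1}{\sqrt{2\pi}\,\eps}e^{-z^2/(2\eps^2)}$. By \cref{fact:chf}, $\varphi_{\eps N}(u)=e^{-\eps^2u^2/2}$, and completing the square gives the identity
\[
 \frac{1}{\sqrt{2\pi}\,\eps}e^{-z^2/(2\eps^2)}=\frac1{2\pi}\int e^{-\mathrm iuz}e^{-\eps^2u^2/2}\dd u .
\]
Because $X$ and $N$ are independent, $X_\eps$ has density $f_\eps(y)=\E\bigl[\frac{1}{\sqrt{2\pi}\,\eps}e^{-(y-X)^2/(2\eps^2)}\bigr]$. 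We substitute the identity with $z=y-X$. The integrand $e^{-\mathrm iu(y-X)}e^{-\eps^2u^2/2}$ has modulus $e^{-\eps^2u^2/2}$, which is integrable for the product of the law of $X$ and Lebesgue measure. So Fubini (\cref{fact:fubini}) lets us exchange expectation and integral, and we get
\[
 f_\eps(y)=\frac1{2\pi}\int e^{-\mathrm iuy}\varphi_X(u)\,e^{-\eps^2u^2/2}\dd u .
\]

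\emph{Step 2: the limit of the densities.} Define $f(y)=\frac1{2\pi}\int e^{-\mathrm iuy}\varphi_X(u)\dd u$. Then $|f_\eps(y)-f(y)|\le\frac1{2\pi}\int|\varphi_X(u)|\bigl(1-e^{-\eps^2u^2/2}\bigr)\dd u$. This bound does not depend on $y$, and it tends to $0$ by dominated convergence (\cref{fact:dominated}) with dominating function $|\varphi_X|$. Hence $f_\eps\to f$ uniformly. Both $f_\eps$ and $f$ are bounded by $K=\frac1{2\pi}\int|\varphi_X|$. The function $f$ is continuous in $y$, again by dominated convergence with the same dominating function. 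It is real and nonnegative, because it is a uniform limit of the densities $f_\eps$.

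\emph{Step 3: from $X_\eps$ to $X$.} This is the step we expect to be the main obstacle. The limit $\eps\to0$ has to be passed through probabilities of intervals, and atoms of $X$ at the endpoints could block it. For every $a<b$ we have $\Pr(a<X_\eps<b)=\int_a^bf_\eps\to\int_a^bf$, by the uniform convergence on a bounded interval. Also $X_\eps\to X$ almost surely, so $\one\{a<X_\eps<b\}\to\one\{a<X<b\}$ on the event $\{X\ne a,\,X\ne b\}$.

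We first rule out atoms. Take any closed interval $[a,b]$ and any $\eta>0$. Eventually $X_\eps$ lies in $(a-\eta,b+\eta)$ whenever $X\in[a,b]$, so Fatou's lemma gives $\Pr(X\in[a,b])\le\liminf_\eps\Pr(a-\eta<X_\eps<b+\eta)\le K(b-a+2\eta)$. Letting $b\downarrow a$ shows $\Pr(X=a)=0$ for every $a$. Dominated convergence (\cref{fact:dominated}) then gives $\Pr(a<X<b)=\int_a^bf$ for all $a<b$.

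Intervals form a $\pi$-system that generates the Borel sets. So by \cref{fact:uniqueness} the law of $X$ is the measure with density $f$ (letting $a\to-\infty$, $b\to\infty$ also shows $\int f=1$). This $f$ is bounded by $K$ and continuous, as required.
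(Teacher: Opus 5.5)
Your proof is correct, but it is not the argument behind the statement. The paper gives no proof of this fact and only cites Durrett. Durrett derives it from the general inversion theorem $\lim_{T\to\infty}\frac1{2\pi}\int_{-T}^{T}\frac{e^{-\mathrm iua}-e^{-\mathrm iub}}{\mathrm iu}\varphi_X(u)\dd u=\Pr(a<X<b)+\frac12\Pr(X\in\{a,b\})$. That theorem holds for every distribution and is proved with the Dirichlet integral $\int_0^T\frac{\sin x}{x}\dd x\to\frac\pi2$. When $\varphi_X$ is integrable, the bound $\bigl|\frac{e^{-\mathrm iua}-e^{-\mathrm iub}}{\mathrm iu}\bigr|=\bigl|\int_a^be^{-\mathrm iuy}\dd y\bigr|\le b-a$ does three things. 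It turns the limit into an absolutely convergent integral. It gives $\Pr(a<X<b)+\frac12\Pr(X\in\{a,b\})\le\frac{b-a}{2\pi}\int|\varphi_X|$, hence no atoms. And one application of Fubini rewrites the integral as $\int_a^bf$; continuity of $f$ is then dominated convergence.

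You instead smooth with an independent Gaussian. The density of $X+\eps N$ comes out exactly by Fubini and converges uniformly, and you then pull the result back to $X$. That last step costs you the separate Fatou argument against atoms and the $\pi$-system step.

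Durrett's route is shorter once the general inversion theorem is available, and he needs that theorem anyway to show that a characteristic function determines its distribution. In his route the absence of atoms falls out of the same inequality that yields the density. Your route is self-contained. It avoids conditionally convergent integrals and the half-mass at the endpoints, and it uses only the Gaussian characteristic function, Fubini, dominated convergence and uniqueness of measures from Appendix~\ref{app:background}, plus Fatou's lemma.

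One small point of presentation: the identity in your Step 1 is \cref{fact:chf} applied to $N(0,\eps^{-2})$, the self-duality of the Gaussian. It is not really a completion of the square, which, with the imaginary linear term $-\mathrm iuz$, would need a contour shift or a similar argument to justify.
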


\begin{fact}[{Gaussian vectors, \citealp[Section~3.10]{Durrett2019}}]
\label{fact:gaussian-vector}
If $X$ has independent standard Gaussian coordinates and $H$ is a real $d\times d$ matrix, then $HX$ has the multivariate normal distribution with mean $0$, covariance matrix $\Sigma=HH^\top$ and characteristic function $\E e^{\mathrm i\theta^\top HX}=e^{-\theta^\top\Sigma\theta/2}$.  If $\Sigma$ is nonsingular, $HX$ has the density $(2\pi)^{-d/2}(\det\Sigma)^{-1/2}e^{-w^\top\Sigma^{-1}w/2}$.
\end{fact}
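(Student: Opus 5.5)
The plan is to compute the characteristic function directly from independence, and then obtain the density by a linear change of variables. Everything else follows from the standard Gaussian facts already listed.

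\textbf{Characteristic function.} Fix $\theta\in\R^d$ and put $a=H^\top\theta$. Then
\[
 \theta^\top HX=a^\top X=\sum_{k=1}^d a_kX_k .
\]
The variables $e^{\mathrm i a_kX_k}$ are independent by \cref{fact:indep-functions} and bounded. So \cref{fact:product}, applied to real and imaginary parts (or equivalently by induction on $d$ using Fubini, \cref{fact:fubini}), gives
\[
 \E e^{\mathrm i\theta^\top HX}=\prod_{k=1}^d\E e^{\mathrm ia_kX_k}=\prod_{k=1}^d e^{-a_k^2/2},
\]
where the last step is \cref{fact:chf}. The exponent is
\[
 -\tfrac12\|a\|^2=-\tfrac12\theta^\top HH^\top\theta=-\tfrac12\theta^\top\Sigma\theta .
\]

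\textbf{Mean and covariance.} The mean and covariance follow from linearity:
\[
 \E HX=H\,\E X=0,\qquad \E[HX(HX)^\top]=H\,\E[XX^\top]\,H^\top=HIH^\top=\Sigma .
\]
Both use $\E X_kX_l=\one\{k=l\}$, which holds because the coordinates are independent with mean $0$ and variance $1$. The name ``multivariate normal with mean $0$ and covariance $\Sigma$'' is justified because this characteristic function determines the law. To see this, apply the one-dimensional uniqueness of characteristic functions to every projection $\theta^\top HX$ (Cramér--Wold), or simply take it as the definition of that class.

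\textbf{Density.} Since $H$ is square, $\det\Sigma=(\det H)^2$, so $\Sigma$ is nonsingular exactly when $H$ is invertible. The vector $X$ has density $(2\pi)^{-d/2}e^{-\|x\|^2/2}$ by independence, because the joint density is the product of the coordinate densities (use \cref{fact:uniqueness} on rectangles). The linear change of variables $w=Hx$ has Jacobian $|\det H|^{-1}=(\det\Sigma)^{-1/2}$, and it gives
\[
 \|x\|^2=\|H^{-1}w\|^2=w^\top(HH^\top)^{-1}w=w^\top\Sigma^{-1}w .
\]
This yields the stated density.

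The only step that is not pure algebra is the change-of-variables formula for Lebesgue measure under an invertible linear map. I would take it as standard, or verify it for elementary matrices and compose. I expect no real obstacle here, since this is a textbook fact.
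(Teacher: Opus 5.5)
Your proof is correct. The paper gives no proof of its own and simply cites Durrett, Section~3.10, and your argument is essentially the standard derivation behind that citation: write $\theta^\top HX=(H^\top\theta)^\top X$, factor the characteristic function over the independent coordinates, and use a linear change of variables for the density. Two small remarks: the step ``product rule applied to real and imaginary parts'' for the complex factors $e^{\mathrm ia_kX_k}$ should be read as the subset expansion used in Appendix~\ref{app:exact-tv}, since the real part of a product is not the product of the real parts (your Fubini induction also works); and the paper only ever uses the fact for orthogonal $H$, where $\Sigma=I$ and your change of variables is immediate.
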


\emph{Complex analysis and the Gamma function.}

\begin{fact}[{Principal logarithm and powers, \citealp[Eqs.~4.2.3 and~4.2.28 and Section~4.2(i)]{DLMF}}]
\label{fact:log}
The principal logarithm $\ln w=\ln|w|+\mathrm i\arg w$, with $-\pi<\arg w<\pi$, is a single-valued analytic function of $w$ off the half-line $(-\infty,0]$.  Powers are principal values, $w^p=e^{p\ln w}$.
\end{fact}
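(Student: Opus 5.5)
The plan is to realize the principal logarithm as the inverse of the exponential on a horizontal strip and to get analyticity from the holomorphic inverse function theorem. The powers then follow by composition. Let $S=\{z:-\pi<\Im z<\pi\}$ and $\Omega=\mathbb C\setminus(-\infty,0]$.

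\textbf{Step 1: $\exp$ is a bijection from $S$ onto $\Omega$.} For $z=x+\mathrm iy\in S$ we have $e^z=e^x(\cos y+\mathrm i\sin y)$. This is never zero, and it is a negative real number only when $\sin y=0$ and $\cos y<0$, which forces $y=\pm\pi\notin S$. So $\exp(S)\subseteq\Omega$. For surjectivity, take $w\in\Omega$ and write it in polar form $w=|w|e^{\mathrm i\theta}$ with $\theta\in(-\pi,\pi)$. Such a $\theta$ exists and is unique, because the only angle excluded from $(-\pi,\pi]$ is $\pi$, and it corresponds to the removed half-line. Then $z=\ln|w|+\mathrm i\theta\in S$ satisfies $e^z=w$. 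For injectivity, $e^{z_1}=e^{z_2}$ gives $z_1-z_2\in2\pi\mathrm i\mathbb Z$, and two points of $S$ differ in imaginary part by less than $2\pi$, so $z_1=z_2$. Hence the inverse is exactly $\ln w=\ln|w|+\mathrm i\arg w$ with $\arg w\in(-\pi,\pi)$, which is single valued.

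\textbf{Step 2: continuity of $\arg$ on $\Omega$.} I expect this to be the main point needing care. I will use the explicit formula $\arg w=2\arctan\bigl(\Im w/(|w|+\Re w)\bigr)$, valid on $\Omega$. The denominator $|w|+\Re w$ vanishes only on $(-\infty,0]$, so it is positive on $\Omega$. The formula gives a value in $(-\pi,\pi)$. It is the correct angle by the half-angle identity $\tan(\theta/2)=\sin\theta/(1+\cos\theta)$, after dividing numerator and denominator by $|w|$. Being a composition of continuous functions, $\arg$ is continuous on $\Omega$.

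\textbf{Step 3: analyticity.} The function $\exp$ is analytic with nonvanishing derivative $e^z$, and by Steps 1 and 2 its inverse $\ln$ is continuous on the open set $\Omega$. A continuous inverse of an analytic function with nonzero derivative is analytic. Concretely, the difference quotient $(\ln w'-\ln w)/(w'-w)$ equals $(z'-z)/(e^{z'}-e^z)$ with $z'=\ln w'\to z=\ln w$, and this tends to $1/e^z=1/w$. Therefore $\ln$ is analytic on $\Omega$ with derivative $1/w$.

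Finally, for any complex $p$, the power $w^p=e^{p\ln w}$ is the composition of the analytic maps $w\mapsto p\ln w$ and $\exp$. So it is analytic on $\Omega$, and it is single valued because $\ln$ is.
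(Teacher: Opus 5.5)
Your proof is correct. The only step left implicit is that $w'\ne w$ forces $\ln w'\ne\ln w$ in the difference quotient, and this follows from $e^{\ln w}=w$ in your Step~1.

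The paper gives no argument of its own and cites DLMF. There the principal branch is introduced as $\int_1^w\dd t/t$ along paths in $\mathbb C\setminus(-\infty,0]$. Analyticity with derivative $1/w$ is then immediate from path independence on this simply connected (indeed star-shaped) domain, which is Cauchy's theorem. The polar form $\ln|w|+\mathrm i\arg w$ is derived afterwards by integrating along $[1,|w|]$ and then a circular arc.

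You run the logic in the opposite direction. The polar form is your definition, Step~1 identifies it as the inverse of $\exp$ on the strip $|\Im z|<\pi$, the half-angle formula gives continuity of $\arg$, and analyticity comes from the inverse-function difference quotient.

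Your route needs no contour integration or simple connectivity, only the polar form, the real $\arctan$ and the definition of the complex derivative. It also exhibits directly the two properties the paper actually uses later:
\begin{itemize}
\item $e^{\ln w}=w$, which is behind $w\,w^{q-1}=w^q$;
\item continuity of $\ln$ off the cut, which is behind the limit $(\epsilon-\mathrm ix)^q\to x^qe^{-\mathrm i\pi q/2}$ in the proof of \cref{lem:pow-int}.
\end{itemize}
The integral route gets analyticity for free but has to do extra work to recover the explicit formula. The second sentence of the fact is a convention, so your remark that $w^p$ is analytic on the cut plane is a correct extra.
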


\begin{fact}[{Gamma function, \citealp[Eqs.~5.2.1 and~5.9.1 and Section~5.2(i)]{DLMF}}]
\label{fact:gamma}
For $\Re z>0$ the Gamma function is the convergent integral $\Gamma(z)=\int_0^\infty e^{-t}t^{z-1}\dd t$, and the Gamma function has no zeros.  For $\Re\nu>0$ and $\Re w>0$, $\int_0^\infty e^{-wt}t^{\nu-1}\dd t=\Gamma(\nu)\,w^{-\nu}$ with the principal power.
\end{fact}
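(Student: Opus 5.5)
The statement is the Gamma-function fact: for $\Re\nu>0$ and $\Re w>0$, $\int_0^\infty e^{-wt}t^{\nu-1}\dd t=\Gamma(\nu)\,w^{-\nu}$ with the principal power, and $\Gamma$ has no zeros. I will prove the integral identity by analytic continuation in $w$. The zero-free property I will take as the standard consequence of the reflection formula $\Gamma(z)\Gamma(1-z)=\pi/\sin(\pi z)$, and I do not reprove it here.

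\textbf{Step 1: the integral is analytic in $w$.} Fix $\nu$ with $\Re\nu>0$ and set $I(w)=\int_0^\infty e^{-wt}t^{\nu-1}\dd t$ for $\Re w>0$. The integral converges absolutely, since $|e^{-wt}t^{\nu-1}|=e^{-t\Re w}t^{\Re\nu-1}$. On every compact set $\{\Re w\ge a>0\}\cap\{|w|\le R\}$ the integrand is dominated by $e^{-at}t^{\Re\nu-1}$, which is integrable. So differentiation under the integral, justified by \cref{fact:dominated}, together with Morera's theorem, shows that $I$ is analytic on the right half-plane.

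\textbf{Step 2: the identity on the positive real axis.} For real $w=x>0$, substitute $s=xt$. This gives $I(x)=x^{-\nu}\int_0^\infty e^{-s}s^{\nu-1}\dd s=\Gamma(\nu)x^{-\nu}$. Here $x^{-\nu}=e^{-\nu\ln x}$ is the principal power, and it is real-analytic in $x$.

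\textbf{Step 3: extend to the half-plane.} The function $w\mapsto\Gamma(\nu)w^{-\nu}=\Gamma(\nu)e^{-\nu\ln w}$ is analytic on $\{\Re w>0\}$, because by \cref{fact:log} this half-plane avoids the cut $(-\infty,0]$. Both $I$ and $\Gamma(\nu)w^{-\nu}$ are analytic on the connected right half-plane and agree on $(0,\infty)$, which has accumulation points there. By the identity theorem they agree everywhere on $\{\Re w>0\}$.

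\textbf{Main obstacle.} The delicate point is the branch. The identity must hold with the \emph{principal} power, and this is secured because the principal logarithm is analytic on the whole right half-plane and real on the positive axis.

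An alternative would be to rotate the contour $t\mapsto s/w$ from $[0,\infty)$ to the ray $\arg s=\arg w$. That route needs arc estimates near $0$ and near $\infty$, and the identity-theorem argument avoids them.
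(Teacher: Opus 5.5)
Your proof of the integral identity is correct. The bound $e^{-t\Re w}t^{\Re\nu-1}\le e^{-at}t^{\Re\nu-1}$ on $\{\Re w\ge a\}$ gives absolute convergence; at $w=1$ this is also the convergence claim for Euler's integral. The same bound gives analyticity of $I$, and either differentiation under the integral or Morera's theorem suffices on its own. The substitution $s=xt$ is valid because $(s/x)^{\nu-1}=s^{\nu-1}x^{1-\nu}$ for positive reals. The identity theorem then carries the equality to the whole right half-plane, where the principal power is analytic.

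The paper gives no proof of this statement. It is one of the standard facts of Appendix~\ref{app:background}, quoted from DLMF (Eq.~5.9.1 for the integral and Section~5.2(i) for the absence of zeros). So you are supplying a derivation of what the paper only cites. Your route has the advantage that the principal branch is checked rather than taken on trust. Its cost is two tools, Morera's theorem and the identity theorem, that are not among the paper's listed facts. The paper's own complex-analytic steps use only Cauchy's theorem, dominated convergence and continuity of the principal logarithm: see the limit $w=\epsilon-\mathrm ix$, $\epsilon\downarrow0$, in the proof of \cref{lem:pow-int}, and the rectangle-contour shift in Appendix~\ref{app:exact-numerics}. The contour-rotation route you set aside would fit that toolkit better.

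The zero-free property is the part of the statement the paper actually relies on, to divide by $\Gamma(1-q)$ in \cref{lem:pow-int}. As you acknowledge, it stays cited in your proposal just as in the paper. Your argument therefore proves the integral identity, not the full statement, which is acceptable for a fact of this kind.
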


\begin{fact}[{Cauchy's theorem, \citealp[Eq.~1.9.29]{DLMF}}]
\label{fact:cauchy}
If $f$ is continuous within and on a simple closed contour $\mathcal C$ and analytic within $\mathcal C$, then $\int_{\mathcal C}f(\zeta)\dd\zeta=0$.
\end{fact}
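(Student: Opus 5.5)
This fact is cited from the literature rather than proved in the paper, but if I had to prove it I would take the classical route in two stages. First I would prove Cauchy's theorem for contours lying strictly inside a region where $f$ is analytic. Then I would use the continuity of $f$ up to and on $\mathcal C$ to pass to the boundary.

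\emph{Stage one (Cauchy--Goursat for triangles).} Let $\Delta$ be a closed triangle on which $f$ is analytic. I would subdivide $\Delta$ into four congruent triangles by joining the midpoints of its sides, so that the integral over $\partial\Delta$ is the sum of the four boundary integrals. One of the four, $\Delta_1$, has $\lvert\int_{\partial\Delta_1}f\rvert\ge\frac14\lvert\int_{\partial\Delta}f\rvert$. Iterating gives nested triangles $\Delta_n$ with perimeters $2^{-n}\ell$, shrinking to a point $z_0$, and with
\[
 \Bigl\lvert\int_{\partial\Delta}f\Bigr\rvert\le4^n\Bigl\lvert\int_{\partial\Delta_n}f\Bigr\rvert .
\]
Near $z_0$ I would write $f(z)=f(z_0)+f'(z_0)(z-z_0)+o(\lvert z-z_0\rvert)$. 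The linear part has the primitive $f(z_0)z+\frac12f'(z_0)(z-z_0)^2$, so its integral over any closed path vanishes. The remainder contributes $o(4^{-n}\ell^2)$, which forces $\int_{\partial\Delta}f=0$. From this I would build a primitive $F(z)=\int_{[z_*,z]}f$ on every convex subregion, which gives $\int f=0$ over closed paths there. Covering a compact polygonal path by finitely many such regions then gives the theorem for every polygonal, and by approximation every piecewise smooth, closed contour in the open interior.

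\emph{Stage two (passing to the boundary).} This is the step I expect to be the main obstacle, because for a general simple closed contour it is not obvious how to produce interior contours that converge to $\mathcal C$ in a way that controls both the integrand and the length. I would first try to use the geometry the paper actually needs: the contours arising later are built from segments and arcs, for example rays and quarter-circles in a sector. For these I would construct $\mathcal C_\eps$ explicitly by shrinking $\mathcal C$ slightly toward an interior point, using a homothety $z\mapsto z_0+(1-\eps)(z-z_0)$ when the region is star-shaped about $z_0$. Then I would compare the two integrals:
\[
 \int_{\mathcal C_\eps}f=\int_{\mathcal C}(1-\eps)\,f\bigl(z_0+(1-\eps)(z-z_0)\bigr)\dd z .
\]
The right side tends to $\int_{\mathcal C}f$ as $\eps\to0$, by uniform continuity of $f$ on the compact closed region and the finite length of $\mathcal C$. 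Since each $\int_{\mathcal C_\eps}f$ is zero by stage one, $\int_{\mathcal C}f=0$.

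For a general simple closed contour, I would instead decompose the interior into finitely many star-shaped pieces by a fine grid. The integrals over the interior grid edges cancel in pairs, since each is traversed once in each direction, and the shrinking argument is applied to each piece. This piece-by-piece argument is where the Jordan-curve technicalities, namely that the interior of $\mathcal C$ is well defined and that the grid pieces are star-shaped, would have to be handled.
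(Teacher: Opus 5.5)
\textbf{There are genuine gaps.} As you note, the paper does not prove this fact. It cites DLMF and uses it once, for the rectangle with corners $\pm R$ and $\pm R+\mathrm ic$, with an entire integrand. For that use your Stage one already suffices and no passage to the boundary is needed. Goursat's lemma makes $F(z)=\int_{[0,z]}f$ a primitive of $f$ on the convex set $\mathbb C$, so the integral over every closed contour vanishes. Your homothety argument is also correct whenever the interior of $\mathcal C$ is convex.

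For the fact as stated, two of your steps do not hold as written. First, the end of Stage one fails. Local primitives on finitely many convex sets covering a closed polygon only make the integral invariant under homotopy; they do not make it vanish. The function $1/z$ on an annulus has a primitive on every convex subregion, yet $\int_{|z|=1}\dd z/z=2\pi\mathrm i$. You need either a global primitive or the fact that the interior $\Omega$ of $\mathcal C$ is simply connected, which is a Jordan--Schoenflies-type input. A global primitive does exist when $\Omega$ is star-shaped about $z_0$, namely $\int_{[z_0,z]}f$.

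Second, star-shapedness about the single point $z_0$ does not put $\mathcal C_\eps$ inside $\Omega$. Take the square $(-3,3)^2$ with the closed triangle with vertices $1$, $3$, $3-3\mathrm i$ removed. It is star-shaped about $0$, but the homothety sends the boundary point $3$ to $3(1-\eps)\in(1,3)$, which is again a point of $\mathcal C$, where $f$ need not be analytic. What makes $z\mapsto z_0+(1-\eps)(z-z_0)$ map $\overline\Omega$ into $\Omega$ is convexity, or star-shapedness with respect to a disc about $z_0$.

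The general case, which is the real content of the fact, is where your plan would fail. For a fixed grid the pieces $Q\cap\Omega$ need be neither star-shaped nor finite in number. If the crossings of $\mathcal C$ with a grid line accumulate, as for the $C^1$ graph of $x^3\sin(1/x)$ against the axis, a cell meets $\Omega$ in infinitely many components. Even for a generic grid you would still have to show that every piece can be shrunk into its interior.

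The classical argument avoids this by never applying Cauchy's theorem to cells that meet $\mathcal C$. Goursat's lemma gives zero for every closed cell $Q\subset\Omega$. For a cell of side $\delta$ that meets $\mathcal C$ one only estimates $\bigl|\int_{\partial(Q\cap\Omega)}f\bigr|=\bigl|\int_{\partial(Q\cap\Omega)}(f-f(z_Q))\bigr|\le\omega_f(\sqrt2\,\delta)\,\bigl(\ell(\mathcal C\cap Q)+4\delta\bigr)$. Here $z_Q\in Q\cap\overline\Omega$, $\omega_f$ is the modulus of continuity of $f$ on $\overline\Omega$, and $\ell$ denotes length. The rectifiable curve $\mathcal C$ meets only $O(\ell(\mathcal C)/\delta+1)$ cells, and each of its points lies in at most four of them. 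So these terms sum to $O\bigl(\omega_f(\sqrt2\,\delta)(\ell(\mathcal C)+\delta)\bigr)$, which tends to $0$ with $\delta$.

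What remains is the identity $\int_{\mathcal C}f=\sum_Q\int_{\partial(Q\cap\Omega)}f$, with each $\partial(Q\cap\Omega)$ a cycle so that constants integrate to zero. This is where the Jordan curve theorem is genuinely used, and your sketch leaves it open.
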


\section{Proofs}
\label{app:proofs}

\subsection{One epoch and composition}
\label{app:one-epoch-proof}

Let $\varphi_M$ be the density of $N(0,I_M)$ and $\varphi=\varphi_1$.  For the $j$th component of \eqref{eq:one-record-pair} every factor of $\varphi_M$ except coordinate $j$ cancels, so
\[
 \frac{\varphi_M(x-te_j)}{\varphi_M(x)}=\frac{\varphi(x_j-t)}{\varphi(x_j)}=\exp(tx_j-t^2/2).
\]
Averaging over $j$ gives \eqref{eq:one-epoch-lr}.

\begin{proof}[Proof of \cref{prop:one-epoch}]
For $X\sim N(0,1)$ and real $s$, completing the square $sx-x^2/2=s^2/2-(x-s)^2/2$ gives
\[
 \E e^{sX}=\int e^{sx}\varphi(x)\dd x=e^{s^2/2}\int\varphi(x-s)\dd x=e^{s^2/2}.
\]
With $s=t$ and $s=2t$,
\[
 \E Y_j=e^{-t^2/2}\,\E e^{tX_j}=1,\qquad \E Y_j^2=e^{-t^2}\,\E e^{2tX_j}=e^{t^2}.
\]
So each $Y_j-1$ has mean $0$ and second moment $\E Y_j^2-1=e^{t^2}-1$.  By \eqref{eq:one-epoch-lr}, $L_M-1=M^{-1}\sum_j(Y_j-1)$ is an average of independent terms, and by the product rule (\cref{fact:product}) the cross terms $\E[(Y_j-1)(Y_k-1)]$ with $j\ne k$ vanish. Since $t^2=1/\sigma^2$,
\[
 \chi^2(Q_M\|P_M)=\E(L_M-1)^2=\frac1{M^2}\sum_{j=1}^M\E(Y_j-1)^2=\frac{e^{1/\sigma^2}-1}{M} .
\]
\end{proof}

For product distributions the likelihood ratio of the product is the product of the likelihood ratios.  Let $Q_i\ll P_i$ with $L_i=\dd Q_i/\dd P_i$ for $i=1,2$.  For events $A_1,A_2$, Tonelli's theorem (\cref{fact:fubini}) gives
\[
 \int_{A_1\times A_2}L_1(x_1)L_2(x_2)\,\dd(P_1\otimes P_2)(x_1,x_2)
 =\int_{A_1}L_1\dd P_1\int_{A_2}L_2\dd P_2=Q_1(A_1)\,Q_2(A_2).
\]
So the measure with density $L_1(x_1)L_2(x_2)$ with respect to $P_1\otimes P_2$ agrees with $Q_1\otimes Q_2$ on all rectangles $A_1\times A_2$.  With $A_1$ and $A_2$ the whole spaces, both measures give the whole product space measure $1$, so the two measures are equal (\cref{fact:uniqueness}).  Hence the likelihood ratio of the product is $L_1(x_1)L_2(x_2)$.  The two coordinates are independent under $P_1\otimes P_2$, so by the product rule (\cref{fact:product})
\[
 1+\chi^2(Q_1\otimes Q_2\|P_1\otimes P_2)=\E\bigl[L_1^2L_2^2\bigr]=\E L_1^2\,\E L_2^2 ,
\]
and induction over the factors gives \eqref{eq:tensorize}.

\subsection{Chi-square to total variation, and the closed-form bound}
\label{app:sharp-envelope-proof}

We prove \eqref{eq:sharp-envelope}.  \Citet[Eq.~(7)]{SasonVerdu2016} state it in the inverse form, as a lower bound on the chi-square divergence in terms of $|P-Q|=2d_{\TV}(P,Q)$.  As in the definition of chi-square in Appendix~\ref{app:background}, let $Q\ll P$ with likelihood ratio $L$, and let $c=\chi^2(Q\|P)$ and $\tau=d_{\TV}(P,Q)$.  By Appendix~\ref{app:background} and the Cauchy--Schwarz inequality (\cref{fact:holder}),
\[
 \tau=\tfrac12\E_P|L-1|\le\tfrac12\bigl(\E_P(L-1)^2\bigr)^{1/2}=\tfrac12\sqrt c
\]
for every $c$.  This proves the case $c\le1$.  For $c\ge1$, let $A=\{L>1\}$ and $x=Q(A)=\E_P[L\one_A]$, so that $0\le x\le1$.  Then $\tau=\E_P[(L-1)\one_A]=x-P(A)$, and the Cauchy--Schwarz inequality (\cref{fact:holder}) gives $x^2=(\E_P[L\one_A])^2\le\E_PL^2\,P(A)=(1+c)P(A)$.  Hence
\[
 \tau\le x-\frac{x^2}{1+c}\le1-\frac1{1+c}=\frac c{1+c} .
\]
The second inequality holds because $x\mapsto x-x^2/(1+c)$ has derivative $1-2x/(1+c)\ge0$ for $0\le x\le1\le(1+c)/2$, so on $[0,1]$ it is largest at $x=1$.  This proves \eqref{eq:sharp-envelope}.

The constants are attained by pairs on two points.  When $c\le1$, take $P=(\frac12,\frac12)$ and $Q=(\frac12+u,\frac12-u)$ with $u=\sqrt c/2\le\frac12$.  Both entries of $P$ are positive, so $Q\ll P$, and
\[
 L=(1+\sqrt c,\,1-\sqrt c),\qquad
 \chi^2(Q\|P)=\tfrac12(\sqrt c)^2+\tfrac12(-\sqrt c)^2=c,\qquad
 d_{\TV}(P,Q)=\E_P(1-L)_+=\tfrac12\cdot0+\tfrac12\sqrt c=U(c).
\]
When $c\ge1$, take $P=(\frac{c}{1+c},\frac1{1+c})$ and $Q=(0,1)$.  Again $Q\ll P$, and
\[
 \begin{gathered}
 L=(0,\,1+c),\qquad
 \chi^2(Q\|P)=\E_PL^2-1=\frac{(1+c)^2}{1+c}-1=c,\\
 d_{\TV}(P,Q)=\E_P(1-L)_+=\frac c{1+c}\cdot1+\frac1{1+c}\cdot0=U(c).
 \end{gathered}
\]
Finally, $U(c)\le\sqrt c/2$ for every $c\ge0$.  It is an equality for $c\le1$, and for $c\ge1$ the inequality $1+c\ge2\sqrt c$ gives $c/(1+c)\le\sqrt c/2$.

\citet[Section~3.2]{MahloujifarEtAl2022} bound the total variation by Pinsker's inequality and the R\'enyi divergence $D_\alpha$ of order $\alpha>1$, $d_{\TV}\le\sqrt{D_\alpha/2}$.  With the order-two R\'enyi divergence $E\ln(1+(e^{1/\sigma^2}-1)/M)$ of the Gaussian pair this bound is also a closed form, but at $\lambda=0.10$ it needs $25.1\%$ to $38.5\%$ more noise than \eqref{eq:inverse-sigma} in the nine training cells of \cref{sec:calib-trial}.  With the Kullback--Leibler divergence, which has no closed form for the pair, Pinsker's inequality needs $0.67\%$ to $0.94\%$ less noise than \eqref{eq:inverse-sigma} at $\lambda=0.10$ and $5.7\%$ to $8.7\%$ less at $\lambda=0.40$ in the same nine cells, with the divergence taken in either order.

\begin{proof}[Proof of \cref{thm:finite-rule}]
By \eqref{eq:tensorize} the composed divergence is $c_E$, and \eqref{eq:sharp-envelope} and \eqref{eq:sep-tv-acc} give \eqref{eq:finite-rule}.  For the bounds on $c_E$ stated after the theorem, put $y=(e^{1/\sigma^2}-1)/M\ge0$ and $x=Ey$.  All terms of the binomial expansion of $(1+y)^E$ are nonnegative, and its first two terms give $(1+y)^E\ge1+x$, which is Bernoulli's inequality.  The power series of $e^y$ gives $1+y\le e^y$, so $(1+y)^E\le e^x$.  Therefore
\[
 x\le\Bigl(1+\frac{e^{1/\sigma^2}-1}{M}\Bigr)^E-1=c_E\le e^{x}-1 .
\]
\end{proof}

\begin{proof}[Proof of \cref{cor:inverse}]
The two definitions of $c_\lambda$ solve $U(c_\lambda)=\lambda$ on the two branches of $U$. Setting $c_E=c_\lambda$ and solving $e^{1/\sigma^2}-1=M\{(1+c_\lambda)^{1/E}-1\}$ for $\sigma$ gives \eqref{eq:inverse-sigma}.  \Cref{thm:finite-rule} then gives $d_{\TV}(P_M^{\otimes E},Q_M^{\otimes E})\le\lambda$.  By \eqref{eq:transfer} the released model has $d_{\TV}(P_{\rm out},Q_{\rm out})\le\lambda$, and \eqref{eq:sep-tv-acc} gives $\operatorname{Acc}^*\le\frac12+\frac\lambda2$.
\end{proof}

\subsection{The Gaussian differential privacy limit}
\label{app:gdp-proof}

The proof expands the log-likelihood ratio of each epoch to second order.  Across $E_M$ epochs these terms form a triangular array whose sum is Gaussian in the limit, with variance $r(e^{1/\sigma^2}-1)$.  A fourth-moment bound, $\E(L_M-1)^4=O_\sigma(M^{-2})$, makes the higher-order terms vanish.  The trade-off curve is an integral of the quantile function of the likelihood ratio, and convergence of these quantile functions gives uniform convergence of the curve.  We say that a sequence of pairs converges to random guessing when its total variation tends to $0$.  By \eqref{eq:tv-sup} this means that its trade-off function converges to $1-\alpha$ uniformly in $\alpha$.

\begin{lemma}[Central limit theorem for the log-likelihood ratio]
\label{lem:product-gdp}
For each $M$, let $Q'_M\ll P'_M$ have a positive likelihood ratio $L'_M$, and set $U_M=L'_M-1$.  Let $\kappa\in[0,\infty)$ and suppose
\begin{equation}
 M\E_{P'_M}U_M^2\to\kappa,\qquad M\E_{P'_M}U_M^4\to0.
 \label{eq:product-gdp-moments}
\end{equation}
Let $E_M$ be positive integers with $E_M/M\to r<\infty$.  Then $T((P'_M)^{\otimes E_M},(Q'_M)^{\otimes E_M})$ converges uniformly to $G_{\sqrt{r\kappa}}$.
\end{lemma}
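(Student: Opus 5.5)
Under $P'^{\otimes E_M}_M$ the log-likelihood ratio is $\ell_M=\sum_{e=1}^{E_M}\ln(1+U_{M,e})$, where the $U_{M,e}$ are independent copies of $U_M$. The plan is to show that $\ell_M\Rightarrow N(-\mu^2/2,\mu^2)$ under the null, with $\mu=\sqrt{r\kappa}$, and that the same holds with mean $+\mu^2/2$ under the alternative. Convergence of the trade-off functions then follows, since these are the likelihood-ratio laws of $N(0,1)$ against $N(\mu,1)$.

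\textbf{Expansion of each epoch.} Write $\ln(1+u)=u-u^2/2+R(u)$. For $u\ge-1/2$ we have $|R(u)|\le C|u|^3$. The left tail $\{U_M<-1/2\}$ is handled by Markov's inequality (\cref{fact:markov}): $E_M\Pr(|U_M|\ge1/2)\le16E_M\E U_M^4\to0$, since $E_M\E U_M^4=(E_M/M)\,M\E U_M^4\to0$. So with probability tending to one no epoch has $|U_{M,e}|\ge1/2$, and on that event $\ell_M=S_1-\tfrac12S_2+S_3$.

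\textbf{The three sums.} The linear term is $S_1=\sum_eU_{M,e}$. Its summands have mean zero, and their variances sum to $E_M\E U_M^2\to r\kappa$. The Lindeberg condition holds because $\E[U^2\one\{|U|>\xi\}]\le\E U^4/\xi^2$ and $E_M\E U_M^4\to0$. So $S_1\Rightarrow N(0,r\kappa)$ by \cref{fact:lindeberg}; when $r\kappa=0$, Chebyshev gives convergence to $0$ instead. The quadratic term is $S_2=\sum_eU_{M,e}^2$. It has mean $E_M\E U_M^2\to r\kappa$ and variance at most $E_M\E U_M^4\to0$, so $S_2\to r\kappa$ in probability. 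The cubic remainder satisfies $|S_3|\le C\sum|U|^3\le C(\sum U^2)^{1/2}(\sum U^4)^{1/2}$ by Cauchy--Schwarz. Here $\sum U^4\to0$ in probability because its mean $E_M\E U_M^4\to0$, and $\sum U^2$ is bounded in probability. By \cref{fact:converging-together}, $\ell_M\Rightarrow N(-\mu^2/2,\mu^2)$ under $P$.

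\textbf{Passing to the alternative.} Since $\E_P e^{\ell_M}=1$, I would apply Le Cam's third lemma to transfer the null limit to the alternative. Its elementary form is contiguity: for bounded continuous $g$, $\E_Q g(\ell_M)=\E_P[g(\ell_M)e^{\ell_M}]$. The variables $e^{\ell_M}$ under $P$ converge in law to $e^{N(-\mu^2/2,\mu^2)}$, whose mean is $1$, which equals $\E e^{\ell_M}$. Nonnegative variables that converge in law and whose means converge to the mean of the limit are uniformly integrable (Scheffé-type argument). Hence $\E_Q g(\ell_M)\to\E[g(W)e^W]$, which is the $N(+\mu^2/2,\mu^2)$ law.

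\textbf{From log-likelihood laws to trade-off curves.} This is the step I expect to be the main obstacle. For each $\alpha$, the Neyman--Pearson test (\eqref{eq:np}) thresholds $\ell_M$, with randomization at atoms. It can be written through the quantile function of $\ell_M$ under $P$. Couple the variables via the quantile transform (\cref{fact:quantile}), so that the quantiles converge almost surely. Then $T_M(\alpha)=1-\E_P[e^{\ell_M}\phi_\alpha]$, and I would write this as $1-\int_{1-\alpha}^1 e^{F_M^{-1}(v)}\dd v$. Pointwise convergence to $G_\mu(\alpha)$ follows from the uniform integrability shown above. Uniformity in $\alpha$ then comes for free: the functions $T_M$ are convex, nonincreasing and bounded in $[0,1]$, and they converge pointwise to the continuous $G_\mu$ on the compact interval $[0,1]$, so the convergence is uniform by a Dini/Pólya-type argument.

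The degenerate case $\mu=0$ is covered by the same argument, where the limit law is the point mass at $0$ and $G_0(\alpha)=1-\alpha$.
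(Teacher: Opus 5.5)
Your proposal is correct and follows essentially the same route as the paper: the second-order expansion of $\ln(1+u)$ on the event $\max_e|U_{M,e}|<\tfrac12$, the Lindeberg--Feller theorem driven by the fourth moment, the quantile-integral form $1-T(\alpha)=\int_{1-\alpha}^1 e^{F_M^{-1}(v)}\dd v$ of the Neyman--Pearson curve, a quantile coupling, and a Scheff\'e-type uniform integrability argument for the mean-one likelihood ratios. Two of your steps are not needed. The Le Cam transfer to the alternative is superfluous because that representation uses only the null law. P\'olya's theorem is also unnecessary, because the same coupling gives $\sup_\alpha|T_M(\alpha)-G_\mu(\alpha)|\le\int_0^1|e^{F_M^{-1}(v)}-e^{F^{-1}(v)}|\dd v\to0$ directly, which is how the paper obtains uniformity.
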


\begin{proof}
Put $n=E_M$ and $s=r\kappa$.  Write a sample of $(P'_M)^{\otimes n}$ as $(x_1,\dots,x_n)$ and put $U_{M,e}=L'_M(x_e)-1$.  These are $n$ independent copies of $U_M$, and by Appendix~\ref{app:one-epoch-proof} the likelihood ratio of the product pair is $\Lambda_M=\prod_{e=1}^n(1+U_{M,e})>0$.  Probabilities and expectations are under the null $(P'_M)^{\otimes n}$.  Since $n/M\to r$, \eqref{eq:product-gdp-moments} gives $n\E U_M^2\to s$ and $n\E U_M^4\to0$.

\emph{Step 1, a central limit theorem for the sum.}  The summands are centered, since $\E U_M=\E_{P'_M}L'_M-1=0$.  For every $\xi>0$, the fourth moment gives the Lindeberg condition
\[
 \sum_{e=1}^n\E\bigl[U_{M,e}^2\one\{|U_{M,e}|>\xi\}\bigr]\le\frac{n\E U_M^4}{\xi^2}\to0 ,
\]
since $u^2\one\{|u|>\xi\}\le u^4/\xi^2$.  The condition says that the summands larger than $\xi$ carry a vanishing part of the variance.  Our $n=E_M$ variables $U_{M,e}$ are independent and centered, and their variances sum to $n\E U_M^2\to s$, so for $s>0$ the Lindeberg--Feller central limit theorem (\cref{fact:lindeberg}) gives $\sum_eU_{M,e}\Rightarrow N(0,s)$.  For $s=0$ the limit $N(0,0)$ is the point mass at $0$, and we check this case directly.  The sum has mean $0$ and variance $n\E U_M^2\to0$, so it converges to $0$ in probability by Chebyshev's inequality (\cref{fact:markov}).  The fourth moment also gives, in probability,
\[
 \sum_eU_{M,e}^2\to s,\qquad \max_{e\le n}|U_{M,e}|\to0,\qquad
 \sum_e|U_{M,e}|^3\to0 .
\]
The first holds by Chebyshev's inequality (\cref{fact:markov}), because the left side has mean $n\E U_M^2\to s$ and variance $n\operatorname{Var}(U_M^2)\le n\E U_M^4\to0$.  The second follows from Markov's inequality (\cref{fact:markov}) and a union bound, since for every $\xi>0$ we have $\Pr(\max_e|U_{M,e}|>\xi)\le n\Pr(|U_M|>\xi)\le n\E U_M^4/\xi^4\to0$.  The third follows from Markov's inequality and the bound
\begin{equation}
 n\E|U_M|^3\le n\bigl(\E U_M^2\,\E U_M^4\bigr)^{1/2}
 =\frac nM\bigl(M\E U_M^2\cdot M\E U_M^4\bigr)^{1/2}\to r(\kappa\cdot0)^{1/2}=0 ,
 \label{eq:third-moment}
\end{equation}
which is the Cauchy--Schwarz inequality (\cref{fact:holder}) applied to $|U_M|\cdot U_M^2$.

\emph{Step 2, the log-likelihood ratio.}  For $|u|\le1/2$ the power series of $\ln(1+u)$ gives
\[
 \ln(1+u)=u-\frac{u^2}2+R(u),\qquad
 |R(u)|\le\sum_{k\ge3}\frac{|u|^k}{k}\le\frac{|u|^3}{3(1-|u|)}\le|u|^3 ,
\]
where the last step uses $3(1-|u|)\ge3/2$. Let $\mathcal R_M=\ln\Lambda_M-\sum_eU_{M,e}+\frac12\sum_eU_{M,e}^2$.  Since $\ln\Lambda_M=\sum_e\ln(1+U_{M,e})$, on the event $\{\max_e|U_{M,e}|\le1/2\}$ we have $|\mathcal R_M|\le\sum_e|U_{M,e}|^3$.  By Step~1 this event has probability tending to one and $\sum_e|U_{M,e}|^3\to0$ in probability, so $\mathcal R_M\to0$ in probability.  Hence $-\frac12\sum_eU_{M,e}^2+\mathcal R_M\to-s/2$ in probability.  The converging together lemma (\cref{fact:converging-together}), a form of Slutsky's theorem, gives
\begin{equation}
 \ln\Lambda_M=\sum_eU_{M,e}-\frac12\sum_eU_{M,e}^2+\mathcal R_M\Rightarrow Z\sim N(-s/2,s).
 \label{eq:null-limit}
\end{equation}

\emph{Step 3, the whole trade-off curve.}  Take any pair with $Q\ll P$ and likelihood ratio $\Lambda$.  Let $H$ be the distribution function of $\Lambda$ under $P$, and $H^{-1}(v)=\sup\{y:H(y)<v\}$ for $0<v<1$.  If $V$ is uniform on $(0,1)$, then $H^{-1}(V)$ has distribution function $H$ (\cref{fact:quantile}).  We show that
\begin{equation}
 1-T(P,Q)(\alpha)=\int_{1-\alpha}^1H^{-1}(v)\dd v,\qquad 0\le\alpha\le1 .
 \label{eq:np-quantile}
\end{equation}
At $\alpha=0$ both sides are $0$, because a test with $\E_P\phi=0$ has $\E_Q\phi=\E_P[\Lambda\phi]=0$.  At $\alpha=1$ both sides are $1$, since $\int_0^1H^{-1}(v)\dd v=\E_P\Lambda=1$.  For $0<\alpha<1$, let $k=H^{-1}(1-\alpha)$, which is at least $0$ because $\Lambda\ge0$, and let $\tilde\Lambda=H^{-1}(V)$, which has the distribution of $\Lambda$.  Since $H^{-1}$ is nondecreasing,
\begin{equation}
 \{\tilde\Lambda>k\}\subseteq\{V>1-\alpha\}\subseteq\{\tilde\Lambda\ge k\} .
 \label{eq:quantile-sandwich}
\end{equation}
Put $a=P(\Lambda>k)$ and $b=P(\Lambda=k)$.  Taking probabilities in \eqref{eq:quantile-sandwich} gives $a\le\alpha\le a+b$.  Let $\gamma=(\alpha-a)/b$ if $b>0$ and $\gamma=0$ if $b=0$, so that $0\le\gamma\le1$, and let $\phi^*=\one\{\Lambda>k\}+\gamma\one\{\Lambda=k\}$.  This test has the form required in \eqref{eq:np} and $\alpha_{\phi^*}=a+\gamma b=\alpha$.  By \eqref{eq:quantile-sandwich}, $\Pr(V>1-\alpha,\,\tilde\Lambda=k)=\alpha-a$.  So \eqref{eq:np} gives
\[
 \begin{aligned}
 1-T(P,Q)(\alpha)&=\E_Q\phi^*=\E_P[\Lambda\phi^*]=\E_P\bigl[\Lambda\one\{\Lambda>k\}\bigr]+k\gamma b
 =\E_P\bigl[\Lambda\one\{\Lambda>k\}\bigr]+k(\alpha-a)\\
 &=\E\bigl[\tilde\Lambda\one\{\tilde\Lambda>k\}\bigr]+k\Pr\bigl(V>1-\alpha,\,\tilde\Lambda=k\bigr)
 =\E\bigl[\tilde\Lambda\,\one\{V>1-\alpha\}\bigr]=\int_{1-\alpha}^1H^{-1}(v)\dd v .
 \end{aligned}
\]

Apply \eqref{eq:np-quantile} to the product pair, whose likelihood ratio $\Lambda_M$ has null distribution function $H_M$, and to the pair $N(0,1)$ against $N(\sqrt s,1)$.  The likelihood ratio of the second pair is $e^{\sqrt sX-s/2}$ with $X\sim N(0,1)$ under the null, which has the distribution of $e^Z$ in \eqref{eq:null-limit}.  Let $H$ be its distribution function. The trade-off function of this pair is $G_{\sqrt s}$.  Set $\tilde\Lambda_M=H_M^{-1}(V)$ and $\tilde\Lambda_\infty=H^{-1}(V)$.  Then, for every $\alpha$,
\[
 \bigl|T\bigl((P'_M)^{\otimes n},(Q'_M)^{\otimes n}\bigr)(\alpha)-G_{\sqrt s}(\alpha)\bigr|
 \le\int_0^1\bigl|H_M^{-1}(v)-H^{-1}(v)\bigr|\dd v=\E|\tilde\Lambda_M-\tilde\Lambda_\infty| .
\]
By \eqref{eq:null-limit} and the continuous mapping theorem (\cref{fact:continuous-mapping}), $\Lambda_M\Rightarrow e^Z$.  By \cref{fact:quantile}, then $H_M^{-1}(v)\to H^{-1}(v)$ for all but countably many $v$, so $\tilde\Lambda_M\to \tilde\Lambda_\infty$ almost surely.  Both variables are nonnegative with mean one, since $\E \tilde\Lambda_M=\E\Lambda_M=1$ and $\E \tilde\Lambda_\infty=\E e^Z=1$.  Therefore
\[
 \E|\tilde\Lambda_M-\tilde\Lambda_\infty|=\E(\tilde\Lambda_M-\tilde\Lambda_\infty)+2\E(\tilde\Lambda_\infty-\tilde\Lambda_M)_+=2\E(\tilde\Lambda_\infty-\tilde\Lambda_M)_+\to0
\]
by $|x|=x+2(-x)_+$ and by dominated convergence (\cref{fact:dominated}), since $(\tilde\Lambda_\infty-\tilde\Lambda_M)_+\to0$ almost surely and $0\le(\tilde\Lambda_\infty-\tilde\Lambda_M)_+\le \tilde\Lambda_\infty$ with $\E\tilde\Lambda_\infty=1$.  So the trade-off function of the product pair converges uniformly to $G_{\sqrt s}=G_{\sqrt{r\kappa}}$.
\end{proof}

\begin{proof}[Proof of \cref{thm:fixed-rate}]
We check \eqref{eq:product-gdp-moments}.  Put $W_i=Y_i-1=e^{tX_i-t^2/2}-1$, so $L_M-1=M^{-1}\sum_iW_i$ with independent $W_i$, $\E W_i=0$ and $\E W_i^2=e^{1/\sigma^2}-1$. Hence $M\E(L_M-1)^2=e^{1/\sigma^2}-1$.  Write the fourth power of $\sum_iW_i$ as a sum over ordered quadruples of indices.  A quadruple in which some index appears exactly once has mean zero, because that factor is independent of the others and has mean zero.  The other quadruples either have four equal indices, and there are $M$ of these, or have two distinct indices that each appear twice. There are $3M(M-1)$ of the latter, since the four positions split into two pairs in $3$ ways and the two indices can be chosen in $M(M-1)$ ordered ways.  So
\[
 \E(L_M-1)^4=\frac{M\E W_1^4+3M(M-1)(\E W_1^2)^2}{M^4}=O_\sigma(M^{-2}),
\]
where $\E W_1^4=e^{6/\sigma^2}-4e^{3/\sigma^2}+6e^{1/\sigma^2}-3$ follows from $\E Y_1^k=e^{k(k-1)/(2\sigma^2)}$.  Also $L_M=M^{-1}\sum_iY_i>0$, since every $Y_i$ is positive.  \Cref{lem:product-gdp} with $\kappa=e^{1/\sigma^2}-1$ gives \eqref{eq:gdp-limit}. By \eqref{eq:tv-sup}, the total variations of two pairs differ by at most the largest distance between their trade-off functions.  So uniform convergence of the trade-off functions gives convergence of the total variation to that of $G_\mu$ with $\mu=\sqrt{r(e^{1/\sigma^2}-1)}$, which is $2\Phi(\mu/2)-1$ (Appendix~\ref{app:background}).  By \eqref{eq:sep-tv-acc}, $\operatorname{Acc}^*\to\Phi(\mu/2)$.

For the last sentence of the theorem, let $E_M$ be any sequence of positive integers and put $x_M=E_M(e^{1/\sigma^2}-1)/M$.  If $E_M/M\to0$, then $x_M\to0$, and \cref{thm:finite-rule} with the bounds $U(c)\le\frac12\sqrt c$ and $c_{E_M}\le e^{x_M}-1$ of Appendix~\ref{app:sharp-envelope-proof} gives $d_{\TV}\le\frac12\sqrt{e^{x_M}-1}\to0$.  For the converse, let $S$ be the sum of all $E_MM$ coordinates of a sample.  Under $P_M^{\otimes E_M}$ it is $N(0,E_MM)$.  Under $Q_M^{\otimes E_M}$ each epoch shifts exactly one coordinate by $t$, whichever step it is, so $S$ is $N(E_Mt,E_MM)$.  Dividing by $\sqrt{E_MM}$ gives $N(0,1)$ against $N(\nu_M,1)$ with $\nu_M=t\sqrt{E_M/M}$.  A function of the sample is a Markov kernel, so by data processing and Appendix~\ref{app:background}
\[
 d_{\TV}\bigl(P_M^{\otimes E_M},Q_M^{\otimes E_M}\bigr)\ge2\Phi(\nu_M/2)-1
\]
at every $M$.  If $E_M/M$ does not tend to $0$, there are $c'>0$ and a subsequence along which $E_M/M\ge c'$, and along it the total variation is at least $2\Phi(t\sqrt{c'}/2)-1>0$.  The sum $S$ is the statistic that the optimal test on the linear model of \cref{sec:attacks} thresholds.
\end{proof}

\textbf{A second formula.} Setting the limit $\Phi(\mu/2)$ of \cref{thm:fixed-rate} equal to $\frac12+\frac\lambda2$ with $r=E/M$ gives another one-line noise multiplier, $\sigma=\bigl(\ln(1+4\Phi^{-1}(\tfrac{1+\lambda}2)^2M/E)\bigr)^{-1/2}$.  This multiplier comes from a limit and not from a bound, so a finite run calibrated with it may exceed the target.  In the nine training cells of \cref{sec:calib-trial}, at $\lambda=0.10$, the multiplier lies $0.04\%$ to $0.40\%$ above the exact $\sigma$, but no result shows that it stays above the exact $\sigma$ at other $M$ and $E$.  The closed form of \cref{cor:inverse} pays its extra noise for a bound that holds at every $M$, $E$ and $\sigma$.

\subsection{Small false-positive rates}
\label{app:low-fpr}

By \eqref{eq:tv-sup}, a bound $d_{\TV}\le\lambda$ allows a test with false-positive rate $\alpha$ a true-positive rate up to $\alpha+\lambda$, which is far above $\alpha$ when $\alpha$ is small.  The chi-square value gives a bound that shrinks with $\alpha$, by the Cauchy--Schwarz inequality (\cref{fact:holder}).

\begin{proposition}[Small false-positive rates]
\label{prop:low-fpr}
Let $Q\ll P$ with $\chi^2(Q\|P)=c<\infty$.  Every test $\phi$ with $\E_P\phi=\alpha$ has $\E_Q\phi\le\alpha+\sqrt{c\,\alpha(1-\alpha)}$, and $T(P,Q)(\alpha)\ge1-\alpha-\sqrt{c\,\alpha(1-\alpha)}$ for $0\le\alpha\le\frac12$.  So every test on the model released by \cref{alg:dpsgd} with false-positive rate at most $\alpha\le\frac12$ has true-positive rate at most $\alpha+\sqrt{c_E\alpha(1-\alpha)}$.
\end{proposition}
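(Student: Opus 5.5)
The plan is to center the test and apply the Cauchy--Schwarz inequality (\cref{fact:holder}) to the product of $L-1$ with the centered test. Let $L=\dd Q/\dd P$ and let $\phi$ be a test with $\E_P\phi=\alpha$. Since $\E_P(L-1)=0$, we can subtract the constant $\alpha$ inside the expectation:
\[
 \E_Q\phi-\alpha=\E_P[(L-1)\phi]=\E_P[(L-1)(\phi-\alpha)]\le\bigl(\E_P(L-1)^2\bigr)^{1/2}\bigl(\E_P(\phi-\alpha)^2\bigr)^{1/2}.
\]
The first factor is $\sqrt c$. For the second, $0\le\phi\le1$ gives $\phi^2\le\phi$. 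Hence $\E_P(\phi-\alpha)^2=\E_P\phi^2-\alpha^2\le\alpha-\alpha^2=\alpha(1-\alpha)$, and this proves the first claim.

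For the trade-off bound, we need every test with $\E_P\phi\le\alpha$, not only those with equality. Let $\alpha'=\E_P\phi\le\alpha$. The first claim gives $\E_Q\phi\le\alpha'+\sqrt{c\alpha'(1-\alpha')}$. The function $g(a)=a+\sqrt{c\,a(1-a)}$ is nondecreasing on $[0,\frac12]$, because both $a$ and $a(1-a)$ increase there. So for $\alpha\le\frac12$ we get $\beta_\phi=1-\E_Q\phi\ge1-g(\alpha)$. Taking the infimum over such tests gives $T(P,Q)(\alpha)\ge1-\alpha-\sqrt{c\alpha(1-\alpha)}$.

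The monotonicity of $g$ is the only delicate point, and it is the reason for the restriction $\alpha\le\frac12$. Beyond $\frac12$ the term $a(1-a)$ decreases, so $g$ need not be monotone.

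For the released model, apply the trade-off bound to the Gaussian pair $(P_M^{\otimes E},Q_M^{\otimes E})$. This pair has $\chi^2=c_E$ by \eqref{eq:tensorize}, and its likelihood ratio exists because $Q\ll P$ (\cref{app:background}). Then \eqref{eq:transfer} gives $T(P_{\rm out},Q_{\rm out})(\alpha)\ge T(P_M^{\otimes E},Q_M^{\otimes E})(\alpha)\ge1-\alpha-\sqrt{c_E\alpha(1-\alpha)}$.

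Any test on the model with false-positive rate at most $\alpha\le\frac12$ has type-II error at least $T(P_{\rm out},Q_{\rm out})(\alpha)$. Its true-positive rate $1-\beta$ is therefore at most $\alpha+\sqrt{c_E\alpha(1-\alpha)}$. This way we never need a likelihood ratio for the output distributions themselves, since we pass through the trade-off comparison \eqref{eq:transfer}.
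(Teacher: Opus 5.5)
Your proof is correct and follows the paper's argument essentially step by step. Like the paper, you center the test using $\E_P(L-1)=0$ and apply Cauchy--Schwarz with $\E_P(\phi-\alpha)^2\le\alpha(1-\alpha)$. You then use monotonicity of $g(a)=a+\sqrt{c\,a(1-a)}$ on $[0,\frac12]$ to cover tests with $\E_P\phi\le\alpha$, and transfer the bound to the released model via \eqref{eq:tensorize} and \eqref{eq:transfer} with $c=c_E$.
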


\begin{proof}
Let $L=\dd Q/\dd P$.  Since $\E_P(L-1)=0$, the Cauchy--Schwarz inequality (\cref{fact:holder}) gives
\[
 \E_Q\phi-\E_P\phi=\E_P[(L-1)\phi]=\E_P[(L-1)(\phi-\alpha)]
 \le\bigl(\E_P(L-1)^2\bigr)^{1/2}\bigl(\E_P(\phi-\alpha)^2\bigr)^{1/2} .
\]
The first factor is $\sqrt c$.  Since $0\le\phi\le1$, we have $\phi^2\le\phi$, so $\E_P(\phi-\alpha)^2=\E_P\phi^2-\alpha^2\le\alpha(1-\alpha)$, and the first claim follows.  The function $g(a)=a+\sqrt{c\,a(1-a)}$ increases on $[0,\frac12]$, because $a(1-a)$ does.  So a test with $\E_P\phi=a\le\alpha\le\frac12$ has type-II error $1-\E_Q\phi\ge1-g(a)\ge1-g(\alpha)$, which bounds $T(P,Q)(\alpha)$.  The pair $(P_M^{\otimes E},Q_M^{\otimes E})$ has $\chi^2=c_E$ by \eqref{eq:tensorize}, and \eqref{eq:transfer} gives $T(P_{\rm out},Q_{\rm out})(\alpha)\ge T(P_M^{\otimes E},Q_M^{\otimes E})(\alpha)\ge1-g(\alpha)$ with $c=c_E$.  A test on the released model with false-positive rate at most $\alpha$ therefore has true-positive rate at most $g(\alpha)$.  The same Cauchy--Schwarz step, with an unbiased estimator in place of $\phi$, gives the variance bound of \citet[Eq.~(31)]{Hammersley1950} and \citet[Eq.~(3)]{ChapmanRobbins1951}.
\end{proof}

\Cref{tab:low-fpr} compares, at the exact $\sigma$ of two training cells of \cref{sec:calib-trial}, the true-positive rate of the optimal test on the pair with the bound of \cref{prop:low-fpr} and with the line $\alpha+\lambda$, $\lambda=0.10$.  The optimal test thresholds $\ln\Lambda$ and randomizes at the threshold.  We take the density of $\ln\Lambda$ under $P_M^{\otimes E}$ that the computation of Appendix~\ref{app:exact-numerics} forms on the grid for the exact $\sigma$, and the density under $Q_M^{\otimes E}$ is $e^s$ times it.  In all nine cells the second grid of Appendix~\ref{app:exact-numerics} gives the same rates to within $10^{-9}$.  At $M=100$ and $E=10$, $2\times10^6$ Monte Carlo samples under each distribution give false-positive and true-positive rates within $0.6$ standard errors of the computed ones.  In all nine cells, at $\alpha=10^{-3}$, the pair's true-positive rate is $0.0022$ to $0.0024$ and the bound is $0.0090$ to $0.0092$, against $\alpha+\lambda=0.101$.

\begin{table}[H]
\caption{True-positive rate of the optimal test and two upper bounds on it at false-positive rate $\alpha$, at the exact $\sigma$ of two cells of \cref{tab:calib-trial} for $\lambda=0.10$.  Pair is the true-positive rate of the optimal test on $(P_M^{\otimes E},Q_M^{\otimes E})$, chi-square is the bound $\alpha+\sqrt{c_E\alpha(1-\alpha)}$ of \cref{prop:low-fpr}, and total variation is $\alpha+\lambda$.  By \eqref{eq:transfer} both bounds hold for every test on the released model.}
\label{tab:low-fpr}
\centering
\small
\begin{tabular}{@{}rcccccc@{}}
\toprule
& \multicolumn{3}{c}{$M=100$, $E=10$, $\sigma=1.4262$} & \multicolumn{3}{c}{$M=500$, $E=10$, $\sigma=0.8344$} \\
\cmidrule(lr){2-4}\cmidrule(lr){5-7}
$\alpha$ & pair & chi-square & total variation & pair & chi-square & total variation \\
\midrule
$0.1$ & $0.1516$ & $0.1767$ & $0.2000$ & $0.1523$ & $0.1771$ & $0.2000$ \\
$0.01$ & $0.0191$ & $0.0354$ & $0.1100$ & $0.0194$ & $0.0356$ & $0.1100$ \\
$0.001$ & $0.0023$ & $0.0091$ & $0.1010$ & $0.0024$ & $0.0091$ & $0.1010$ \\
\bottomrule
\end{tabular}
\end{table}

\subsection{A run of \texorpdfstring{\cref{alg:dpsgd}}{Algorithm 1} that releases the Gaussian pair}
\label{app:attain}

Fix the target record $x_\star$ and a dataset $D$ that does not contain $x_\star$ and has $|D|+1\le N_0$, so that $D$ and $D^+=D\cup\{x_\star\}$ are both valid inputs of \cref{alg:dpsgd}.  Take a model $w$ with $EM$ coordinates, starting at zero.  Take the loss $\ell(w;x)=Cw_1$ for $x=x_\star$ and $\ell(w;x)=0$ for every other record.  The gradient of the target is then $Ce_1$, which has length $C$ and is not changed by clipping, and every record of $D$ has gradient zero, like a ghost record.  At the $k$th of the $EM$ updates, let the optimizer set $w_k$ to $B_0/(\sigma C)$ times the first coordinate of $S$ and leave the other coordinates of $w$ unchanged.  This optimizer is fixed in advance, as \cref{alg:dpsgd} requires.  By line~\ref{line:noise}, the first coordinate of $S$ at the $k$th update is $(CI_k+\sigma CZ_k)/B_0$, where $I_k$ is $1$ if the target is in the step of that update and $0$ otherwise, and $Z_1,\dots,Z_{EM}$ are independent standard Gaussian values.  So the released model has the coordinates $w_k=I_k/\sigma+Z_k$.  On $D$ every $I_k$ is $0$, and $w$ is distributed as $N(0,I_{EM})=P_M^{\otimes E}$.  On $D^+$ the target is in exactly one step of each epoch, chosen uniformly and independently across epochs, so the $M$ coordinates of an epoch are distributed as $Q_M$ with $t=1/\sigma$, independently across epochs, and $w$ is distributed as $Q_M^{\otimes E}$.  The total variation between the released models on $D$ and on $D^+$ is therefore $d_{\TV}(P_M^{\otimes E},Q_M^{\otimes E})$.  Fix a target $0<\lambda<1$ on the total variation.  An accountant whose guarantee holds for every run of \cref{alg:dpsgd} with the same $M$, $E$ and $\sigma$ also covers this run, so every $\sigma$ it certifies for $\lambda$ has $d_{\TV}(P_M^{\otimes E},Q_M^{\otimes E})\le\lambda$.  These $\sigma$ lie in the interval $[\sigma_\lambda,\infty)$ of Appendix~\ref{app:exact-numerics}, whose left end is the exact $\sigma$.

\section{The Exact Total Variation}
\label{app:exact-tv}

This appendix proves \cref{thm:exact-tv}.  We first prove three integral formulas, one for each formula of the theorem, and then combine them.  After that we bound the error from cutting the integrals off at a finite point, describe the computation, and prove the limit $\sqrt{\pi/2}$ stated after \cref{thm:fixed-rate}.

\textbf{Notation.} Under $P_M$ the coordinates $X_1,\dots,X_M$ are independent standard Gaussians, and $t=1/\sigma$.  As in \eqref{eq:one-epoch-lr} and before \cref{thm:exact-tv},
\[
 Y_j=e^{tX_j-t^2/2},\qquad
 L_M=\frac1M\sum_{j=1}^MY_j,\qquad
 Z=\sum_{j=1}^M(Y_j-1),
\]
so $L_M=1+Z/M$.  Each $Y_j$ is lognormal, hence positive, so $L_M>0$.  Also $\E Y_j=1$ and $\E Y_j^2=e^{t^2}$ (Appendix~\ref{app:one-epoch-proof}).  The $M$ terms $Y_j-1$ are independent with mean zero and variance $e^{t^2}-1=e^{1/\sigma^2}-1$.  Hence
\[
 \E L_M=1,\qquad \E Z=0,\qquad \E Z^2=M(e^{1/\sigma^2}-1) .
\]
Over $E$ epochs we have independent copies $L_{M,1},\dots,L_{M,E}$ of $L_M$, one for each epoch.  By Appendix~\ref{app:one-epoch-proof}, the likelihood ratio of the product pair is their product
\[
 \Lambda=\prod_{e=1}^EL_{M,e}.
\]

For $x>0$ and a complex number $q=a+\mathrm iu$ we write $x^q=e^{q\ln x}$, so that $|x^q|=x^a$.  For $0\le a\le1$,
\[
 x^a\le\max\{1,x\}\le1+x ,
\]
because $x^a\le1$ when $x\le1$ and $x^a\le x$ when $x\ge1$.  With $x=L_M$ this gives $|L_M^q|=L_M^a\le1+L_M$, and $\E(1+L_M)=2$.  So for $0\le\Re q\le1$ the one-epoch factor
\[
 F_M(q)=\E L_M^q
\]
is finite.  Since $\ln\Lambda=\sum_e\ln L_{M,e}$, we have $\Lambda^q=\prod_eL_{M,e}^q$.  The factors are independent, and the expectation of a product of independent integrable variables is the product of their expectations (\cref{fact:product}).  This extends to complex factors.  Write $L_{M,e}^q=A_e+\mathrm iB_e$ with $A_e$ and $B_e$ real.  Then $|A_e|\le1+L_{M,e}$ and $|B_e|\le1+L_{M,e}$.  For every set $T$ of epochs, the variables $B_e$ with $e\in T$ and $A_e$ with $e\notin T$ are functions of different epochs, so they are independent (\cref{fact:indep-functions}).  Expanding the product and applying the real case to each term,
\[
 \E\prod_{e=1}^E(A_e+\mathrm iB_e)
 =\sum_{T\subseteq\{1,\dots,E\}}\mathrm i^{|T|}\prod_{e\notin T}\E A_e\prod_{e\in T}\E B_e
 =\prod_{e=1}^E\bigl(\E A_e+\mathrm i\,\E B_e\bigr).
\]
So
\[
 \E\Lambda^q=\prod_{e=1}^E\E L_{M,e}^q=F_M(q)^E .
\]
In particular $\E\Lambda=F_M(1)^E=1$.  By Appendix~\ref{app:background}, $d_{\TV}=\frac12\E|\Lambda-1|=\E(1-\Lambda)_+$.  For every real $x$ we have $(1-x)_+=1-\min\{1,x\}$, since both sides equal $1-x$ when $x\le1$ and $0$ when $x\ge1$.  So also
\[
 d_{\TV}\bigl(P_M^{\otimes E},Q_M^{\otimes E}\bigr)=1-\E\min\{1,\Lambda\}.
\]

\subsection{Three integral formulas}

\begin{lemma}[{The absolute value as an integral, \citealp[Eqs.~(7) and~(8) with $r=1$]{vonBahrEsseen1965}}]
\label{lem:abs-int}
For every real $z$,
\[
 |z|=\frac2\pi\int_0^\infty\frac{1-\cos(uz)}{u^2}\dd u .
\]
\end{lemma}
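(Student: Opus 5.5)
The plan is to reduce the formula to the single integral $\int_0^\infty(1-\cos v)v^{-2}\dd v=\pi/2$ by a change of variables, and then to evaluate that integral directly.

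\textbf{Reduction to $z>0$.} For $z=0$ both sides are $0$, since the integrand vanishes identically. The integrand is even in $z$, because $\cos$ is even, so it suffices to treat $z>0$.

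\textbf{Convergence.} The integrand $(1-\cos(uz))/u^2$ is nonnegative. It is bounded by $z^2/2$ near $u=0$, because $1-\cos x\le x^2/2$. It is bounded by $2/u^2$ for large $u$. Hence the integral converges absolutely.

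\textbf{Change of variables.} For $z>0$ substitute $v=uz$, so that $\dd u=\dd v/z$ and $u^2=v^2/z^2$. This gives
\[
 \int_0^\infty\frac{1-\cos(uz)}{u^2}\dd u=z\int_0^\infty\frac{1-\cos v}{v^2}\dd v=:zK .
\]
The lemma is therefore equivalent to $K=\pi/2$.

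\textbf{Evaluation of $K$.} This is the only nontrivial step. I would use Tonelli (\cref{fact:fubini}) rather than the Dirichlet integral, to stay within the listed facts.
\begin{enumerate}[label=(\roman*)]
\item Write $1-\cos v=\int_0^v\sin s\dd s$.
\item Write $v^{-2}=\int_0^\infty xe^{-vx}\dd x$.
\item Swapping the order of integration requires care, because $\sin$ changes sign. The cleanest route is to integrate by parts once. Since $(1-\cos v)/v\to0$ at both ends of $(0,\infty)$, this gives $K=\int_0^\infty \sin v/v\dd v$.
\item Then apply the Laplace-transform argument with a damping factor $e^{-\eps v}$. One has $\int_0^\infty e^{-\eps v}\sin v/v\dd v=\pi/2-\arctan\eps$, obtained by writing $1/v=\int_0^\infty e^{-vx}\dd x$. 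Fubini applies for $\eps>0$, because $|\sin v|/v\le1$ makes the double integral absolutely convergent.
\item Let $\eps\to0$.
\end{enumerate}

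\textbf{Alternative evaluation.} Alternatively, compute $K$ without the $\sin v/v$ detour. Write $(1-\cos v)=\Re(1-e^{\mathrm iv})$ and use the Cauchy characteristic function $e^{-|u|}$ from \cref{fact:chf}. Inversion (\cref{fact:inversion}) at $y=0$ gives $\frac1{2\pi}\int e^{-|u|}\dd u=1/\pi$, and integrating that identity twice in the parameter yields the same constant.

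\textbf{Main obstacle.} The expected main obstacle is justifying the limit $\eps\to0$, since $\sin v/v$ is not absolutely integrable. Working instead with the damped version of the nonnegative integrand $(1-\cos v)/v^2$ avoids this. The integrand $e^{-\eps v}(1-\cos v)/v^2$ is dominated by $(1-\cos v)/v^2$, which is integrable, so dominated convergence (\cref{fact:dominated}) gives the limit. The only remaining work is the elementary antiderivative computation in $\eps$.
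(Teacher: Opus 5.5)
Your reduction ($z=0$, evenness in $z$, the substitution $v=u|z|$) is exactly the paper's. Everything therefore rests on $K=\int_0^\infty(1-\cos v)v^{-2}\dd v=\pi/2$, and that is the one step your plan does not actually carry out.

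The obstacle you plan around is one you created. The sign change in (iii) exists only because step (i) rewrites $1-\cos v$ as $\int_0^v\sin s\dd s$, so drop (i). You already noted that $1-\cos v\ge0$. With your own step (ii) the integrand $(1-\cos v)\,x\,e^{-vx}$ is then nonnegative, and Tonelli (\cref{fact:fubini}) exchanges the order of integration with no further justification. Step (ii) is exactly the paper's representation, but your route never uses it; step (iv) uses $1/v=\int_0^\infty e^{-vx}\dd x$ instead. The inner integral is a Laplace transform, $\int_0^\infty(1-\cos v)e^{-vx}\dd v=\frac1x-\frac{x}{1+x^2}=\frac1{x(1+x^2)}$, so $K=\int_0^\infty\frac{\dd x}{1+x^2}=\frac\pi2$. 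That is the paper's entire proof, and it needs no Dirichlet integral, no damping and no limit.

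The route you do propose has a gap at exactly the point you flag. Step (v) is an Abelian limit, $\lim_{\eps\downarrow0}\int_0^\infty e^{-\eps v}\frac{\sin v}{v}\dd v=\int_0^\infty\frac{\sin v}{v}\dd v$, for an integral that converges only conditionally. Dominated convergence does not give it, and you supply no substitute.

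Your repair correctly yields $K_\eps:=\int_0^\infty e^{-\eps v}(1-\cos v)v^{-2}\dd v\to K$. But then the whole content is the value of $K_\eps$, which you leave as ``elementary''. Integrating by parts back to the damped sine integral gives
\[
 K_\eps=\Bigl(\frac\pi2-\arctan\eps\Bigr)-\eps\int_0^\infty e^{-\eps v}\frac{1-\cos v}{v}\dd v=\frac\pi2-\arctan\eps-\frac\eps2\ln\bigl(1+\eps^{-2}\bigr),
\]
so you still have to show that the extra term, of order $\eps\ln(1/\eps)$, vanishes. Alternatively, you could differentiate twice in $\eps$ under the integral, which needs its own domination argument, and integrate back from $\eps=\infty$. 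Either can be completed. If you instead compute $K_\eps$ from representation (ii), you get $\int_0^\infty\frac{x\,\dd x}{(x+\eps)(1+(x+\eps)^2)}$, and the damping turns out to be pointless because $\eps=0$ is allowed from the start.

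Your Cauchy alternative is too vague to check. The inversion identity at $y=0$ reads $1/\pi=1/\pi$ and carries no information. You would need $\frac1\pi\int_0^\infty e^{-u}\cos(uy)\dd u=\frac1{\pi(1+y^2)}$ for all $y$, integrate it twice in $y$, divide by $y$ and let $y\to\infty$. That is the same computation with $\eps=1/y$.
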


Taking expectations gives \citet[Lemma~2]{vonBahrEsseen1965} with $r=1$, the step behind \eqref{eq:exact-tv-cf}.  The same formula for $\E|X|$ is the case $n=0$ of \citet[Theorem~1(a)]{LinHu2021}, and Lin and Hu attribute their Theorem~1 to \citet[Theorems~2.1 and~4.1]{Hsu1951}.

\begin{proof}
Both sides are zero at $z=0$.  For $z\ne0$, substitute $v=u|z|$, so that $\dd u=\dd v/|z|$ and $1/u^2=z^2/v^2$.  Since $\cos$ is even, $\cos(uz)=\cos(v)$, and
\[
 \int_0^\infty\frac{1-\cos(uz)}{u^2}\dd u=|z|\int_0^\infty\frac{1-\cos(v)}{v^2}\dd v .
\]
It remains to show that the last integral equals $\pi/2$.  We use two Laplace integrals. First, for every $v>0$, substituting $y=vx$ gives
\[
 \int_0^\infty x\,e^{-vx}\dd x=\frac1{v^2}\int_0^\infty y\,e^{-y}\dd y=\frac1{v^2},
\]
because $-(1+y)e^{-y}$ has derivative $ye^{-y}$, equals $-1$ at $y=0$ and tends to $0$ as $y\to\infty$, so the last integral is $0-(-1)=1$.  Second, for every $x>0$ and real $u$,
\begin{equation}
 \int_0^\infty\cos(uv)\,e^{-xv}\dd v
 =\Re\int_0^\infty e^{-(x-\mathrm iu)v}\dd v
 =\Re\frac1{x-\mathrm iu}
 =\Re\frac{x+\mathrm iu}{x^2+u^2}
 =\frac{x}{x^2+u^2} .
 \label{eq:laplace-cos}
\end{equation}
Here the complex integral equals $1/(x-\mathrm iu)$ because $-e^{-(x-\mathrm iu)v}/(x-\mathrm iu)$ is an antiderivative of the integrand, applied to its real and imaginary parts, so that
\[
 \int_0^Re^{-(x-\mathrm iu)v}\dd v=\frac{1-e^{-(x-\mathrm iu)R}}{x-\mathrm iu}\longrightarrow\frac1{x-\mathrm iu}
 \qquad(R\to\infty),
\]
since $|e^{-(x-\mathrm iu)R}|=e^{-xR}\to0$.

Now write $1/v^2$ with the first integral.  The integrand $(1-\cos(v))\,x\,e^{-vx}$ is nonnegative, so Tonelli's theorem (\cref{fact:fubini}) lets us exchange the order of integration,
\[
 \int_0^\infty\frac{1-\cos(v)}{v^2}\dd v
 =\int_0^\infty\int_0^\infty(1-\cos(v))\,x\,e^{-vx}\dd x\,\dd v
 =\int_0^\infty x\int_0^\infty(1-\cos(v))\,e^{-vx}\dd v\,\dd x .
\]
The inner integral follows from $\int_0^\infty e^{-vx}\dd v=1/x$ and \eqref{eq:laplace-cos} with $u=1$,
\[
 \int_0^\infty(1-\cos(v))\,e^{-vx}\dd v=\frac1x-\frac{x}{1+x^2}
 =\frac{1+x^2-x^2}{x(1+x^2)}=\frac1{x(1+x^2)} .
\]
Hence, since $\arctan$ has derivative $1/(1+x^2)$,
\[
 \int_0^\infty\frac{1-\cos(v)}{v^2}\dd v=\int_0^\infty\frac{\dd x}{1+x^2}
 =\lim_{x\to\infty}\arctan x-\arctan0=\frac\pi2 .
\]
Multiplying by $2/\pi$ gives $|z|$.
\end{proof}

\begin{lemma}[{The minimum as an integral, \citealp[Table~3.1]{Lewis2001}}]
\label{lem:min-int}
For every $x>0$,
\[
 \min\{1,x\}=\frac{\sqrt x}{2\pi}\int_{-\infty}^{\infty}\frac{x^{-\mathrm iu}}{u^2+\frac14}\dd u ,
\]
where $x^{-\mathrm iu}=e^{-\mathrm iu\ln x}$.
\end{lemma}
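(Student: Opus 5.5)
Substituting $y=\ln x$ reduces the identity to a statement about the Fourier transform of the function $u\mapsto 1/(u^2+\frac14)$. Since $x^{-\mathrm iu}=e^{-\mathrm iuy}$ and the weight $1/(u^2+\frac14)$ is even, the imaginary part integrates to zero. The claim becomes
\[
 \frac1{2\pi}\int_{-\infty}^{\infty}\frac{\cos(uy)}{u^2+\frac14}\dd u=e^{-|y|/2}.
\]
This is equivalent to the original, because $\sqrt x\,e^{-|\ln x|/2}=e^{(y-|y|)/2}$ equals $1$ when $y\ge0$ and $x$ when $y<0$, that is, $\min\{1,x\}$. The integral converges absolutely, since the integrand is bounded by $1/(u^2+\frac14)$, which is integrable.

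To prove the Fourier identity, I use the Cauchy characteristic function from \cref{fact:chf}. A standard Cauchy variable has density $1/(\pi(1+v^2))$ and characteristic function $e^{-|s|}$. Substituting $u=v/2$ gives
\[
 \frac1{u^2+\frac14}=\frac{4}{1+v^2},\qquad \dd u=\frac{\dd v}{2}.
\]
The left side then becomes
\[
 \frac1{2\pi}\int\frac{2\cos(vy/2)}{1+v^2}\dd v=\int\cos(vy/2)\,\frac{\dd v}{\pi(1+v^2)},
\]
which is $\Re\,\varphi(y/2)=e^{-|y|/2}$, as required. Alternatively, \eqref{eq:laplace-cos} gives the same result with the roles of the variables swapped. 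By \eqref{eq:laplace-cos} with $x=\frac12$, the function $\frac{1/2}{u^2+1/4}$ is the cosine transform of $e^{-v/2}$ on $(0,\infty)$, and Fourier inversion (\cref{fact:inversion}) applied to the Laplace density $\frac14 e^{-|y|/2}$ then yields the same identity.

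The main point needing care is the normalization constants in the substitution. The Cauchy route is the cleanest because it cites a stated fact directly, so I take it as the primary argument. The final step is to multiply by $\sqrt x$ and check the two cases $x\ge1$ and $x<1$.
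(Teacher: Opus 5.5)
Your proof is correct and is essentially the paper's argument: both rescale $u=v/2$ to recognize $\{2\pi(u^2+\frac14)\}^{-1}$ as the density of half a standard Cauchy variable, then use its characteristic function from \cref{fact:chf} to get $\sqrt x\,e^{-|\ln x|/2}=\min\{1,x\}$. The only cosmetic differences are that you first drop the odd sine part rather than writing the integral as $\sqrt x\,\E x^{-\mathrm iU}$, and that your alternative via \eqref{eq:laplace-cos} and \cref{fact:inversion} corresponds to the paper's ``for completeness'' derivation of the Cauchy characteristic function. That derivation also states explicitly that the continuous density from the inversion formula agrees pointwise with the Laplace density, a step your alternative leaves implicit.
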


For $K=1$ the payoff $\min\{e^y,K\}$ in \citet[Table~3.1]{Lewis2001} has the transform $1/(z^2-\mathrm iz)$ for $0<\Im z<1$, and inverting it along $\Im z=\frac12$, as on his p.~11, gives the lemma at $x=e^y$.

\begin{proof}
Let $V$ have the standard Cauchy density $\{\pi(1+v^2)\}^{-1}$.  Then $U=V/2$ has the density $\{2\pi(u^2+\frac14)\}^{-1}$, which is the weight in the integral, so the right side of the lemma is
\[
 \sqrt x\,\E x^{-\mathrm iU}=\sqrt x\,\E e^{-\mathrm i(\ln x/2)V}.
\]
The characteristic function of $V$ is $\E e^{\mathrm isV}=e^{-|s|}$ (\cref{fact:chf}), so the right side of the lemma is $\sqrt x\,e^{-|\ln x|/2}$, which is $1$ for $x\ge1$ and $x$ for $x<1$.

For completeness we derive this characteristic function from \eqref{eq:laplace-cos} and the inversion formula.  The density $\frac12e^{-|y|}$ is even, so by \eqref{eq:laplace-cos}, with $1$ in place of its parameter $x$, $s$ in place of $u$ and $y$ as the variable of integration, it has the characteristic function $\int_0^\infty\cos(sy)\,e^{-y}\dd y=(1+s^2)^{-1}$, which is integrable.  By the inversion formula (\cref{fact:inversion}), the same distribution then has the continuous density
\[
 \frac1{2\pi}\int_{-\infty}^{\infty}\frac{e^{-\mathrm isy}}{1+s^2}\dd s=\frac12\,\E e^{-\mathrm iyV} .
\]
The two continuous densities $\frac12e^{-|y|}$ and $\frac12\E e^{-\mathrm iyV}$ have the same integral over every interval, and a continuous function whose integral over every interval is zero vanishes at every point.  Hence $\E e^{-\mathrm iyV}=e^{-|y|}$ for every real $y$, which is $\E e^{\mathrm isV}=e^{-|s|}$ with $s=-y$.
\end{proof}

\begin{lemma}[{A power as an integral, \citealp[Eqs.~5.9.6 and~5.9.7]{DLMF}}]
\label{lem:pow-int}
Let $q$ be a complex number with $0<\Re q<1$.  For every $x>0$,
\[
 x^q=\frac{q\,e^{\mathrm i\pi q/2}}{\Gamma(1-q)}\int_0^\infty s^{-q-1}\bigl(1-e^{\mathrm isx}\bigr)\dd s ,
\]
and the integral converges absolutely.
\end{lemma}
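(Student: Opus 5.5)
The plan is to reduce the claim by scaling to a single constant, and then compute that constant with a damping factor, so that only the Gamma integral of \cref{fact:gamma} is needed and no contour argument. Write $q=a+\mathrm ib$ with $0<a<1$.

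\emph{Absolute convergence and scaling.} I first check absolute convergence. Since $|1-e^{\mathrm iy}|\le\min\{|y|,2\}$, the integrand of the lemma is bounded in modulus by $x\,s^{-a}$ near $0$ and by $2s^{-a-1}$ at infinity. Both are integrable because $0<a<1$. Next I substitute $s=v/x$. Then $s^{-q-1}\dd s=x^{q}v^{-q-1}\dd v$, using $\ln(v/x)=\ln v-\ln x$ for positive reals. So the integral equals $x^qI(q)$, where
\[
 I(q)=\int_0^\infty v^{-q-1}\bigl(1-e^{\mathrm iv}\bigr)\dd v .
\]
It therefore suffices to prove $I(q)=\Gamma(1-q)\,e^{-\mathrm i\pi q/2}/q$. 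Here $\Gamma(1-q)\ne0$ and $q\ne0$, so the prefactor in the lemma is well defined.

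\emph{Damping.} Put
\[
 I_\eps=\int_0^\infty v^{-q-1}\bigl(e^{-\eps v}-e^{-(\eps-\mathrm i)v}\bigr)\dd v,\qquad \eps>0 .
\]
The integrand of $I_\eps$ is the integrand of $I(q)$ times $e^{-\eps v}$, so its modulus is bounded by the integrable majorant above. By dominated convergence (\cref{fact:dominated}), $I_\eps\to I(q)$ as $\eps\downarrow0$.

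\emph{Integration by parts.} For fixed $\eps$ I integrate by parts with antiderivative $-v^{-q}/q$. The boundary terms vanish. At $0$ the bracket is $O(v)$, so the product is $O(v^{1-a})$. At infinity the bracket is bounded and $v^{-a}\to0$. The derivative of the bracket is $-\eps e^{-\eps v}+(\eps-\mathrm i)e^{-(\eps-\mathrm i)v}$. This gives
\[
 I_\eps=\frac1q\int_0^\infty v^{-q}\Bigl(-\eps e^{-\eps v}+(\eps-\mathrm i)e^{-(\eps-\mathrm i)v}\Bigr)\dd v .
\]
Each of the two pieces now converges absolutely on its own. By \cref{fact:gamma} with $\nu=1-q$ and $w\in\{\eps,\eps-\mathrm i\}$, both of which have $\Re\nu,\Re w>0$, I get
\[
 I_\eps=\frac{\Gamma(1-q)}q\bigl(-\eps^{q}+(\eps-\mathrm i)^{q}\bigr),
\]
with principal powers. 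This uses $w\cdot w^{-(1-q)}=w^q$, which holds for principal powers because the same branch of $\ln w$ appears in both.

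\emph{Limit.} As $\eps\downarrow0$, $|\eps^q|=\eps^a\to0$. Also $(\eps-\mathrm i)^q\to(-\mathrm i)^q=e^{-\mathrm i\pi q/2}$, by continuity of the principal logarithm off $(-\infty,0]$ (\cref{fact:log}), since $\arg(-\mathrm i)=-\pi/2$. This gives the value of $I(q)$ claimed above, and hence the lemma.

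The main obstacle is the branch bookkeeping: checking that the principal powers produced by \cref{fact:gamma} combine into exactly $e^{-\mathrm i\pi q/2}$. The damping itself makes every exchange of limit and integral routine.
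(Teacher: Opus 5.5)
Your proof is correct and follows essentially the paper's route: damp by $e^{-\epsilon s}$, integrate by parts with antiderivative $-s^{-q}/q$, evaluate the result with the Gamma integral of \cref{fact:gamma}, and let $\epsilon\downarrow0$ using dominated convergence on the left and continuity of the principal logarithm at $-\mathrm i$ on the right. The differences are cosmetic. You first scale to $x=1$ and damp the whole integrand, which produces an extra term $-\epsilon^q$ that vanishes in the limit. The paper instead damps only the exponential: it first proves $\int_0^\infty s^{-q-1}(1-e^{-ws})\dd s=\Gamma(1-q)\,w^q/q$ for every $\Re w>0$, and then sets $w=\epsilon-\mathrm ix$.
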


With $1-q$ in place of the $z$ of those formulas, they give $\int_0^\infty t^{-q}e^{\mathrm it}\dd t=\Gamma(1-q)e^{\mathrm i\pi(1-q)/2}$, and the substitution $t=sx$ and one integration by parts give the lemma.  For real $q$ and $x=1$ the lemma is the case $m=0$, $\nu=q$ of the integral $J(\nu)$ that \citet[pp.~810--811]{vonBahr1965} evaluates.

\begin{proof}
Put $a=\Re q$.  For real $\theta$ we have $|1-e^{\mathrm i\theta}|=|\int_0^\theta\mathrm ie^{\mathrm i\phi}\dd\phi|\le|\theta|$ and $|1-e^{\mathrm i\theta}|\le2$.  With $|s^{-q-1}|=s^{-a-1}$ this gives
\begin{equation}
 \bigl|s^{-q-1}(1-e^{\mathrm isx})\bigr|\le s^{-a-1}\min\{sx,2\}.
 \label{eq:pow-dom}
\end{equation}
The right side is integrable because $0<a<1$,
\[
 \int_0^\infty s^{-a-1}\min\{sx,2\}\dd s
 =x\int_0^{2/x}s^{-a}\dd s+2\int_{2/x}^\infty s^{-a-1}\dd s
 =x\,\frac{(2/x)^{1-a}}{1-a}+2\,\frac{(2/x)^{-a}}a
 =2^{1-a}x^a\Bigl(\frac1{1-a}+\frac1a\Bigr).
\]

We first compute the integral with $e^{\mathrm isx}$ replaced by $e^{-ws}$, where $\Re w>0$, and then let $w$ tend to $-\mathrm ix$.  For $\Re w\ge0$ and $s\ge0$ we have $|e^{-ws}|\le1$, so $|1-e^{-ws}|=|\int_0^sw\,e^{-wr}\dd r|\le|w|s$, and also $|1-e^{-ws}|\le2$.  Hence $|s^{-q-1}(1-e^{-ws})|\le s^{-a-1}\min\{|w|s,2\}$, which is integrable by the computation above with $|w|$ in place of $x$.  Powers of a complex $w$ off the half-line $(-\infty,0]$ are principal values, $w^p=e^{p\ln w}$ with $\ln w=\ln|w|+\mathrm i\arg w$ and $-\pi<\arg w<\pi$ (\cref{fact:log}).  For $\Re w>0$, the Gamma integral of \cref{fact:gamma} with $\nu=1-q$ gives
\[
 \int_0^\infty s^{-q}e^{-ws}\dd s=\Gamma(1-q)\,w^{q-1},
\]
since $\Re(1-q)=1-a>0$.  Since $q\ne0$, the function $-s^{-q}/q$ has derivative $s^{-q-1}$.  Integrating by parts on $[\delta,R]$ with $0<\delta<R$ gives
\[
 \int_\delta^Rs^{-q-1}\bigl(1-e^{-ws}\bigr)\dd s
 =\Bigl[-\frac{s^{-q}}q\bigl(1-e^{-ws}\bigr)\Bigr]_{s=\delta}^{s=R}
 +\frac wq\int_\delta^Rs^{-q}e^{-ws}\dd s .
\]
By the two bounds on $|1-e^{-ws}|$, the boundary term is at most $|w|\delta^{1-a}/|q|$ at $s=\delta$ and at most $2R^{-a}/|q|$ at $s=R$, and both tend to zero because $0<a<1$.  The last integral converges absolutely, since its integrand has modulus $s^{-a}e^{-s\Re w}$.  Letting $\delta\to0$ and $R\to\infty$, for $\Re w>0$
\[
 \int_0^\infty s^{-q-1}\bigl(1-e^{-ws}\bigr)\dd s=\frac wq\,\Gamma(1-q)\,w^{q-1}=\frac{\Gamma(1-q)}q\,w^q ,
\]
because $w\,w^{q-1}=e^{\ln w}e^{(q-1)\ln w}=w^q$.

Take $w=\epsilon-\mathrm ix$ with $0<\epsilon\le1$, so that $1-e^{-ws}=1-e^{-\epsilon s}e^{\mathrm isx}$, and let $\epsilon\downarrow0$.  On the left, the integrand tends to $s^{-q-1}(1-e^{\mathrm isx})$, and its modulus is at most $s^{-a-1}\min\{(1+x)s,2\}$, since $|w|\le1+x$.  This bound is integrable, so dominated convergence (\cref{fact:dominated}), applied to the real and imaginary parts, holds along every sequence $\epsilon\downarrow0$.  On the right, the principal logarithm is analytic, hence continuous, off the half-line $(-\infty,0]$ (\cref{fact:log}).  The point $-\mathrm ix$ lies off this half-line, with $|-\mathrm ix|=x$ and $\arg(-\mathrm ix)=-\pi/2$.  Hence
\[
 (\epsilon-\mathrm ix)^q=e^{q\ln(\epsilon-\mathrm ix)}\to e^{q(\ln x-\mathrm i\pi/2)}=x^q\,e^{-\mathrm i\pi q/2}.
\]
So
\[
 \int_0^\infty s^{-q-1}\bigl(1-e^{\mathrm isx}\bigr)\dd s=\frac{\Gamma(1-q)}{q}\,x^q\,e^{-\mathrm i\pi q/2}.
\]
The number $\Gamma(1-q)$ is finite, because $\Re(1-q)>0$, and it is not zero, because the Gamma function has no zeros (\cref{fact:gamma}).  So we may multiply both sides by $q\,e^{\mathrm i\pi q/2}/\Gamma(1-q)$, which gives the claim.
\end{proof}

\subsection{Proof of \texorpdfstring{\cref{thm:exact-tv}}{the exact total variation theorem}}

\begin{proof}
We prove the three formulas in turn.

\emph{The one-epoch formula \eqref{eq:exact-tv-cf}.} For $E=1$ we have $\Lambda=L_M=1+Z/M$, so
\[
 d_{\TV}(P_M,Q_M)=\frac12\E|L_M-1|=\frac{\E|Z|}{2M}.
\]
Apply \cref{lem:abs-int} with $z=Z$ and take expectations.  The integrand $(1-\cos(uZ))/u^2$ is nonnegative, so Tonelli's theorem (\cref{fact:fubini}) lets us exchange the expectation and the integral,
\[
 \E|Z|=\frac2\pi\int_0^\infty\frac{1-\E\cos(uZ)}{u^2}\dd u .
\]
Finally, $\E\cos(uZ)=\Re\,\E e^{\mathrm iuZ}=\Re\varphi_Z(u)$.  Dividing by $2M$ gives \eqref{eq:exact-tv-cf}.  The integral converges absolutely.  For real $w$, since $\sin(v)\le v$ for $v\ge0$,
\[
 1-\cos(w)=\int_0^{|w|}\sin(v)\dd v\le\int_0^{|w|}v\dd v=\frac{w^2}2 ,
\]
so the integrand $(1-\E\cos(uZ))/u^2$ is at most $\E Z^2/2=M(e^{1/\sigma^2}-1)/2$.  It is also at most $2/u^2$.

\emph{The formula \eqref{eq:exact-tv-E} for every $E$.} Apply \cref{lem:min-int} with $x=\Lambda$, which is positive, and take expectations.  Since $\sqrt\Lambda\,\Lambda^{-\mathrm iu}=\Lambda^{1/2-\mathrm iu}$,
\[
 \E\min\{1,\Lambda\}=\frac1{2\pi}\int_{-\infty}^{\infty}\frac{\E\Lambda^{1/2-\mathrm iu}}{u^2+\frac14}\dd u .
\]
Fubini's theorem (\cref{fact:fubini}), applied to the real and the imaginary part, allows the exchange, because $|\Lambda^{1/2-\mathrm iu}|=\Lambda^{1/2}$ and
\[
 \int_{-\infty}^{\infty}\frac{\E\Lambda^{1/2}}{u^2+\frac14}\dd u=2\pi\,\E\Lambda^{1/2}\le2\pi .
\]
Here $2\arctan(2u)$ is an antiderivative of $(u^2+\frac14)^{-1}$, which gives the factor $2\pi$.  The Cauchy--Schwarz inequality (\cref{fact:holder}) gives $\E\Lambda^{1/2}=\E(\Lambda^{1/2}\cdot1)\le(\E\Lambda)^{1/2}=1$.

By the product formula for $\E\Lambda^q$ in the notation above, the numerator is $F_M(\frac12-\mathrm iu)^E$.  Since $L_M$ is positive, $\ln L_M$ is real, and
\[
 L_M^{1/2-\mathrm iu}=e^{(1/2-\mathrm iu)\ln L_M}=\overline{e^{(1/2+\mathrm iu)\ln L_M}}=\overline{L_M^{1/2+\mathrm iu}} .
\]
Taking expectations, $F_M(\frac12-\mathrm iu)=\overline{F_M(\frac12+\mathrm iu)}$.  Write
\[
 G(u)=F_M\bigl(\tfrac12+\mathrm iu\bigr)^E .
\]
The numerator is then $\overline{G(u)}$, and also $G(-u)=\overline{G(u)}$.  Split the integral at zero and replace $u$ by $-u$ on the negative half,
\[
 \int_{-\infty}^{\infty}\frac{\overline{G(u)}}{u^2+\frac14}\dd u
 =\int_0^\infty\frac{\overline{G(u)}+\overline{G(-u)}}{u^2+\frac14}\dd u
 =\int_0^\infty\frac{\overline{G(u)}+G(u)}{u^2+\frac14}\dd u
 =2\int_0^\infty\frac{\Re G(u)}{u^2+\frac14}\dd u .
\]
Hence
\[
 \E\min\{1,\Lambda\}=\frac1\pi\int_0^\infty\frac{\Re G(u)}{u^2+\frac14}\dd u .
\]
Since $\frac1\pi\int_0^\infty(u^2+\frac14)^{-1}\dd u=\frac1\pi\cdot2\cdot\frac\pi2=1$, subtracting from one gives
\[
 d_{\TV}\bigl(P_M^{\otimes E},Q_M^{\otimes E}\bigr)=1-\E\min\{1,\Lambda\}
 =\frac1\pi\int_0^\infty\frac{1-\Re G(u)}{u^2+\frac14}\dd u ,
\]
which is \eqref{eq:exact-tv-E}.  The integrand is nonnegative, because
\[
 \Re G(u)\le|G(u)|=\bigl|F_M(\tfrac12+\mathrm iu)\bigr|^E\le F_M(\tfrac12)^E\le1 .
\]
Here the modulus of an expectation is at most the expectation of the modulus, so $|F_M(\frac12+\mathrm iu)|\le\E|L_M^{1/2+\mathrm iu}|=\E L_M^{1/2}=F_M(\frac12)$.  The Cauchy--Schwarz inequality (\cref{fact:holder}) gives $F_M(\frac12)=\E L_M^{1/2}\le(\E L_M)^{1/2}=1$.  The integrand is also at most $2/(u^2+\frac14)$, so the integral converges absolutely.

\emph{The one-epoch factor \eqref{eq:mellin-cf}.} Fix $q$ with $0<a=\Re q<1$.  Apply \cref{lem:pow-int} with $x=L_M$, which is positive, and take expectations,
\[
 F_M(q)=\E L_M^q=\frac{q\,e^{\mathrm i\pi q/2}}{\Gamma(1-q)}\int_0^\infty s^{-q-1}\bigl(1-\E e^{\mathrm isL_M}\bigr)\dd s .
\]
Fubini's theorem (\cref{fact:fubini}), applied to the real and the imaginary part, allows the exchange.  Indeed, by \eqref{eq:pow-dom} with $x=L_M$,
\[
 \int_0^\infty s^{-a-1}\,\E\bigl|1-e^{\mathrm isL_M}\bigr|\dd s
 \le\int_0^\infty s^{-a-1}\,\E\min\{sL_M,2\}\dd s
 \le\int_0^\infty s^{-a-1}\min\{s,2\}\dd s<\infty .
\]
The second inequality holds because a minimum is at most each of its two terms, so $\E\min\{sL_M,2\}\le\min\{s\,\E L_M,2\}=\min\{s,2\}$.  The last integral is the case $x=1$ of the integral computed in the proof of \cref{lem:pow-int}.

Since $L_M=1+Z/M$, we have $\E e^{\mathrm isL_M}=e^{\mathrm is}\,\E e^{\mathrm i(s/M)Z}=e^{\mathrm is}\varphi_Z(s/M)$.  Split
\[
 1-e^{\mathrm is}\varphi_Z(s/M)=\bigl(1-e^{\mathrm is}\bigr)+e^{\mathrm is}\bigl(1-\varphi_Z(s/M)\bigr).
\]
The integral before the split converges absolutely, by the Fubini bound above and $|1-\E e^{\mathrm isL_M}|\le\E|1-e^{\mathrm isL_M}|$.  The first part converges absolutely and contributes $1^q=1$, by \cref{lem:pow-int} with $x=1$.  The integrand of the second part is the difference of these two integrands, so its integral also converges absolutely, by the triangle inequality.  It is the integral in \eqref{eq:mellin-cf}, and this proves \eqref{eq:mellin-cf}.
\end{proof}

\subsection{Truncation bounds}

The computation cuts each integral off at a point $u_{\max}$.  The next lemma bounds the part that is cut off.

\begin{lemma}[Truncation]
\label{lem:truncation}
Let $u_{\max}>0$, and recall that $t=1/\sigma$ is the shift in the Gaussian pair \eqref{eq:one-record-pair}.
\begin{enumerate}[label=(\roman*)]
\item Let $\gamma=Et^2/(2M)$.  The part of the integral in \eqref{eq:exact-tv-E} over $u>u_{\max}$ differs from $\frac2\pi\arctan\frac1{2u_{\max}}$ by at most $e^{-\gamma u_{\max}^2}/(\pi u_{\max})$.
\item Let $C_t=2e^{t^2}/(t\sqrt{2\pi})$.  The part of the integral in \eqref{eq:exact-tv-cf} over $u>u_{\max}$ differs from $1/(\pi Mu_{\max})$ by at most $C_t^M/\{\pi M(M+1)u_{\max}^{M+1}\}$.
\end{enumerate}
\end{lemma}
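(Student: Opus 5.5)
For both parts I split the integrand into the explicit part $1/(u^2+\frac14)$ or $1/u^2$, whose tail integral is computed in closed form, and a remainder. I bound the remainder with a pointwise estimate of $|F_M(\frac12+\mathrm iu)|^E$ or $|\varphi_Z(u)|$.

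\textbf{Part (i).} The tail is
\[
\frac1\pi\int_{u_{\max}}^\infty\frac{1-\Re G(u)}{u^2+\frac14}\dd u .
\]
Since $2\arctan(2u)$ is an antiderivative of $(u^2+\frac14)^{-1}$, the part with the $1$ equals $\frac1\pi(\pi-2\arctan(2u_{\max}))=\frac2\pi\arctan\frac1{2u_{\max}}$. The difference is therefore at most $\frac1\pi\int_{u_{\max}}^\infty |F_M(\frac12+\mathrm iu)|^E/u^2\dd u$, after dropping the $\frac14$ in the denominator.

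It remains to show $|F_M(\frac12+\mathrm iu)|\le e^{-t^2u^2/(2M)}$, which gives $|G(u)|\le e^{-\gamma u^2}$. I would get this by conditioning. Write $\ln L_M=\ln\bigl(\frac1M\sum_j e^{tX_j-t^2/2}\bigr)$ and decompose $X=\bar X\mathbf 1+(X-\bar X\mathbf 1)$, where $\bar X\sim N(0,1/M)$ is independent of the residual. Shifting all $X_j$ by $c$ multiplies $L_M$ by $e^{tc}$, so $\ln L_M=t\bar X+R$ with $R$ a function of the residual only. Then
\[
F_M(\tfrac12+\mathrm iu)=\E\bigl[e^{(\frac12+\mathrm iu)t\bar X}\bigr]\,\E\bigl[e^{(\frac12+\mathrm iu)R}\bigr].
\]
The first factor has modulus $e^{t^2/(8M)}e^{-t^2u^2/(2M)}$. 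The second has modulus at most $\E e^{R/2}=\E L_M^{1/2}/\E e^{t\bar X/2}\le e^{-t^2/(8M)}$, using $\E L_M^{1/2}\le1$ from the proof of \cref{thm:exact-tv}. The product gives the claim.

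With $|G(u)|\le e^{-\gamma u^2}$, I bound $\int_{u_{\max}}^\infty e^{-\gamma u^2}u^{-2}\dd u\le e^{-\gamma u_{\max}^2}\int_{u_{\max}}^\infty u^{-2}\dd u=e^{-\gamma u_{\max}^2}/u_{\max}$, which gives $e^{-\gamma u_{\max}^2}/(\pi u_{\max})$. This is the stated bound.

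\textbf{Part (ii).} The $1$ contributes $\frac1{\pi M}\int_{u_{\max}}^\infty u^{-2}\dd u=1/(\pi Mu_{\max})$. The remainder is bounded by $\frac1{\pi M}\int_{u_{\max}}^\infty|\varphi_Y(u)|^Mu^{-2}\dd u$, so I need $|\varphi_Y(u)|\le C_t/u$.

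I would obtain this by integrating by parts in $\varphi_Y(u)=\int_0^\infty e^{\mathrm iuy}f_Y(y)\dd y$, where $f_Y$ is the lognormal density, which is unimodal and vanishes at $0$ and $\infty$. This gives
\[
|\varphi_Y(u)|\le\frac1u\int|f_Y'|=\frac{2\max f_Y}{u},
\]
since a unimodal density has total variation twice its maximum.

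The maximum of the lognormal density $f_Y(y)=\frac{1}{yt\sqrt{2\pi}}e^{-(\ln y+t^2/2)^2/(2t^2)}$ is attained at the mode $y=e^{-3t^2/2}$. Its value there is $e^{3t^2/2}e^{-t^2/2}/(t\sqrt{2\pi})=e^{t^2}/(t\sqrt{2\pi})$, so $2\max f_Y=C_t$ exactly.

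Then $\int_{u_{\max}}^\infty C_t^M u^{-M-2}\dd u=C_t^M/\{(M+1)u_{\max}^{M+1}\}$, which gives the stated bound.

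\textbf{Main obstacle.} The main obstacle is the Gaussian decay bound on $|F_M(\frac12+\mathrm iu)|$ in part (i). The independence of $\bar X$ and the residual, and the scaling of $L_M$ under a common shift, must be checked carefully, as must the use of $\E L_M^{1/2}\le1$.
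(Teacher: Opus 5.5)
Your proposal is correct and follows the paper's proof essentially step for step: the same split of $\ln L_M$ into $t\bar X$ plus a function of the residual with the Gaussian tail in (i), and the same integration by parts with the unimodal lognormal density giving $|\varphi_Y(u)|\le C_t/u$ in (ii). The one minor difference is in (i). You pull out $e^{t\bar X}$ and use $\E L_M^{1/2}\le1$. The paper instead normalizes the residual factor $A$ to mean one and uses $\E\sqrt A\le1$, which gives the slightly stronger $|F_M(\frac12+\mathrm iu)|\le e^{-(u^2+1/4)t^2/(2M)}$ before it is weakened to the same bound you obtain.
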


\begin{proof}
(i) The Gaussian decay comes from the mean of the coordinates, which contributes an independent Gaussian term to $\ln L_M$.  Let $\bar X=M^{-1}\sum_jX_j$ be this mean.  Splitting each exponent,
\[
 tX_j-\frac{t^2}2=\Bigl(t\bar X-\frac{t^2}{2M}\Bigr)+\Bigl(t(X_j-\bar X)-\frac{(M-1)t^2}{2M}\Bigr),
\]
where the constants add up because $\frac1M+\frac{M-1}M=1$.  So $L_M=e^{t\bar X-t^2/(2M)}A$ with
\[
 A=\frac1M\sum_{j=1}^M\exp\Bigl(t(X_j-\bar X)-\frac{(M-1)t^2}{2M}\Bigr).
\]
The mean $\bar X$ is independent of $A$.  To see this, let $H$ be an orthogonal $M\times M$ matrix whose first row is $(1,\dots,1)/\sqrt M$.  Such an $H$ exists because every unit vector extends to an orthonormal basis.  Let $W=HX$ with $X=(X_1,\dots,X_M)^\top$.  As a linear image of a vector of independent standard Gaussians, $W$ has the multivariate normal distribution with covariance matrix $HH^\top=I$, which by \cref{fact:gaussian-vector} has the density
\[
 (2\pi)^{-M/2}e^{-\lVert w\rVert^2/2}=\prod_{j=1}^M\frac{e^{-w_j^2/2}}{\sqrt{2\pi}} .
\]
By Tonelli's theorem (\cref{fact:fubini}) the distribution function of $W$ is then the product of $M$ standard normal distribution functions, so $W_1,\dots,W_M$ are independent standard Gaussians (\cref{fact:indep-df}).  The first column of $H^\top$ is $(1,\dots,1)^\top/\sqrt M$, so $W_1=\sqrt M\,\bar X$ and
\[
 (X_1-\bar X,\dots,X_M-\bar X)^\top
 =H^\top W-\frac{W_1}{\sqrt M}(1,\dots,1)^\top
 =H^\top(0,W_2,\dots,W_M)^\top .
\]
Thus $\bar X$ is a function of $W_1$ and $A$ is a function of $W_2,\dots,W_M$, so they are independent (\cref{fact:indep-functions}).  Also $\bar X=W_1/\sqrt M\sim N(0,1/M)$.

Next, let $V\sim N(0,v)$ and $q=a+\mathrm iu$.  Completing the square,
\[
 e^{ay}\,\frac{e^{-y^2/(2v)}}{\sqrt{2\pi v}}=e^{a^2v/2}\,\frac{e^{-(y-av)^2/(2v)}}{\sqrt{2\pi v}} ,
\]
so $\E e^{aV}g(V)=e^{a^2v/2}\,\E g(V+av)$ for bounded $g$.  With $g(y)=e^{\mathrm iuy}$ and the characteristic function $e^{-u^2v/2}$ of $V$ (\cref{fact:chf}),
\[
 \E e^{qV}=e^{a^2v/2}\,e^{\mathrm iuav}\,e^{-u^2v/2}=e^{q^2v/2}.
\]
With $V=t\bar X$ and $v=t^2/M$ this gives $\E e^{qt\bar X}=e^{q^2t^2/(2M)}$.  At $q=1$ this gives $\E e^{t\bar X-t^2/(2M)}=1$.  Since $\bar X$ is independent of $A$ and both factors of $L_M=e^{t\bar X-t^2/(2M)}A$ are positive, $1=\E L_M=\E e^{t\bar X-t^2/(2M)}\,\E A=\E A$ by the product rule (\cref{fact:product}), and the Cauchy--Schwarz inequality (\cref{fact:holder}) gives $\E\sqrt A\le(\E A)^{1/2}=1$.  We need only $q=\frac12+\mathrm iu$.  Both factors of $L_M^q=e^{q(t\bar X-t^2/(2M))}A^q$ are then integrable, since $|e^{qt\bar X}|=e^{t\bar X/2}$ has mean $e^{t^2/(8M)}$ and $|A^q|=\sqrt A$ has mean at most $1$.  Since $\bar X$ is independent of $A$, the product rule for expectations, extended to complex factors as in the notation above, gives
\[
 F_M(q)=e^{-qt^2/(2M)}\,\E e^{qt\bar X}\,\E A^q=e^{q(q-1)t^2/(2M)}\,\E A^q .
\]
For $q=\frac12+\mathrm iu$ we have $q(q-1)=-(u^2+\frac14)$ and $|\E A^q|\le\E\sqrt A\le1$, so
\[
 |F_M(\tfrac12+\mathrm iu)|\le e^{-(u^2+1/4)t^2/(2M)}\le e^{-t^2u^2/(2M)} .
\]
For $G(u)=F_M(\frac12+\mathrm iu)^E$ as in the proof of \cref{thm:exact-tv} this gives
\[
 |G(u)|\le e^{-Et^2u^2/(2M)}=e^{-\gamma u^2}.
\]
The part of \eqref{eq:exact-tv-E} over $u>u_{\max}$ is
\[
 \frac1\pi\int_{u_{\max}}^\infty\frac{\dd u}{u^2+\frac14}
 -\frac1\pi\int_{u_{\max}}^\infty\frac{\Re G(u)}{u^2+\frac14}\dd u .
\]
The first term is
\[
 \frac1\pi\int_{u_{\max}}^\infty\frac{\dd u}{u^2+\frac14}
 =\frac2\pi\Bigl(\frac\pi2-\arctan(2u_{\max})\Bigr)
 =\frac2\pi\arctan\frac1{2u_{\max}} ,
\]
because $2\arctan(2u)$ is an antiderivative of the integrand, and $\arctan y+\arctan(1/y)=\pi/2$ for $y>0$.  The last identity holds because the left side has derivative $\frac1{1+y^2}-\frac{1/y^2}{1+1/y^2}=0$ and equals $\pi/2$ at $y=1$.  In the second term, $|\Re G(u)|\le e^{-\gamma u^2}\le e^{-\gamma u_{\max}^2}$ and $(u^2+\frac14)^{-1}\le u^{-2}$, so it is at most
\[
 \frac1\pi\,e^{-\gamma u_{\max}^2}\int_{u_{\max}}^\infty\frac{\dd u}{u^2}=\frac{e^{-\gamma u_{\max}^2}}{\pi u_{\max}}
\]
in absolute value.

(ii) As in \eqref{eq:one-epoch-lr} and \cref{sec:exact}, $Y=e^{tX-t^2/2}$ with $X\sim N(0,1)$, so $\ln Y=tX-t^2/2\sim N(-t^2/2,t^2)$ and, since $t>0$, $\Pr(Y\le y)=\Pr\bigl(X\le(\ln y+t^2/2)/t\bigr)=\Phi\bigl((\ln y+t^2/2)/t\bigr)$ for $y>0$.  Differentiating in $y$ gives the lognormal density of $Y$,
\[
 f_t(y)=\frac1{yt\sqrt{2\pi}}\exp\Bigl(-\frac{(\ln y+t^2/2)^2}{2t^2}\Bigr),\qquad y>0 .
\]
It tends to zero at both ends of $(0,\infty)$, because as $|\ln y|\to\infty$ the term quadratic in $\ln y$ in the exponent outweighs the factor $1/y=e^{-\ln y}$.  Its logarithmic derivative is
\[
 \frac{f_t'(y)}{f_t(y)}=-\frac1y-\frac{\ln y+t^2/2}{t^2y}=-\frac{\ln y+3t^2/2}{t^2y},
\]
which is positive for $y<e^{-3t^2/2}$ and negative for $y>e^{-3t^2/2}$.  So $f_t$ rises to a single maximum at the mode $y_*=e^{-3t^2/2}$ and then falls.  At the mode $\ln y_*+t^2/2=-t^2$, so
\[
 f_t(y_*)=\frac{e^{3t^2/2}\,e^{-t^2/2}}{t\sqrt{2\pi}}=\frac{e^{t^2}}{t\sqrt{2\pi}} .
\]
Since $f_t$ increases from $0$ to $f_t(y_*)$ and then decreases to $0$,
\[
 \int_0^\infty|f_t'(y)|\dd y=\int_0^{y_*}f_t'(y)\dd y-\int_{y_*}^\infty f_t'(y)\dd y=2f_t(y_*)=C_t .
\]
For $u>0$, integration by parts on $[\delta,R]$ gives
\[
 \int_\delta^Re^{\mathrm iuy}f_t(y)\dd y=\Bigl[\frac{e^{\mathrm iuy}f_t(y)}{\mathrm iu}\Bigr]_{y=\delta}^{y=R}
 -\frac1{\mathrm iu}\int_\delta^Re^{\mathrm iuy}f_t'(y)\dd y .
\]
The boundary term tends to zero as $\delta\to0$ and $R\to\infty$, because $|e^{\mathrm iuy}|=1$ and $f_t$ tends to zero at both ends.  Since $f_t'$ is integrable,
\[
 \varphi_Y(u)=\int_0^\infty e^{\mathrm iuy}f_t(y)\dd y=-\frac1{\mathrm iu}\int_0^\infty e^{\mathrm iuy}f_t'(y)\dd y ,
 \qquad
 |\varphi_Y(u)|\le\frac1u\int_0^\infty|f_t'(y)|\dd y=\frac{C_t}u .
\]
Hence $|\varphi_Z(u)|=|e^{-\mathrm iu}\varphi_Y(u)|^M\le(C_t/u)^M$.  The part of \eqref{eq:exact-tv-cf} over $u>u_{\max}$ is
\[
 \frac1{\pi M}\int_{u_{\max}}^\infty\frac{\dd u}{u^2}
 -\frac1{\pi M}\int_{u_{\max}}^\infty\frac{\Re\varphi_Z(u)}{u^2}\dd u .
\]
The first term equals $1/(\pi Mu_{\max})$.  The second is at most
\[
 \frac{C_t^M}{\pi M}\int_{u_{\max}}^\infty u^{-M-2}\dd u=\frac{C_t^M}{\pi M(M+1)u_{\max}^{M+1}}
\]
in absolute value.
\end{proof}

\subsection{Numerical method and checks}
\label{app:exact-numerics}

\textbf{The exact $\sigma$.} With $t=1/\sigma$, the distribution $P_M^{\otimes E}=N(0,I_{EM})$ does not depend on $\sigma$, and $Q_M^{\otimes E}$ is the average of $N(t\one_S,I_{EM})$ over the allocations $S$, which take one step in each epoch.  At two values $\sigma$ and $\sigma'$, the two averages have the same weights, and the means of their components differ by $|1/\sigma-1/\sigma'|$ in $E$ coordinates.  By the triangle inequality and the convexity of total variation, the exact total variations at $\sigma$ and at $\sigma'$ differ by at most $2\Phi(\sqrt E\,|1/\sigma-1/\sigma'|/2)-1$, so the exact total variation is continuous in $\sigma$.  As in the proof of \cref{thm:fixed-rate}, the sum of all coordinates, divided by $\sqrt{EM}$, is $N(0,1)$ against $N(\sqrt{E/M}/\sigma,1)$, so the exact total variation is at least $2\Phi(\sqrt{E/M}/(2\sigma))-1$, which tends to $1$ as $\sigma\to0$.  By \cref{thm:finite-rule} it tends to $0$ as $\sigma\to\infty$. It also does not grow with $\sigma$.  Scaled by $\sigma$, the pair adds $N(0,\sigma^2)$ to every step and, with the record, shifts one step of each epoch by $1$.  So the pair at $\sigma'>\sigma$ arises from the pair at $\sigma$ by adding independent $N(0,\sigma'^2-\sigma^2)$ to every step under both distributions, which cannot increase their total variation.  Hence, for $0<\lambda<1$ the $\sigma$ that meet the target form an interval $[\sigma_\lambda,\infty)$ with $\sigma_\lambda>0$, and the exact $\sigma$ is $\sigma_\lambda$.

\textbf{One epoch.} We compute $\varphi_Y$ by Gauss--Hermite quadrature, since $Y$ is a function of one standard Gaussian.  The integrand of \eqref{eq:exact-tv-cf} changes most for $u$ of order $(M(e^{1/\sigma^2}-1))^{-1/2}$.  There the factor $e^{-\mathrm iu}\varphi_Y(u)$ is close to one, and subtracting it from one loses precision.  So we compute the difference directly,
\[
 z(u)=e^{-\mathrm iu}\varphi_Y(u)-1=\E\bigl\{e^{\mathrm iu(Y-1)}-1-\mathrm iu(Y-1)\bigr\},
\]
using a power series when $u(Y-1)$ is small.  We then form $\varphi_Z=\exp\{M\ln(1+z)\}$, which equals $(1+z)^M$ for the principal logarithm because $M$ is an integer.  When $|\Re z|+|\Im z|<\frac12$ we compute the logarithm as
\[
 \Re\ln(1+z)=\tfrac12\operatorname{log1p}\bigl(2\Re z+|z|^2\bigr),\qquad
 \Im\ln(1+z)=\operatorname{atan2}(\Im z,1+\Re z),
\]
and otherwise we compute the real part as $\ln|1+z|$ and the imaginary part as above.  Here $\operatorname{log1p}(x)=\ln(1+x)$ and $\operatorname{atan2}(y,x)$ is the angle in $(-\pi,\pi]$ of the point $(x,y)$.  The function $\operatorname{log1p}$ is accurate for small $x$, where taking the logarithm of the rounded value of $1+x$ loses most digits of $x$, and $\operatorname{atan2}$ gives the argument of $1+z$ in every quadrant.
We integrate up to $u_{\max}=\max\{13/\sqrt{M(e^{1/\sigma^2}-1)},40,4C_t\}$.  The integral over $u>u_{\max}$ is close to $1/(\pi Mu_{\max})$, and we add this term exactly.  By part~(ii) of \cref{lem:truncation}, the error of this tail term is smaller than $10^{-12}$ in every reported cell.  The quadrature error has no certified bound, and the checks below test it.

The values pass four checks.  At $M=1$, direct integration of $\frac12\E|Y-1|$ agrees with $2\Phi(t/2)-1$.  The values agree with Monte Carlo at five settings including $M=16{,}384$. They are stable across quadrature orders.  They lie between the bounds
\[
 \frac{(e^{1/\sigma^2}-1)^{3/2}}{2\{\mu_4+3(M-1)(e^{1/\sigma^2}-1)^2\}^{1/2}}\le d_{\TV}(P_M,Q_M)\le\frac12\sqrt{\frac{e^{1/\sigma^2}-1}M},
 \qquad \mu_4=\E(Y-1)^4 .
\]
The upper bound is the Cauchy--Schwarz inequality $\E|Z|\le(\E Z^2)^{1/2}$ (\cref{fact:holder}).  The lower bound is H\"older's inequality (\cref{fact:holder}) with exponents $\frac32$ and $3$ applied to $Z^2=|Z|^{2/3}|Z|^{4/3}$,
\[
 \E Z^2\le(\E|Z|)^{2/3}(\E Z^4)^{1/3},\qquad\text{so}\qquad
 \E|Z|\ge\frac{(\E Z^2)^{3/2}}{(\E Z^4)^{1/2}} ,
\]
with $\E Z^4=M^4\E(L_M-1)^4=M\mu_4+3M(M-1)(e^{1/\sigma^2}-1)^2$ from the proof of \cref{thm:fixed-rate}, where $\mu_4=\E(Y-1)^4=e^{6t^2}-4e^{3t^2}+6e^{t^2}-3$. Dividing the lower bound on $\E|Z|$ by $2M$, with $\E Z^2=M(e^{1/\sigma^2}-1)$, gives the lower bound on $d_{\TV}(P_M,Q_M)=\E|Z|/(2M)$ displayed above.

\textbf{Several epochs.} We invert the characteristic function of the standardized variable $(L_M-1)/\sqrt{\Var L_M}$ on a grid to obtain the density of $L_M$.  To damp the oscillation of $\varphi_Y$, some of the grids below evaluate it on a contour shifted into the complex plane.  For $u\ge0$ and $0<c\le\pi/(2t)$,
\[
 \varphi_Y(u)=\int_{-\infty}^{\infty}\exp\bigl(\mathrm iue^{t(x+\mathrm ic)-t^2/2}\bigr)\,
 \frac{e^{-(x+\mathrm ic)^2/2}}{\sqrt{2\pi}}\dd x ,
\]
and we take $c=\min\{1,\pi/(2t)\}$.  This shift changes the quadrature integrand without changing $\varphi_Y(u)$.  On the shifted line the first factor has modulus $\exp(-ue^{tx-t^2/2}\sin(tc))\le1$, which decays as $x\to\infty$ for $u>0$, so the integrand no longer oscillates with modulus one.

To justify the shift, note that the integrand, as a function of $\zeta=x+\mathrm iy$, is analytic in the whole plane, so its integral around the rectangle with corners $\pm R$ and $\pm R+\mathrm ic$ is zero by Cauchy's theorem (\cref{fact:cauchy}).  On the two vertical sides $0\le ty\le\pi/2$, so $|\exp(\mathrm iue^{t\zeta-t^2/2})|=\exp(-ue^{tx-t^2/2}\sin(ty))\le1$, and $|e^{-\zeta^2/2}|=e^{(y^2-x^2)/2}\le e^{(c^2-R^2)/2}$.  So each of these sides contributes at most $c\,e^{(c^2-R^2)/2}/\sqrt{2\pi}$, which tends to zero as $R\to\infty$.  The same argument applies to $z(u)$ above, whose integrand is also analytic.  On the vertical sides its first factor $e^{\mathrm iu(Y-1)}-1-\mathrm iu(Y-1)$ has modulus at most $2+|u|(1+e^{tR})$, since $|e^{\mathrm iuY}|\le1$ there, and $e^{tR}e^{-R^2/2}$ still tends to zero.

To combine the epochs, we use $\ln\Lambda=\sum_{e=1}^E\ln L_{M,e}$.  The terms are independent, so the density of their sum is the convolution of their one-epoch densities.  We tabulate the density of $\ln L_M$ on a grid with spacing $h$ and multiply it by $h$, which gives approximate probabilities of the grid cells.  We shift this vector cyclically so that $s=0$ comes first, raise its discrete Fourier transform to the power $E$, transform back and undo the shift.  By the discrete convolution theorem this gives the periodic discrete convolution of $E$ copies, and dividing it by $h$ gives an approximation to the density of $\ln\Lambda$, as in the accountant of \citet[Eq.~(4.1) and Section~5.2]{KoskelaEtAl2020}.  Integrating $(1-e^s)_+$ against it approximates $\E(1-\Lambda)_+=d_{\TV}$.  From the same one-epoch density we also evaluate \eqref{eq:exact-tv-E} directly.

\textbf{Grids.} Let $v=(\Var L_M)^{1/2}=\{(e^{1/\sigma^2}-1)/M\}^{1/2}$ and $V=(L_M-1)/v$.  We invert the characteristic function $\varphi_V$ of $V$ at the angular frequencies up to a cutoff, on equally spaced points with spacing $1/512$, and interpolate the density $f_V$ of $V$ by a cubic spline.  The density of $s=\ln L_M$ is then $e^sf_V((e^s-1)/v)/v$, which we tabulate on an equally spaced grid that covers one period $\max\{8,\,24\sqrt E\,v+2Ev^2\}$, starts at minus half of it and has an even number of points, so that $0$ is a grid point.  Negative values are set to zero, and the total mass is not renormalized.  We integrate $(1-e^s)_+$ against the computed density of $\ln\Lambda$ by the sum over the grid points times the spacing $h$, and add $h^2f(0)/12$, where $f(0)$ is the computed density at $0$.  The added term corrects the sum for the kink of $(1-e^s)_+$ at $s=0$.  We use a value only if the removed negative mass is at most $10^{-8}$, the total mass differs from one by at most $10^{-7}$, and $|\varphi_V|$ is at most $10^{-10}$ at the cutoff.  The cells of \cref{tab:exact27} in Appendix~\ref{app:numerics} and the settings $\sigma=1.737$, $5.039$ and $1.561$ of \cref{tab:attacks} use $1{,}001$ Hermite nodes on the shifted contour, the cutoff $48$, $262{,}144$ points for $V$ on $[-32,480)$ and $262{,}144$ points for $s$. Every exact $\sigma$, in \cref{tab:calib-trial} and in \cref{tab:attacks}, uses $4{,}001$ Hermite nodes on the real line, the cutoff $24$, $131{,}072$ points for $V$ on $[-32,224)$ and $131{,}072$ points for $s$.

In every cell of \cref{tab:exact27} in Appendix~\ref{app:numerics} and at these three settings, the two computations and a second grid, with $8{,}001$ Hermite nodes on the real line and the cutoff $24$, agree to within $1.1\times10^{-9}$.  At $M=1$ the formula for $\E e^{qt\bar X}$ in the proof of \cref{lem:truncation} gives $F_1(q)=\E e^{q(tX-t^2/2)}=e^{q(q-1)t^2/2}$.  Also $d_{\TV}=2\Phi(t\sqrt E/2)-1$, because the pair is then $N(0,I_E)$ against $N(t\mathbf 1,I_E)$, whose likelihood ratio depends only on $\sum_eX_e/\sqrt E$, a standard Gaussian that the target shifts by $t\sqrt E$.  With this $F_1$, formula \eqref{eq:exact-tv-E} reproduces $2\Phi(t\sqrt E/2)-1$ to $2.3\times10^{-16}$ over $24$ settings.  This checks the formula and its quadrature, not the density inversion.

All $27$ values of \cref{tab:exact27} in Appendix~\ref{app:numerics} lie between the largest lower bound and the smallest upper bound on the total variation that the privacy loss distribution accountant of \citet{FeldmanShenfeld2026} gives in the add and the remove direction.  Both directions bound the same number, because total variation is symmetric.  They also agree with independent Monte Carlo estimates from $2\times10^7$ samples.  Each estimate averages $(1-\Lambda)_+$ over independent draws under $P_M^{\otimes E}$, since $\E(1-\Lambda)_+=d_{\TV}$, and its standard error is the standard deviation of the draws divided by the square root of their number.  The squared standardized differences sum to $28.8$ over the $27$ cells.  If the computed values are correct, this sum behaves like a chi-square variable with $27$ degrees of freedom, whose mean is $27$, and a value of $28.8$ or more has probability $0.37$.  At three settings of \cref{tab:attacks}, $\sigma=1.737$, $5.039$ and $1.561$, estimates from $10^8$ samples agree with the computed values, each within one standard error.  \Cref{tab:exact27} in Appendix~\ref{app:numerics} lists the values to five decimal places.

\subsection{A certificate that the exact \texorpdfstring{$\sigma$}{sigma} meets the target}
\label{app:certificate}

The exact $\sigma$ of \cref{tab:calib-trial} comes from a numerical evaluation of \cref{thm:exact-tv}.  A second, finite computation certifies that each of the nine values meets the target $\lambda=0.10$.  It uses no quadrature, and it runs in ball arithmetic \citep{Johansson2017Arb}, which encloses every computed number in an interval that contains the exact value.  Like any certificate computed by software, it trusts that software, here FLINT $3.6.0$ \citep{FLINT} and python-flint $0.9.0$.

\begin{lemma}[Mean-preserving rounding lowers a separately concave expectation]
\label{lem:spread}
Let $Y_1,\dots,Y_n$ be nonnegative random variables, and let $g$ be a Borel measurable function from $[0,\infty)^n$ to $[0,1]$ that is concave in each coordinate when the others are fixed.  For each $i$ let $a_i$ and $b_i$ be Borel measurable functions with $0\le a_i(y)\le y\le b_i(y)$.  Given $Y_1,\dots,Y_n$, let $R_1,\dots,R_n$ be independent, with $R_i=Y_i$ if $a_i(Y_i)=b_i(Y_i)$, and otherwise $R_i=a_i(Y_i)$ with probability $(b_i(Y_i)-Y_i)/(b_i(Y_i)-a_i(Y_i))$ and $R_i=b_i(Y_i)$ with the remaining probability.  Then $\E g(R_1,\dots,R_n)\le\E g(Y_1,\dots,Y_n)$.
\end{lemma}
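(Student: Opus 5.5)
The plan is to replace one coordinate at a time and apply Jensen's inequality in that coordinate, conditionally on everything else. Define the hybrid vectors
\[
 H_k=(R_1,\dots,R_k,Y_{k+1},\dots,Y_n),\qquad k=0,\dots,n,
\]
so that $H_0=(Y_1,\dots,Y_n)$ and $H_n=(R_1,\dots,R_n)$. It suffices to show $\E g(H_k)\le\E g(H_{k-1})$ for each $k$. Since $g$ takes values in $[0,1]$, every expectation is finite and all conditional expectations are well defined. No integrability of the $Y_i$ is needed.

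For the step from $H_{k-1}$ to $H_k$, condition on the $\sigma$-field $\mathcal G$ generated by $Y_1,\dots,Y_n$ and $R_1,\dots,R_{k-1}$. By construction, $R_k$ is conditionally independent of $R_1,\dots,R_{k-1}$ given the $Y$'s. Its conditional law given $\mathcal G$ is therefore the two-point law on $\{a_k(Y_k),b_k(Y_k)\}$ with the stated weights, or the point mass at $Y_k$ when $a_k(Y_k)=b_k(Y_k)$. In either case the law has mean exactly $Y_k$:
\[
 a\,\frac{b-y}{b-a}+b\,\frac{y-a}{b-a}=y .
\]
Fix the $\mathcal G$-measurable values of the other coordinates of $H_k$. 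The map $r\mapsto g(R_1,\dots,R_{k-1},r,Y_{k+1},\dots,Y_n)$ is concave on $[0,\infty)$. For a concave function $h$ and $a\le y\le b$, the chord inequality gives
\[
 \frac{b-y}{b-a}\,h(a)+\frac{y-a}{b-a}\,h(b)\le h(y).
\]
Hence $\E[g(H_k)\mid\mathcal G]\le g(H_{k-1})$ almost surely. Taking expectations gives the step, and chaining the $n$ steps proves the lemma.

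The main obstacle is measure-theoretic bookkeeping rather than inequalities. One must make ``given $Y_1,\dots,Y_n$, let $R_i$ be independent'' precise. I would realise it explicitly with independent uniforms $V_1,\dots,V_n$, independent of the $Y$'s, and set $R_i=a_i(Y_i)$ if $V_i\le p_i(Y_i)$, where $p_i(y)=(b_i(y)-y)/(b_i(y)-a_i(y))$, and $R_i=b_i(Y_i)$ otherwise. Each $R_i$ is then a Borel function of $(Y_i,V_i)$, because $a_i$, $b_i$ and the case split are Borel. The conditional expectation is computed by integrating over $V_k$ alone with Fubini (\cref{fact:fubini}). This turns the conditional step into a pointwise inequality for each fixed value of $(Y,V_1,\dots,V_{k-1})$.

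It is also worth checking that the conditioning is legitimate when the other coordinates are themselves random. This holds because concavity is assumed in each coordinate for every fixed value of the others, so the chord inequality holds pointwise before integrating. Measurability of $g$ ensures the integrated quantities are measurable.
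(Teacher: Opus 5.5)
Your proposal is correct and is essentially the paper's proof. Both replace one coordinate at a time, apply the chord inequality to the concave section $y\mapsto g(R_1,\dots,R_{i-1},y,Y_{i+1},\dots,Y_n)$ conditionally on $Y_1,\dots,Y_n$ and $R_1,\dots,R_{i-1}$, and then take expectations and chain the $n$ inequalities; your realisation with independent uniforms and Fubini makes precise the conditioning that the paper leaves implicit.
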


\begin{proof}
For a concave function $f$ and $a\le y\le b$ with $a<b$, concavity gives $\frac{b-y}{b-a}f(a)+\frac{y-a}{b-a}f(b)\le f(y)$.  For $i=1,\dots,n$, given $Y_1,\dots,Y_n$ and $R_1,\dots,R_{i-1}$, the conditional expectation of $g(R_1,\dots,R_i,Y_{i+1},\dots,Y_n)$ is this combination for the concave function $y\mapsto g(R_1,\dots,R_{i-1},y,Y_{i+1},\dots,Y_n)$, so it is at most $g(R_1,\dots,R_{i-1},Y_i,\dots,Y_n)$.  Taking expectations and chaining the $n$ inequalities gives the lemma.
\end{proof}
The same argument, applied coordinate by coordinate to a convex function of independent random variables on a fixed rectangle, gives the upper bound of \citet[Section~5]{Madansky1959}.

Write $Y_{e,j}$ for the $Y_j$ of \eqref{eq:one-epoch-lr} in epoch $e$, so that $L_{M,e}=M^{-1}\sum_{j=1}^MY_{e,j}$, $\Lambda=\prod_{e=1}^EL_{M,e}$ and $d_{\TV}=1-\E\min\{1,\Lambda\}$ (Appendix~\ref{app:exact-tv}).  With all but one $y=Y_{e,j}$ fixed, $\min\{1,\Lambda\}=\min\{1,c(y+d)\}$ for some $c,d\ge0$, which is concave in $y$.  With all but one $\ell=L_{M,e}$ fixed, it is $\min\{1,c\ell\}$, which is concave in $\ell$.  The computation lowers $\E\min\{1,\Lambda\}$ in four steps.
\begin{enumerate}[label=(\roman*)]
\item Fix a grid spacing $h>0$.  Each $Y_{e,j}$ in $[(k-1)h,kh]$ is rounded to $(k-1)h$ or $kh$ as in \cref{lem:spread}, and its mass above a cap, rounded up to a multiple of $h$, is deleted.  The masses at the grid points have closed forms, since $\Pr(Y\le y)=\Phi((\ln y+t^2/2)/t)$ and $\E[Y\one\{Y\le y\}]=\Phi((\ln y-t^2/2)/t)$ with $t=1/\sigma$.  The computation encloses $\Phi$ with the complementary error function of FLINT \citep{Johansson2019Hypergeometric}.
\item The rounded values of one epoch sum to a multiple of $h$, and the masses of the sum are the coefficients of $(\sum_kp_kz^k)^M$ after deleting sums above $h\lceil8M/h\rceil$, where $p_k$ is the mass that step (i) puts at $kh$.
\item Each epoch value after steps (i) and (ii), the rounded sum of the epoch divided by $M$, is rounded to the endpoints of its interval on the grid $e^{k\eta}$ for a spacing $\eta>0$, again as in \cref{lem:spread}, and values outside $[e^{k_-\eta},e^{k_+\eta}]$ are deleted, where $k_-=\lfloor-1/\eta\rfloor$ and $k_+=\lceil2/\eta\rceil$.
\item The product over the epochs adds the exponents $k$, so the masses of the product are the coefficients $b_k$ of $(\sum_kr_kz^k)^E$, where $r_k$ is the mass that step (iii) puts at $e^{k\eta}$, and $\E\min\{1,\Lambda\}$ after the rounding is $\sum_kb_k\min\{1,e^{k\eta}\}$.
\end{enumerate}
Each deletion only lowers the sum, since $\min\{1,\Lambda\}\ge0$, and so does replacing every computed mass by the lower end of its ball, or by zero if that end is negative, rounded down to a multiple of $2^{-90}$.  The masses are then those of a measure of total mass at most one, which is never renormalized, and \cref{lem:spread} holds for such measures too, since its proof integrates an inequality that holds at every point.  So the lower end $O$ of the final sum satisfies $O\le\E\min\{1,\Lambda\}$, and $d_{\TV}\le1-O$.  The accountant of \citet[Section~4]{FeldmanShenfeld2026} also discretizes one step and convolves the steps of an epoch by repeated squaring.  For a pair $(P,Q)$ with likelihood ratio $L=\dd Q/\dd P$, rounding $L$ to the endpoints of its interval on a grid, as in \cref{lem:spread}, keeps $H_{e^\eps}(Q\|P)=\E_P(L-e^\eps)_+$ at every grid point $e^\eps$ and makes it linear in $e^\eps$ between grid points, as the pessimistic discretization of \citet[Section~4]{DoroshenkoEtAl2022} does.

\Cref{tab:certificate} gives the grids and $1-O$ at the exact $\sigma$ of each cell.  The bound $1-O$ is at most $0.0999999977$, so every exact $\sigma$ meets the target.  One run on these grids took $36$ to $598$ seconds on one CPU core at $100$ bits of precision.  The grids were set by hand after a first run at the same $\sigma$ on a coarser grid, which took $23$ to $433$ seconds and certified the bound in only one of the nine cells.  The two runs together took $70$ to $670$ seconds, the certification time of \cref{tab:calib-trial}.  The computation does not certify that $10^{-6}$ less noise misses the target, which rests on the numerical evaluation of Appendix~\ref{app:exact-numerics}.

\begin{table}[t]
\caption{The certificate of Appendix~\ref{app:certificate} at the exact $\sigma$ of each cell of \cref{tab:calib-trial}.  The grid spacing $h$ and the cap on $Y$ are those of step (i), and $\eta$ is the spacing of the logarithms of the epoch grid of step (iii).  The bound is $1-O$, rounded up, and every bound is below $\lambda=0.10$.  \emph{Certify} is the time of one run on these grids, and \emph{first grid} is the time of an earlier run at the same $\sigma$ on a coarser grid, which certified the bound only at $M=500$, $E=10$.  Each run used one CPU core and $100$ bits of precision.}
\label{tab:certificate}
\centering
\small
\setlength{\tabcolsep}{4pt}
\begin{tabular}{@{}rrrrrrrrr@{}}
\toprule
$M$ & $E$ & exact $\sigma$ & $h$ & $\eta$ & cap & bound on $d_{\TV}$ & certify (s) & first grid (s) \\
\midrule
$100$ & $10$ & $1.426215$ & $0.001$ & $5\times10^{-5}$ & $512$ & $0.0999999543$ & $47$ & $23$ \\
$100$ & $20$ & $1.907073$ & $0.0005$ & $3\times10^{-5}$ & $128$ & $0.0999999580$ & $36$ & $81$ \\
$100$ & $40$ & $2.610518$ & $0.0003$ & $2\times10^{-5}$ & $32$ & $0.0999999743$ & $43$ & $345$ \\
$250$ & $10$ & $1.024495$ & $0.0003$ & $1\times10^{-5}$ & $2000$ & $0.0999999977$ & $598$ & $71$ \\
$250$ & $20$ & $1.309428$ & $0.001$ & $3\times10^{-5}$ & $1024$ & $0.0999999624$ & $119$ & $108$ \\
$250$ & $40$ & $1.732809$ & $0.0004$ & $2\times10^{-5}$ & $256$ & $0.0999999826$ & $95$ & $316$ \\
$500$ & $10$ & $0.834432$ & $0.002$ & $5\times10^{-5}$ & $4000$ & $0.0999999316$ & $181$ & $170$ \\
$500$ & $20$ & $1.025964$ & $0.0015$ & $4\times10^{-5}$ & $4000$ & $0.0999999445$ & $252$ & $231$ \\
$500$ & $40$ & $1.310146$ & $0.0006$ & $2\times10^{-5}$ & $1024$ & $0.0999999769$ & $212$ & $433$ \\
\bottomrule
\end{tabular}
\end{table}

\subsection{The ratio of the bound to the exact value as \texorpdfstring{$E/M\to0$}{E/M to 0}}
\label{app:small-div}

\begin{proposition}
\label{prop:small-div}
Fix $\sigma$ and let $M\to\infty$ with positive integers $E_M$ such that $E_M/M\to0$.  Put $s_M^2=E_M(e^{1/\sigma^2}-1)/M$.  Then
\[
 d_{\TV}\bigl(P_M^{\otimes E_M},Q_M^{\otimes E_M}\bigr)=\frac{s_M}{\sqrt{2\pi}}\,\bigl(1+o(1)\bigr),
 \qquad
 U(c_{E_M})=\frac{s_M}2\,\bigl(1+o(1)\bigr),
\]
so the bound divided by the exact total variation tends to $\sqrt{\pi/2}$.
\end{proposition}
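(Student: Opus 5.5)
The bound part is elementary and I would do it first.  With $x_M=s_M^2\to0$, the proof of \cref{thm:finite-rule} gives $x_M\le c_{E_M}\le e^{x_M}-1=x_M(1+o(1))$.  So $c_{E_M}=s_M^2(1+o(1))\to0$, which puts us eventually on the branch $c\le1$ of $U$.  There $U(c_{E_M})=\frac12\sqrt{c_{E_M}}=\frac{s_M}2(1+o(1))$.  The real work is the exact total variation, and the ratio $\sqrt{\pi/2}$ then follows by dividing the two expansions.

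For the exact value I would rescale the central limit argument of \cref{lem:product-gdp}.  Put $n=E_M$ and $U_{M,e}=L_{M,e}-1$.  These are independent and centered, with $n\E U^2=s_M^2$ and, by the fourth-moment formula in the proof of \cref{thm:fixed-rate}, $n\E U^4=O_\sigma(n/M^2)$.  Since $d_{\TV}=\frac12\E|\Lambda-1|$, it suffices to show $\E|\Lambda-1|/s_M\to\E|N(0,1)|=\sqrt{2/\pi}$, because this gives $d_{\TV}=\frac{s_M}2\sqrt{2/\pi}(1+o(1))=s_M(1+o(1))/\sqrt{2\pi}$.

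\emph{Step 1, a normalized CLT.}  Apply \cref{fact:lindeberg} to $U_{M,e}/s_M$.  The variances sum to $1$.  The Lindeberg sum is at most $n\E U^4/(\xi^2s_M^4)=O(n/M^2)\cdot M^2/(n^2\kappa^2)=O(1/n)$, with $\kappa=e^{1/\sigma^2}-1$.  This tends to $0$ only if $n=E_M\to\infty$, and $n\ge1$ alone is not enough.  So I would split into cases.

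If $E_M$ stays bounded along a subsequence, I instead take $E$ fixed and use the single-epoch CLT in $M$.  Each $\sqrt M\,U_{M,e}$ is a normalized i.i.d.\ sum of $M$ terms, so by \cref{fact:clt} it tends to $N(0,\kappa)$.  Hence $(\Lambda-1)/s_M$, which equals $\sum_e U_{M,e}/s_M$ plus cross products of order $O_P(1/M)\cdot\sqrt{M/E}$, converges to $N(0,1)$.

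\emph{Step 2, the remainder.}  Write $\Lambda-1=\sum_eU_{M,e}+\mathcal R$, where $\mathcal R=\prod_e(1+U_{M,e})-1-\sum_eU_{M,e}$.  Expanding the product into a sum over subsets and using independence and centering, the terms are orthogonal, and
\[
 \E\mathcal R^2=(1+\E U^2)^n-1-n\E U^2\le e^{s_M^2}-1-s_M^2=O(s_M^4).
\]
Thus $\E\mathcal R^2/s_M^2\to0$.  This bound holds in both cases and avoids the logarithm expansion altogether.

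\emph{Step 3, from distribution to the first absolute moment.}  We have $(\Lambda-1)/s_M\Rightarrow N(0,1)$ and $\E[(\Lambda-1)^2]/s_M^2=(1+c_{E_M})/s_M^2\cdot$ (more precisely, $c_{E_M}/s_M^2\to1$), so the second moments are bounded.  Then \cref{fact:integration-limit}, with $g(x)=1+x^2$ and $h(x)=|x|$, gives $\E|\Lambda-1|/s_M\to\sqrt{2/\pi}$.

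I expect the main obstacle to be Step 1 when $E_M$ does not tend to infinity.  The Lindeberg condition then fails for the epoch-level array, and the Gaussianity has to come from the $M$ steps inside each epoch instead.  The cleanest fix is to work at the step level from the start.  Write the linear part as the double sum $\sum_{e,j}(Y_{e,j}-1)/M$, a triangular array with $nM$ terms.  Its Lindeberg sum is $nM\,\E(Y-1)^4/(M^4\xi^2s_M^4)=O(1/(nM))\to0$ in every case, and Step 2 absorbs all the higher products.
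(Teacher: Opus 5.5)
Your final route is essentially the paper's proof. It has the same $L^2$ remainder, $\E\mathcal R^2=c_{E_M}-s_M^2\le s_M^4$, a CLT for the step-level linear part $\ell=M^{-1}\sum_{e,j}(Y_{e,j}-1)$, and \cref{fact:integration-limit} with $g=1+x^2$ and $h=|x|$; note only that $\E(\Lambda-1)^2=c_{E_M}$, not $1+c_{E_M}$. The paper needs neither Lindeberg nor your epoch-level case split: $\ell/s_M$ has exactly the law of a normalized sum of $E_MM\to\infty$ i.i.d.\ copies of $Y-1$, whose distribution does not depend on $M$, so the ordinary CLT (\cref{fact:clt}) already suffices.
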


\begin{proof}
Write $E=E_M$, $s=s_M$ and $b=e^{1/\sigma^2}-1$.  Then $s\to0$, and $n=EM\to\infty$ because $E\ge1$.  Write $V_e=L_{M,e}-1$ for the $E$ independent epochs, so that $\Lambda=\prod_e(1+V_e)$.  Under $P_M$ we have $\E V_e=0$ and $\E V_e^2=\Var(L_M)=b/M$.  The linear part of $\Lambda-1$ is $\ell=\sum_eV_e$, with $\E\ell^2=Eb/M=s^2$.  We show that $\Lambda-1$ differs from $\ell$ by a remainder that is small in mean square, and that $\ell/s$ is a normalized sum of independent copies of one variable.

\emph{The remainder.}  For each $e$, by independence and the product rule (\cref{fact:product}),
\[
 \E[(\Lambda-1)V_e]=\E[\Lambda V_e]=\E[(1+V_e)V_e]\prod_{e'\ne e}\E(1+V_{e'})=\E V_e^2=b/M ,
\]
so $\E[(\Lambda-1)\ell]=Eb/M=s^2$.  With $\E(\Lambda-1)^2=c_E$ from \eqref{eq:tensorize},
\[
 \E(\Lambda-1-\ell)^2=\E(\Lambda-1)^2-2\E[(\Lambda-1)\ell]+\E\ell^2=c_E-s^2 .
\]
Since $1+x\le e^x$, we have $c_E\le e^{s^2}-1$.  For $0\le y\le1$,
\[
 e^y-1-y=\sum_{k\ge2}\frac{y^k}{k!}\le y^2\sum_{k\ge2}\frac1{k!}=(e-2)\,y^2\le y^2 .
\]
With $y=s^2$ this gives $c_E-s^2\le s^4$ once $s\le1$.  Since $\bigl||x|-|y|\bigr|\le|x-y|$, the Cauchy--Schwarz inequality (\cref{fact:holder}) then gives
\[
 \bigl|\E|\Lambda-1|-\E|\ell|\bigr|\le\E|\Lambda-1-\ell|\le(c_E-s^2)^{1/2}\le s^2 .
\]

\emph{The linear part.}  Since $\ell=M^{-1}\sum_{e=1}^E\sum_{j=1}^M(Y_{e,j}-1)$, it is $1/M$ times a sum of $n=EM$ independent copies of $Y-1$, whose distribution does not depend on $M$.  Let $S_n$ be the sum of $n$ independent copies of $Y-1$.  Since $s^2=nb/M^2$, the variable $\ell/s$ has the distribution of $S_n/\sqrt{nb}$.  The central limit theorem (\cref{fact:clt}) gives $S_n/\sqrt{nb}\Rightarrow N(0,1)$ as $n\to\infty$.  These variables have second moment $1$, so \cref{fact:integration-limit} with $g(x)=1+x^2$ and $h(x)=|x|$ gives $\E|S_n|/\sqrt{nb}\to\E|N(0,1)|=2\int_0^\infty y\,e^{-y^2/2}\dd y/\sqrt{2\pi}=\sqrt{2/\pi}$.  The conditions of \cref{fact:integration-limit} hold because $g$ and $h$ are continuous, $g>0$, $h(x)/g(x)\to0$ as $|x|\to\infty$, and the means of $g$ equal $2$.  Since $n=EM\to\infty$, also $\E|\ell|/s\to\sqrt{2/\pi}$.  By the remainder bound, $\E|\Lambda-1|/s$ differs from $\E|\ell|/s$ by at most $s\to0$.  So
\[
 \frac{\E|\Lambda-1|}{s}\to\sqrt{2/\pi},
 \qquad
 d_{\TV}=\frac12\E|\Lambda-1|=\frac{s}{\sqrt{2\pi}}\bigl(1+o(1)\bigr).
\]

\emph{The bound.}  Since $c_E\le e^{s^2}-1\to0$, eventually $c_E\le1$ and $U(c_E)=\frac12\sqrt{c_E}=\frac s2\sqrt{c_E/s^2}=\frac s2(1+o(1))$, because $0\le c_E-s^2\le s^4$ by the remainder step, so $1\le c_E/s^2\le1+s^2$.  The ratio $U(c_E)/d_{\TV}$ therefore tends to $\frac12\sqrt{2\pi}=\sqrt{\pi/2}$.
\end{proof}

For intuition, $\Lambda-1$ is close to $N(0,s^2)$, and half the mean absolute value of $N(0,s^2)$ is $s/\sqrt{2\pi}$.  This is also the leading term of $2\Phi(s/2)-1=s/\sqrt{2\pi}+O(s^3)$, the total variation between $N(0,1)$ and $N(s,1)$.  The bound $s/2$ is larger by the factor $\sqrt{\pi/2}$.  For $E=1$ the proposition gives the limit of the single-epoch ratios computed after \cref{thm:exact-tv}.

\section{One Chi-Square Value Does Not Determine the Distance}
\label{app:chi2-underdetermines}

Fix $c>0$, take $E=1$, and choose $t_M^2=\ln(1+cM)$, so $(e^{t_M^2}-1)/M=c$ for every $M$.  Write $Y_j=e^{t_MX_j-t_M^2/2}$ and $L_M=M^{-1}\sum_jY_j$.  The chi-square divergence $c=\Var L_M$ of the pair is carried by rare large values of $Y_j$.  We show that these values carry a vanishing part of the mean of $L_M$, and that without them $L_M$ concentrates at its mean, so that $\E|L_M-1|\to0$.

Fix $a\in(1,2)$ and write $L_M=R_M+D_M$, where $R_M=M^{-1}\sum_jY_j\one\{X_j\le at_M\}$ retains the smaller values and $D_M=M^{-1}\sum_jY_j\one\{X_j>at_M\}$ collects the rare large ones.  We will show that $\Var R_M\to0$ and $\E D_M\to0$.

Completing the square gives $\E[e^{bX-b^2/2}g(X)]=\E g(X+b)$ for $X\sim N(0,1)$.  With $b=t_M$, this gives $\E D_M=\bar\Phi((a-1)t_M)\to0$, where $\bar\Phi=1-\Phi$.  With $b=2t_M$ and $Y_j^2=e^{t_M^2}e^{2t_MX_j-2t_M^2}$, independence gives $\Var R_M\le M^{-1}\E[Y_j^2\one\{X_j\le at_M\}]=M^{-1}e^{t_M^2}\Phi((a-2)t_M)\to0$, since $M^{-1}e^{t_M^2}=c+1/M$.

Since $\E R_M+\E D_M=\E L_M=1$ and $|D_M-\E D_M|\le D_M+\E D_M$, the Cauchy--Schwarz inequality (\cref{fact:holder}) gives
\[
 \E|L_M-1|\le\E|R_M-\E R_M|+2\E D_M\le(\Var R_M)^{1/2}+2\E D_M\to0 .
\]
So $L_M\to1$ in $L^1$ and $d_{\TV}(P_M,Q_M)=\frac12\E|L_M-1|\to0$ while $\chi^2$ stays at $c$.  At $M=1$ the same $c$ gives $d_{\TV}=2\Phi(\sqrt{\ln(1+c)}/2)-1>0$.  So no function of $c$ alone is within a fixed factor of the distance for all $M$ and $\sigma$ with $\chi^2=c$.  In particular the ratio $U(c)/d_{\TV}(P_M,Q_M)$ is unbounded along this sequence.  Numerically, at $c=0.04$ the bound is $0.100$ at every $M$, while the exact value is $0.0787$ at $M=7$ ($\sigma=2.013$) and $0.0586$ at $M=5{,}360$ ($\sigma=0.431$).

\section{Experimental Details}
\label{app:experiments}

\textbf{Training runs (\cref{sec:calib-trial,sec:attacks}).} ResNet-18 \citep{HeEtAl2016} with group normalization \citep{WuHe2018}, sixteen groups and no batch normalization, on CIFAR-10 \citep{Krizhevsky2009}.  The padded size is $N_0=45{,}000$, the clipping norm $C=0.3$, and $B_0=N_0/M$.  The loss is the cross-entropy of each record, and the optimizer is SGD with momentum $0.9$ and no weight decay.  The learning rate follows $\mathrm{lr}(B_0)=0.05\,(B_0/180)^{0.5}$, where $180$ is $B_0$ at $M=250$.  We chose this rule, $C$ and the weight decay before any trial on a public split of $5{,}000$ training images, $4{,}000$ to train and $1{,}000$ to score, at $E=20$ and $\lambda=0.20$, among $72$ candidates with base rate $0.05$, $0.1$, $0.25$ or $0.5$, exponent $0$, $0.5$ or $1$ in $\mathrm{lr}(B_0)=\text{base}\cdot(B_0/180)^{\text{exponent}}$, $C\in\{0.1,0.3,1\}$ and weight decay $0$ or $5\times10^{-4}$.  The rule gives the initial rate, which is multiplied by $\frac12(1+\cos(\pi k/(EM)))$ at update $k$ of the $EM$ updates.  Random crops and flips are applied.  Noise and allocations come from deterministic pseudorandom streams in single precision, so the runs implement the Gaussian mechanism of \cref{alg:dpsgd} up to pseudorandom number generation and floating-point rounding.  In the comparison of \cref{tab:calib-trial}, the three runs of a cell and seed of every model share their initialization, their allocation and the Gaussian draws behind their noise, so only $\sigma$ differs.  The three runs of a cell and seed of an image model ran one after another on one GPU, an RTX A5000 for CIFAR-10 and an RTX A4000 for CIFAR-100.  The accountant $\sigma$ of each cell is the smallest $\sigma$ the accountant certified in a binary search that stops at relative width $2\times10^{-3}$.  Its upper bound on the total variation, the larger of the add and remove directions, is at most $\lambda$.  The accountant $\sigma$ is $0.24\%$ to $0.52\%$ above the exact $\sigma$ of \cref{thm:exact-tv}.

\textbf{The exact $\sigma$ (\cref{sec:calib-trial}).} As \citet{BalleWang2018} do for the Gaussian mechanism, we solve the exact condition numerically.  We find the exact $\sigma$ of a cell by Brent's method on the total variation of \cref{thm:exact-tv}, computed as in Appendix~\ref{app:exact-numerics}, and round it up to a multiple of $10^{-6}$.  At the nine exact values the total variation is $0.099999917$ to $0.099999998$, and $0.100000008$ to $0.100000146$ at the value $10^{-6}$ lower.  We timed this search on one core of the machine that ran the accountant's search for the same cell, while the accountant's search set no limit on threads.  To time the accountant without a search, we also ran it once at its trained $\sigma$ on one core, in the add direction, which suffices because at $\eps=0$ either direction gives the total variation.  We tried the numerical accuracies $0.003$, $0.0025$ and $0.002$ in turn and stopped at the first that certified the $\sigma$.  \Cref{tab:calib-timing} gives the time of the exact search and of this run in each cell.  For one run the numerical accuracy is the library's loss discretization, and for a search it is the width that the accountant's interval on $\eps$ must reach.  The search that gave the trained $\sigma$ uses $0.02$ and repeats an undecided $\sigma$ at $0.005$.  \Cref{tab:calib-eps} gives the $\eps$ of each $\sigma$ at $\delta=10^{-5}$.

\textbf{The accountant's fast Fourier route.} PLD\_accounting $0.5.0$ convolves the steps of an epoch in two ways, which it can also combine \citep[Appendix~F]{FeldmanShenfeld2026}.  Its default, used above, convolves directly on a grid evenly spaced in the privacy loss.  The other uses the fast Fourier transform (FFT) on a grid evenly spaced in the exponential of the privacy loss.  Its bounds are valid, but \citet[Appendix~F]{FeldmanShenfeld2026} give no guarantee on how close they are.  The library caps the grid of each step of an epoch at about $1/M$ of a fixed budget of points, and we set this budget to $4\times10^8$, $400$ times its default.  We searched with this route as in the coarse search, at loss discretization $10^{-5}$ and tail truncation $10^{-8}$, on the machine that ran the accountant's search for the same cell, with one thread.  The search starts at the closed form, steps down by the factor $0.75$ to the first $\sigma$ that the route does not certify, and then bisects and recertifies.  At each $E$ the noise of its $\sigma$ above the exact one grew with $M$, from at most $0.28\%$ at $M=100$ to $1.94\%$ to $4.07\%$ at $M=500$ (\cref{tab:calib-timing}).  With a budget of $10^8$ points it grew in the same way, from at most $0.28\%$ to $6.54\%$ to $11.70\%$, so a smaller grid per step costs noise.  Run once at the accountant's trained $\sigma$ right after its search, the route took $26$ to $60$ seconds and certified that $\sigma$ in four of the nine cells.  \Cref{tab:accountant-regimes} extends the comparison to $M$ up to $10^5$ and $E$ up to $100$.  There the fast Fourier route certified $1.005$ times the exact $\sigma$ only at $M=100$, and with $1{,}000$ or more steps per epoch it did not certify even $1.02$ times the exact $\sigma$.  One run of the default route in both directions certified $1.02$ times the exact $\sigma$ in seven of the nine settings, in $7$ seconds to $2.0$ hours, and did not finish within $4$ hours at $E=100$ with $M=1{,}000$ and $M=10{,}000$.  The exact $\sigma$ took $6.5$ to $42.9$ seconds.  The growth with $M$ of the noise that the fast Fourier route certifies, at a fixed target and grid budget, fits the explanation of \citet[Appendix~F]{FeldmanShenfeld2026}, that the route works on the exponential of the privacy loss, whose range is much wider.  A larger $M$ needs a smaller $\sigma$, which widens this range further, and it also leaves fewer grid points for each step.  Our searches with the accountant certify a $\sigma$ only when the upper bounds of the add and the remove direction are both at most $\lambda$, although at $\eps=0$ either bound alone bounds the total variation.  Accepting either bound shortened the fast Fourier search by $7\%$ to $27\%$ and the coarse search by $21\%$ to $37\%$, gave the same $\sigma$ on the fast Fourier route at $M\le250$, and lowered the noise by $0.11$ to $0.24$ percentage points on that route at $M=500$ and by $0.11$ to $0.37$ percentage points in the coarse search.  The searches stop when the bracket around $\sigma$ is narrower than $2\times10^{-3}$ of its upper end.  At $M=250$ and $E=10$, where the fast Fourier search certified the least noise among the searches faster than the exact $\sigma$, accepting either bound and stopping at $10^{-3}$ and at $10^{-4}$ certified $0.65\%$ and $0.62\%$ more noise than the exact $\sigma$ in $326$ and $456$ seconds, against $680$ seconds to find and certify the exact $\sigma$.  The route did not certify, in either direction, the last $\sigma$ that the finer search rejected, $0.6155\%$ above the exact one.

\begin{table*}[t]
\caption{Time to find $\sigma$ in each cell of \cref{tab:calib-trial}.  The first group ran on the machine that ran the accountant's search for that cell.  \emph{Exact} is the root search on \cref{thm:exact-tv}, \emph{accountant} is the search that gave the trained $\sigma$, \emph{coarse} is the same search at numerical accuracy $0.05$ without escalation, and \emph{FFT} is the search on the accountant's fast Fourier route.  \emph{Coarse} and \emph{FFT} are each followed by the noise of their $\sigma$ above the exact $\sigma$.  The second group is one run of the accountant with its default route in one direction at its trained $\sigma$, with no search, at the largest loss discretization that certifies that $\sigma$.  Every run except the accountant's search used one thread.  Appendix~\ref{app:experiments} describes the searches and defines the numerical accuracy and the loss discretization.}
\label{tab:calib-timing}
\centering
\small
\setlength{\tabcolsep}{2.3pt}
\begin{tabular}{@{}rrrrrrrrrr@{}}
\toprule
& & \multicolumn{6}{c}{search} & \multicolumn{2}{c}{one run of the accountant} \\
\cmidrule(lr){3-8}\cmidrule(lr){9-10}
$M$ & $E$ & exact (s) & accountant (h) & coarse (min) & \begin{tabular}[b]{@{}c@{}}coarse\\above exact\end{tabular} & FFT (min) & \begin{tabular}[b]{@{}c@{}}FFT\\above exact\end{tabular} & time (min) & \begin{tabular}[b]{@{}c@{}}loss\\discretization\end{tabular} \\
\midrule
$100$ & $10$ & $9.8$ & $5.5$ & $4.5$ & $4.31\%$ & $4.9$ & $0.26\%$ & $6.5$ & $0.003$ \\
$100$ & $20$ & $9.4$ & $12.6$ & $9.5$ & $4.75\%$ & $6.5$ & $0.27\%$ & $19.6$ & $0.0025$ \\
$100$ & $40$ & $8.8$ & $27.5$ & $20.4$ & $5.04\%$ & $10.6$ & $0.28\%$ & $40.2$ & $0.0025$ \\
$250$ & $10$ & $10.1$ & $19.0$ & $18.3$ & $2.90\%$ & $6.5$ & $0.71\%$ & $36.1$ & $0.003$ \\
$250$ & $20$ & $10.4$ & $45.4$ & $42.4$ & $3.39\%$ & $8.1$ & $0.44\%$ & $131.7$ & $0.003$ \\
$250$ & $40$ & $9.5$ & $103.3$ & $93.0$ & $3.73\%$ & $10.9$ & $0.60\%$ & $180.4$ & $0.003$ \\
$500$ & $10$ & $9.6$ & $40.5$ & $38.9$ & $2.38\%$ & $5.8$ & $4.07\%$ & $106.2$ & $0.0025$ \\
$500$ & $20$ & $10.1$ & $105.3$ & $99.7$ & $2.90\%$ & $7.4$ & $2.90\%$ & $160.7$ & $0.003$ \\
$500$ & $40$ & $10.4$ & $248.9$ & $231.7$ & $3.36\%$ & $10.7$ & $1.94\%$ & $425.4$ & $0.003$ \\
\bottomrule
\end{tabular}
\end{table*}

\begin{table}[t]
\caption{The $\eps$ at $\delta=10^{-5}$ of the three values of $\sigma$ in each cell of \cref{tab:calib-trial}.  The closed-form $\sigma$ is the largest of the three and the exact $\sigma$ the smallest, so in each cell the closed form has the smallest $\eps$ and the exact $\sigma$ the largest.  Each entry is the upper end of the interval that the privacy loss distribution accountant of \citet{FeldmanShenfeld2026} gives at numerical accuracy $0.01$, in the larger of the add and the remove direction.}
\label{tab:calib-eps}
\centering
\small
\begin{tabular}{@{}rrccc@{}}
\toprule
$M$ & $E$ & closed & exact & accountant \\
\midrule
$100$ & $10$ & $0.727$ & $0.949$ & $0.942$ \\
$100$ & $20$ & $0.724$ & $0.940$ & $0.934$ \\
$100$ & $40$ & $0.723$ & $0.939$ & $0.933$ \\
$250$ & $10$ & $0.733$ & $0.970$ & $0.963$ \\
$250$ & $20$ & $0.725$ & $0.944$ & $0.938$ \\
$250$ & $40$ & $0.724$ & $0.939$ & $0.934$ \\
$500$ & $10$ & $0.745$ & $1.022$ & $1.015$ \\
$500$ & $20$ & $0.728$ & $0.952$ & $0.946$ \\
$500$ & $40$ & $0.724$ & $0.941$ & $0.935$ \\
\bottomrule
\end{tabular}
\end{table}

\begin{table}[t]
\caption{The accountant of \citet{FeldmanShenfeld2026} with its two routes against the exact $\sigma$ over larger settings, on one CPU core.  The fast Fourier column gives the smallest of $1.005$ and $1.02$ times the exact $\sigma$ and the closed form \eqref{eq:inverse-sigma} that the fast Fourier route certifies at grid budget $4\times10^8$, loss discretization $10^{-5}$ and tail truncation $10^{-8}$, and \emph{none} means that it certifies none of the three.  The default column times one run of the default route in both directions, the library's default, at $1.02$ times the exact $\sigma$, loss discretization $0.01$ and tail truncation $10^{-8}$, which certifies that $\sigma$ in every setting where it finished within $4$ hours.  At $E=1$ the exact $\sigma$ is found with \eqref{eq:exact-tv-cf} as in \cref{tab:methods} in Appendix~\ref{app:numerics}, and otherwise as in \cref{tab:calib-trial}.  Each setting ran on one machine with one thread.}
\label{tab:accountant-regimes}
\centering
\small
\begin{tabular}{@{}rrrrrlr@{}}
\toprule
$M$ & $E$ & $\lambda$ & exact $\sigma$ & exact (s) & fast Fourier certifies & default route (time) \\
\midrule
$100$ & $1$ & $0.10$ & $0.6770$ & $6.5$ & $1.005\times$ exact & $7$ s \\
$1{,}000$ & $1$ & $0.10$ & $0.4520$ & $42.9$ & none & $47$ s \\
$10{,}000$ & $1$ & $0.10$ & $0.3532$ & $32.8$ & none & $2$ min \\
$100{,}000$ & $1$ & $0.10$ & $0.2970$ & $38.0$ & none & $5$ min \\
$100$ & $100$ & $0.10$ & $4.0400$ & $9.9$ & $1.005\times$ exact & $24$ min \\
$1{,}000$ & $10$ & $0.10$ & $0.7048$ & $12.0$ & none & $31$ min \\
$10{,}000$ & $10$ & $0.10$ & $0.4821$ & $12.6$ & none & $2.0$ h \\
$1{,}000$ & $100$ & $0.10$ & $1.4289$ & $12.3$ & closed form \eqref{eq:inverse-sigma} & more than $4$ h \\
$10{,}000$ & $100$ & $0.10$ & $0.7085$ & $12.0$ & none & more than $4$ h \\
\bottomrule
\end{tabular}
\end{table}

\textbf{CIFAR-100 linear classifier (\cref{sec:calib-trial}).} A linear layer on the $2{,}048$ features of a ResNet-50 \citep{HeEtAl2016} pretrained on ImageNet \citep{DengEtAl2009}.  The network is frozen, and only the linear layer is trained. The images of CIFAR-100 \citep{Krizhevsky2009} are resized to $224\times224$ before the network.  A fixed random split of the $50{,}000$ training images gives the $N_0=35{,}000$ images of the training set and $15{,}000$ public images.  Test accuracy is on the $10{,}000$ test images.  The clipping norm is $C=0.3$, and the learning rate follows $\mathrm{lr}(B_0)=0.1\,(B_0/140)^{0.5}$, where $140$ is $B_0$ at $M=250$.  This rule was chosen before any trial, among twelve candidates with base rate $0.05$, $0.1$ or $0.25$, $C\in\{0.3,1\}$ and exponent $0$ or $0.5$, each trained at $E=20$ and $\lambda=0.20$ on $13{,}000$ of the public images and scored on the other $2{,}000$.  The loss, the optimizer, the cosine decay and the shared randomness of a cell are as for CIFAR-10, without augmentation.

\textbf{RoBERTa-base on SST-2 and AG News (\cref{sec:calib-trial}).} RoBERTa-base \citep{LiuEtAl2019RoBERTa} with a new classification head is fine-tuned in full, with all $124.6$ million parameters trained and each text cut or padded to $128$ tokens. Dropout is switched off everywhere in the model.  SST-2 \citep{SocherEtAl2013} is taken from GLUE \citep{WangEtAl2019GLUE}.  A fixed random split of its $67{,}349$ training examples gives the $N_0=45{,}000$ records of the training set, $20{,}000$ public records to train on when choosing the recipe and $1{,}000$ public records to score it. We report validation accuracy on the $872$ validation sentences.  AG News \citep{ZhangZhaoLeCun2015} has four classes.  The same kind of split of its $120{,}000$ training articles gives sets of the same sizes, and test accuracy is on its $7{,}600$ test articles.  For each dataset the recipe was chosen on its public records before any trial, at $E=20$ and $\lambda=0.20$, among six candidates, with base rate $10^{-3}$, $3\times10^{-3}$ or $10^{-2}$, clipping norm $C=1$ and exponent $0.5$ or $1$ in $\mathrm{lr}(B_0)=\text{base}\cdot(B_0/180)^{\text{exponent}}$.  For both datasets the rule chosen is $\mathrm{lr}(B_0)=0.003\,(B_0/180)$.  The score of a candidate is its mean top-1 accuracy on the $1{,}000$ scoring records over one run at each $M\in\{100,250,500\}$.  On AG News we stopped after $14$ of the $18$ runs, since no candidate with a missing run could reach the score of the best candidate even if each missing run matched the best finished run with the same $M$.  The loss, the optimizer and the cosine decay are as for CIFAR-10.  All runs used RTX A6000 GPUs.

\textbf{Other targets $\lambda$ (\cref{sec:calib-trial}).} \Cref{tab:calib-lambda} repeats the comparison of the closed form and the exact $\sigma$ at $\lambda\in\{0.01,0.05,0.20\}$, without the accountant and with the training setup above.  The exact $\sigma$ raises the test accuracy in each of the $99$ (cell, seed) pairs.  At these targets the mean gain is $1.08$ to $4.77$ points on CIFAR-100 and $0.94$ to $1.97$ points on CIFAR-10, largest at $\lambda=0.05$ for both models.
\begin{table}[t]
\caption{Test accuracy at the closed-form and the exact $\sigma$ for four targets $\lambda$.  The linear
classifier on CIFAR-100 uses the nine cells and ResNet-18 on CIFAR-10 the cells $(M,E)=(100,10)$ and
$(500,20)$, each with three seeds.  In each cell and seed the two runs share the seed, initialization and
allocation, so only $\sigma$ differs.  \emph{Noise} is how much larger the closed-form $\sigma$ is than the exact one, over the cells.  Test accuracies are means
over cells and seeds in percent, \emph{gain} is exact minus closed form in points, and \emph{positive} counts the (cell, seed) pairs
with a positive gain.  The rows at $\lambda=0.10$ use the runs of \cref{tab:calib-trial} in these cells.}
\label{tab:calib-lambda}
\centering
\small
\begin{tabular}{@{}llccccc@{}}
\toprule
model & $\lambda$ & noise (\%) & closed form & exact & gain & positive \\
\midrule
CIFAR-100 linear & $0.01$ & $24.9$ to $25.4$ & $6.12$ & $8.63$ & $+2.51$ & $27/27$ \\
 & $0.05$ & $19.9$ to $25.3$ & $41.83$ & $46.60$ & $+4.77$ & $27/27$ \\
 & $0.10$ & $14.9$ to $25.3$ & $53.58$ & $55.96$ & $+2.38$ & $27/27$ \\
 & $0.20$ & $13.3$ to $25.6$ & $58.58$ & $59.66$ & $+1.08$ & $27/27$ \\
\midrule
CIFAR-10 ResNet-18 & $0.01$ & $25.1$ to $25.3$ & $16.25$ & $17.98$ & $+1.73$ & $6/6$ \\
 & $0.05$ & $22.1$ to $24.2$ & $33.78$ & $35.76$ & $+1.97$ & $6/6$ \\
 & $0.10$ & $17.8$ to $21.8$ & $38.90$ & $39.91$ & $+1.01$ & $6/6$ \\
 & $0.20$ & $14.6$ to $19.4$ & $41.55$ & $42.48$ & $+0.94$ & $6/6$ \\
\bottomrule
\end{tabular}
\end{table}

\textbf{Comparison with published results (\cref{sec:calib-trial}).} At the exact $\sigma$ the RoBERTa-base runs have $\eps$ between $0.940$ and $0.952$ at $\delta=10^{-5}$.  On SST-2 they reach $84.97\%$ to $89.23\%$ validation accuracy, and \citet{LiEtAl2022} report $86.12\%$ validation accuracy at $\eps=3$ for full fine-tuning of RoBERTa-base with a standard classification head.  On AG News they reach $89.64\%$ to $90.58\%$ test accuracy from the title and description of $45{,}000$ private articles, and \citet{KurakinEtAl2023} report $89.8\%$ at $\eps=1$ and $90.5\%$ at $\eps=3$ for BERT fine-tuned with differential privacy on the titles of $108{,}000$ training articles.

\textbf{Confidence intervals (\cref{sec:calib-trial}).} For each model the cells are fixed in advance and the seed is the unit of replication.  So we average a difference in test or validation accuracy over the cells of each seed, nine for the image models and three for RoBERTa-base, and give the nominal $95\%$ $t$ interval, with $2$ degrees of freedom, for the mean of the three seed averages.  The exact $\sigma$ minus the closed form gives $+1.08$ points with interval $[0.75,1.41]$ on CIFAR-10, $+2.38$ points with interval $[2.26,2.50]$ for the linear classifier on CIFAR-100, $+1.58$ points with interval $[0.70,2.46]$ on SST-2 and $+0.43$ points with interval $[0.34,0.53]$ on AG News.  The exact $\sigma$ minus the accountant's $\sigma$ gives $-0.005$ points with interval $[-0.18,0.17]$ on CIFAR-10, $+0.053$ points with interval $[0.040,0.065]$ on CIFAR-100, $0.000$ points with interval $[-0.19,0.19]$ on SST-2 and $+0.006$ points with interval $[-0.011,0.022]$ on AG News, all far below the differences to the closed form.

\begin{table*}[t]
\caption{Noise multipliers and test accuracy in each cell of \cref{tab:calib-trial}.  The three values of $\sigma$ depend only on $M$, $E$ and $\lambda=0.10$, so every model uses the same $\sigma$ in a cell.  Test accuracy is in percent (validation accuracy for SST-2).  It is the mean over three seeds with their standard deviation.  Each difference is the exact run minus the other run, in points, averaged over the seeds and computed before rounding, so it can differ by $0.01$ from the difference of the printed means.}
\label{tab:calib-cells}
\centering
\small
\setlength{\tabcolsep}{3.5pt}
\begin{tabular}{@{}rrccccccrr@{}}
\toprule
& & \multicolumn{3}{c}{$\sigma$} & \multicolumn{3}{c}{ResNet-18 on CIFAR-10} & \multicolumn{2}{c}{exact minus} \\
\cmidrule(lr){3-5}\cmidrule(lr){6-8}\cmidrule(lr){9-10}
$M$ & $E$ & closed & exact & accountant & closed & exact & accountant & closed & accountant \\
\midrule
$100$ & $10$ & $1.7370$ & $1.4262$ & $1.4333$ & $37.63\pm1.27$ & $39.11\pm1.00$ & $39.16\pm0.96$ & $+1.48$ & $-0.05$ \\
$100$ & $20$ & $2.3621$ & $1.9071$ & $1.9169$ & $39.13\pm0.54$ & $40.28\pm0.65$ & $40.33\pm0.65$ & $+1.14$ & $-0.06$ \\
$100$ & $40$ & $3.2689$ & $2.6105$ & $2.6241$ & $40.60\pm0.63$ & $42.14\pm0.53$ & $42.20\pm0.64$ & $+1.54$ & $-0.06$ \\
$250$ & $10$ & $1.2088$ & $1.0245$ & $1.0282$ & $38.46\pm0.46$ & $39.17\pm0.35$ & $39.09\pm0.40$ & $+0.71$ & $+0.08$ \\
$250$ & $20$ & $1.5826$ & $1.3094$ & $1.3152$ & $40.13\pm0.45$ & $40.64\pm0.74$ & $40.70\pm0.75$ & $+0.51$ & $-0.06$ \\
$250$ & $40$ & $2.1352$ & $1.7328$ & $1.7411$ & $41.00\pm0.19$ & $42.43\pm0.18$ & $42.39\pm0.12$ & $+1.44$ & $+0.04$ \\
$500$ & $10$ & $0.9592$ & $0.8344$ & $0.8365$ & $38.66\pm0.08$ & $39.35\pm0.67$ & $39.27\pm0.66$ & $+0.69$ & $+0.08$ \\
$500$ & $20$ & $1.2093$ & $1.0260$ & $1.0298$ & $40.17\pm0.60$ & $40.71\pm0.67$ & $40.63\pm0.45$ & $+0.54$ & $+0.08$ \\
$500$ & $40$ & $1.5829$ & $1.3101$ & $1.3155$ & $41.33\pm0.57$ & $42.97\pm0.52$ & $43.06\pm0.44$ & $+1.64$ & $-0.09$ \\
\bottomrule
\end{tabular}
\par\medskip
\begin{tabular}{@{}rrccccccrr@{}}
\toprule
& & \multicolumn{3}{c}{$\sigma$} & \multicolumn{3}{c}{linear classifier on CIFAR-100} & \multicolumn{2}{c}{exact minus} \\
\cmidrule(lr){3-5}\cmidrule(lr){6-8}\cmidrule(lr){9-10}
$M$ & $E$ & closed & exact & accountant & closed & exact & accountant & closed & accountant \\
\midrule
$100$ & $10$ & $1.7370$ & $1.4262$ & $1.4333$ & $52.56\pm0.46$ & $53.72\pm0.58$ & $53.68\pm0.63$ & $+1.16$ & $+0.04$ \\
$100$ & $20$ & $2.3621$ & $1.9071$ & $1.9169$ & $55.23\pm0.18$ & $56.92\pm0.15$ & $56.87\pm0.16$ & $+1.69$ & $+0.05$ \\
$100$ & $40$ & $3.2689$ & $2.6105$ & $2.6241$ & $56.14\pm0.34$ & $58.83\pm0.22$ & $58.77\pm0.22$ & $+2.69$ & $+0.06$ \\
$250$ & $10$ & $1.2088$ & $1.0245$ & $1.0282$ & $53.63\pm0.20$ & $55.08\pm0.17$ & $55.05\pm0.16$ & $+1.45$ & $+0.03$ \\
$250$ & $20$ & $1.5826$ & $1.3094$ & $1.3152$ & $55.05\pm0.18$ & $57.40\pm0.26$ & $57.35\pm0.25$ & $+2.35$ & $+0.05$ \\
$250$ & $40$ & $2.1352$ & $1.7328$ & $1.7411$ & $54.17\pm0.57$ & $57.64\pm0.53$ & $57.55\pm0.53$ & $+3.47$ & $+0.09$ \\
$500$ & $10$ & $0.9592$ & $0.8344$ & $0.8365$ & $52.34\pm0.34$ & $54.18\pm0.20$ & $54.14\pm0.22$ & $+1.84$ & $+0.04$ \\
$500$ & $20$ & $1.2093$ & $1.0260$ & $1.0298$ & $53.05\pm0.72$ & $55.86\pm0.74$ & $55.81\pm0.72$ & $+2.81$ & $+0.05$ \\
$500$ & $40$ & $1.5829$ & $1.3101$ & $1.3155$ & $50.02\pm0.22$ & $54.00\pm0.05$ & $53.94\pm0.02$ & $+3.98$ & $+0.06$ \\
\bottomrule
\end{tabular}
\par\medskip
\begin{tabular}{@{}rrccccccrr@{}}
\toprule
& & \multicolumn{3}{c}{$\sigma$} & \multicolumn{3}{c}{RoBERTa-base on SST-2 (validation accuracy)} & \multicolumn{2}{c}{exact minus} \\
\cmidrule(lr){3-5}\cmidrule(lr){6-8}\cmidrule(lr){9-10}
$M$ & $E$ & closed & exact & accountant & closed & exact & accountant & closed & accountant \\
\midrule
$100$ & $20$ & $2.3621$ & $1.9071$ & $1.9169$ & $86.85\pm1.32$ & $88.49\pm0.98$ & $88.49\pm1.08$ & $+1.64$ & $0.00$ \\
$250$ & $20$ & $1.5826$ & $1.3094$ & $1.3152$ & $86.28\pm1.10$ & $87.61\pm0.41$ & $87.58\pm0.48$ & $+1.34$ & $+0.04$ \\
$500$ & $20$ & $1.2093$ & $1.0260$ & $1.0298$ & $84.56\pm1.13$ & $86.31\pm1.16$ & $86.35\pm1.20$ & $+1.76$ & $-0.04$ \\
\bottomrule
\end{tabular}
\par\medskip
\begin{tabular}{@{}rrccccccrr@{}}
\toprule
& & \multicolumn{3}{c}{$\sigma$} & \multicolumn{3}{c}{RoBERTa-base on AG News} & \multicolumn{2}{c}{exact minus} \\
\cmidrule(lr){3-5}\cmidrule(lr){6-8}\cmidrule(lr){9-10}
$M$ & $E$ & closed & exact & accountant & closed & exact & accountant & closed & accountant \\
\midrule
$100$ & $20$ & $2.3621$ & $1.9071$ & $1.9169$ & $89.60\pm0.53$ & $90.13\pm0.47$ & $90.13\pm0.48$ & $+0.54$ & $0.00$ \\
$250$ & $20$ & $1.5826$ & $1.3094$ & $1.3152$ & $89.74\pm0.15$ & $90.09\pm0.06$ & $90.09\pm0.07$ & $+0.36$ & $0.00$ \\
$500$ & $20$ & $1.2093$ & $1.0260$ & $1.0298$ & $89.50\pm0.23$ & $89.91\pm0.22$ & $89.90\pm0.22$ & $+0.41$ & $+0.01$ \\
\bottomrule
\end{tabular}

\end{table*}

\textbf{Licenses of the datasets and models.} The download pages of CIFAR-10 and CIFAR-100 \citep{Krizhevsky2009} (\url{https://www.cs.toronto.edu/~kriz/cifar.html}) and of the Stanford Sentiment Treebank \citep{SocherEtAl2013} (\url{https://nlp.stanford.edu/sentiment/}) state no license, and the CIFAR page asks users to cite the technical report.  The GLUE benchmark \citep{WangEtAl2019GLUE} refers users of SST-2 to the license of the original dataset (\url{https://gluebenchmark.com/faq}).  AG News \citep{ZhangZhaoLeCun2015} is built from the AG corpus of A.~Gulli, whose page permits any non-commercial use and leaves the copyright of the articles with the original news sources (\url{http://groups.di.unipi.it/~gulli/AG_corpus_of_news_articles.html}).  We use the torchvision ResNet-50 weights IMAGENET1K\_V2.  The torchvision code is released under the BSD-3-Clause license, torchvision states no separate license for these weights, and the ImageNet terms of access allow only non-commercial research and educational use.  RoBERTa-base \citep{LiuEtAl2019RoBERTa} is released through fairseq, whose README states that its MIT license also applies to the pretrained models (\url{https://github.com/facebookresearch/fairseq}).  Our use of all these assets is non-commercial research, and we do not redistribute any of the datasets or weights.

\textbf{Computing infrastructure and software.} The experiments ran on an institutional cluster.  Each training run of ResNet-18, of the linear classifier on CIFAR-100 and of RoBERTa-base used one NVIDIA GPU, an RTX A4000, A5000 or A6000, and the models of the attack trials were trained on RTX A5000 and RTX A6000 GPUs.  The linear model of \cref{sec:attacks} was trained without a GPU, on AMD EPYC 7402P processors, and the accountant runs and the timings of the calibration methods ran on the same processors.  The code uses PyTorch \citep{PaszkeEtAl2019}, torchvision, Opacus \citep{YousefpourEtAl2021} for the per-record gradients, Transformers \citep{WolfEtAl2020} for RoBERTa-base, NumPy \citep{HarrisEtAl2020} and SciPy \citep{VirtanenEtAl2020}, and the certificate of Appendix~\ref{app:certificate} uses python-flint and FLINT \citep{FLINT}.  The accountant of \citet{FeldmanShenfeld2026} is the library PLD\_accounting $0.5.0$, and FS uses the library dp\_accounting $0.6.0$ of the Google Differential Privacy Team.  PyTorch, NumPy and SciPy are released under BSD licenses, Opacus, Transformers and dp\_accounting under the Apache License 2.0, PLD\_accounting and python-flint under the MIT license, and FLINT under the GNU Lesser General Public License, version 3 or later.

\textbf{The ResNet-18 attack.} In the attack trials $D$ consists of $44{,}999$ CIFAR-10 training images and $D^+$ adds the target, so $|D|+1=N_0$.  The models of these trials reach $37.8\%$ test accuracy on average.  The target is CIFAR-10 test image $0$ with its label changed from $3$ to $4$.  The seven candidate scores are the target's negative loss, its logit-scaled confidence $\log\{p_y/(1-p_y)\}$, where $p_y$ is the probability the model gives the assigned label $y$, and its margin, each on the image itself and averaged over six fixed views, and minus the norm of its gradient.  The views are the image, its horizontal flip and its circular shifts by two pixels up, down, left and right.  Four candidates tied for the highest balanced accuracy on the pilot trials, each at its best threshold, and the selection took the first in a fixed order, the margin, with the threshold $-1.16715$.  On the $150$ evaluation trials the attack is correct on $104$ of the models trained with the target and on $50$ of those trained without it.  The attack classifies both models of a trial correctly in $4$ trials and exactly one of them in the other $146$.  Let $p$ be the probability that the attack classifies both models of a trial correctly and $p_0$ the probability that it classifies neither.  The balanced accuracy is half the expected number of models of a trial classified correctly, so it equals $\frac12\{(1-p-p_0)+2p\}=\frac12+\frac12(p-p_0)\le\frac12+\frac12p$.  The one-sided $97.5\%$ Clopper--Pearson bound \citep{ClopperPearson1934} on $p$, from $4$ of $150$ trials, puts the balanced accuracy at most $0.534$.  For $k$ successes in $n$ independent trials, the one-sided upper Clopper--Pearson bound at level $\alpha$ is the $p$ with $\Pr\{\mathrm{Bin}(n,p)\le k\}=\alpha$ if $k<n$ and $1$ if $k=n$.  The standard errors in \cref{tab:attacks} are those of the mean over trials of the fraction of the two models of a trial that the attack classifies correctly, which is the balanced accuracy.

\begin{figure}[H]
\centering
\begin{minipage}{0.7\textwidth}
\centering
\begin{tikzpicture}
\begin{axis}[  height=3.9cm, width=0.52\columnwidth, scale only axis=false,
  tick label style={font=\scriptsize}, label style={font=\scriptsize},
  title style={font=\footnotesize, yshift=-4pt}, axis lines=left, ymin=0,
  name=a, ymax=25, xmin=-2.3, xmax=-0.2,
  title={(a) one released model}, xlabel={score}, ylabel={trials},
  xtick={-2,-1.5,-1,-0.5},
]
\addplot[const plot, oiBlue, thick] coordinates {(-2.3000,0) (-2.2000,1) (-2.1000,0) (-2.0000,0) (-1.9000,4) (-1.8000,3) (-1.7000,6) (-1.6000,3) (-1.5000,4) (-1.4000,8) (-1.3000,13) (-1.2000,17) (-1.1000,13) (-1.0000,15) (-0.9000,19) (-0.8000,23) (-0.7000,4) (-0.6000,9) (-0.5000,6) (-0.4000,2) (-0.3000,0) (-0.2000,0)};
\addplot[const plot, oiOrange, thick, densely dashed] coordinates {(-2.3000,0) (-2.2000,1) (-2.1000,0) (-2.0000,0) (-1.9000,3) (-1.8000,4) (-1.7000,5) (-1.6000,5) (-1.5000,4) (-1.4000,8) (-1.3000,13) (-1.2000,15) (-1.1000,15) (-1.0000,14) (-0.9000,19) (-0.8000,22) (-0.7000,5) (-0.6000,9) (-0.5000,6) (-0.4000,2) (-0.3000,0) (-0.2000,0)};
\addplot[black, thin, dotted, forget plot] coordinates {(-1.1671,0) (-1.1671,25)};
\end{axis}
\begin{axis}[  height=3.9cm, width=0.52\columnwidth, scale only axis=false,
  tick label style={font=\scriptsize}, label style={font=\scriptsize},
  title style={font=\footnotesize, yshift=-4pt}, axis lines=left, ymin=0,
  at={(a.east)}, anchor=west, xshift=0.9cm,
  ymax=27, xmin=-0.06, xmax=0.06,
  title={(b) difference within a pair}, xlabel={score with $-$ score without},
  xtick={-0.05,0,0.05}, scaled x ticks=false,
  xticklabel style={/pgf/number format/fixed, /pgf/number format/precision=2},
]
\addplot[const plot, oiGreen, thick] coordinates {(-0.0600,0) (-0.0540,1) (-0.0480,0) (-0.0420,0) (-0.0360,3) (-0.0300,0) (-0.0240,4) (-0.0180,9) (-0.0120,13) (-0.0060,23) (0.0000,21) (0.0060,24) (0.0120,25) (0.0180,13) (0.0240,7) (0.0300,3) (0.0360,2) (0.0420,2) (0.0480,0) (0.0540,0) (0.0600,0)};
\addplot[oiGray, thin, forget plot] coordinates {(0,0) (0,27)};
\addplot[oiGreen, thin, dashed, forget plot] coordinates {(0.00576,0) (0.00576,27)};
\end{axis}
\end{tikzpicture}
\end{minipage}
\caption{The membership score of the target in $150$ trials of ResNet-18.  (a) The score of one released model, trained with the target (blue, solid) or without it (orange, dashed).  The dotted line is the threshold chosen on the pilot trials.  (b) The difference between the two models of each trial.  It is positive on average, but much smaller than the spread in (a).  The two panels have different horizontal scales.}
\label{fig:attack}
\end{figure}

\Cref{fig:attack} shows why the margin attack stays near chance.  In $97$ of the $150$ trials, adding the record raises the score of the target (paired $t=4.54$), but the standard deviation of the score across trials is $62$ times its average change.  The attacker sees only one model, and in $146$ of the $150$ trials both models of the trial land on the same side of the threshold.

The second attack of \cref{tab:attacks} was chosen on $24$ pilot trials, eight for each of three canaries (a white image, a black image and the ClipBKD image), over the same seven scores, by the largest paired $t$.  It selected the ClipBKD canary, an image along the direction in which the public CIFAR-10 images vary least, whose label is the class a reference model trained on public data finds least likely \citep[Algorithm~4]{JagielskiEtAl2020}, the averaged logit-scaled confidence and the threshold $-2.0553$, the midpoint of the pilot means.  On the $150$ evaluation trials it is correct on $86$ models trained with the canary and on $67$ trained without it.  The attack classifies both models of a trial correctly in $3$ trials and exactly one of them in the other $147$, so the same bound puts its balanced accuracy at most $0.529$.  The second attack has a larger paired $t$ than the margin attack, $8.5$ against $4.54$, yet a smaller balanced accuracy from one model.

\textbf{The linear model.} Without the target, $w\sim N(0,EM\sigma^2C^2I_{16})$.  With the target, $w\sim N(EC\hat x,EM\sigma^2C^2I_{16})$, since every epoch contributes $C\hat x$ once, regardless of its allocated step.  Put $T=\langle w,\hat x\rangle/(\sigma C\sqrt{EM})$ and $\mu=\sqrt{E/M}/\sigma$.  Since $\hat x$ is a unit vector, $T$ is $N(0,1)$ without the target and $N(\mu,1)$ with it.  The likelihood ratio of the second distribution of $w$ to the first is $\exp\bigl((\langle w,m\rangle-\norm{m}^2/2)/(EM\sigma^2C^2)\bigr)$ with $m=EC\hat x$, which is $\exp(\mu T-\mu^2/2)$.  By the Neyman--Pearson lemma \citep{NeymanPearson1933} the optimal tests threshold this ratio, and with the two hypotheses equally likely, \eqref{eq:test-gap} in Appendix~\ref{app:background} shows that the optimal test declares membership when the ratio exceeds $1$.  Here that is the test $T>\mu/2$, and it is correct with probability $\Phi(\mu/2)$ under each hypothesis.  The model has $d=16$ and is trained with $C=1$, $N_0=16$, $M=128$, $E=125$, the constant learning rate $B_0$ and no momentum.  The $N_0$ indices hold the twelve zero-feature records of $D$, the target under $D^+$, and ghost records.  A step holds none of the $12$ records of $D$ with probability $(1-1/M)^{12}=0.910$, and in each setting about $8.7$ million of the $9.6$ million steps add noise to an empty sum (line~\ref{line:noise} of \cref{alg:dpsgd}).  When the target has zero features, the two runs release the same model and the attack is exactly at chance.

\FloatBarrier
\section{Numerical Checks}
\label{app:numerics}

\begin{figure}[H]
\centering
\begin{tikzpicture}
\begin{axis}[
  width=0.62\textwidth, height=5.4cm,
  xmode=log, log basis x=2,
  xlabel={steps per epoch $M$},
  ylabel={attack accuracy on the pair},
  xtick={64,128,256,512,1024},
  xticklabels={64,128,256,512,1024},
  xmin=56, xmax=1160, ymin=0.5, ymax=0.565,
  ytick={0.50,0.51,0.52,0.53,0.54,0.55,0.56},
  yticklabel style={/pgf/number format/fixed,/pgf/number format/precision=2},
  tick label style={font=\footnotesize},
  label style={font=\footnotesize},
  legend style={font=\footnotesize, at={(0.5,-0.28)}, anchor=north,
                legend columns=2, draw=oiGray, /tikz/every even column/.append style={column sep=8pt}},
  axis lines=left,
  every axis plot/.append style={very thick},
]
\addplot[oiGray, dashed, thick, domain=56:1160, samples=2, forget plot] {0.5};
\addplot[oiBlue!55, thin, no marks] coordinates {(64,0.54559) (128,0.54744) (256,0.54744) (512,0.54699) (1024,0.54699)};
\addlegendentry{linear $E\propto M$: GDP}
\addplot[oiBlue, only marks, mark=*, mark size=1.9pt,
  error bars/.cd, y dir=both, y explicit] coordinates {(64,0.54544) +- (0,0.00026) (128,0.54730) +- (0,0.00027) (256,0.54729) +- (0,0.00027) (512,0.54690) +- (0,0.00027) (1024,0.54702) +- (0,0.00027)};
\addlegendentry{linear $E\propto M$: empirical}
\addplot[oiOrange!55, thin, no marks] coordinates {(64,0.55260) (128,0.54366) (256,0.53725) (512,0.53159) (1024,0.52636)};
\addlegendentry{sublinear $E\approx\sqrt M$: GDP}
\addplot[oiOrange, only marks, mark=square*, mark size=1.7pt,
  error bars/.cd, y dir=both, y explicit] coordinates {(64,0.55254) +- (0,0.00030) (128,0.54350) +- (0,0.00025) (256,0.53705) +- (0,0.00022) (512,0.53155) +- (0,0.00019) (1024,0.52630) +- (0,0.00016)};
\addlegendentry{sublinear $E\approx\sqrt M$: empirical}
\node[font=\scriptsize, oiGray, anchor=south west] at (axis cs:56,0.5005) {chance $=\tfrac12$};
\end{axis}
\end{tikzpicture}%
\caption{Monte Carlo estimates of $\operatorname{Acc}^*(P_M^{\otimes E},Q_M^{\otimes E})$ at $\sigma=1.5$ along $E\approx\sqrt M$ (sublinear) and $E\approx0.10M$ (linear), with $2\times10^5$ samples per point, against the prediction $\Phi(\mu/2)$ of \cref{thm:fixed-rate} with $\mu=\sqrt{E(e^{1/\sigma^2}-1)/M}$ at each point (lines).}
\label{fig:attack-empirical}
\end{figure}

\Cref{fig:attack-empirical} checks \cref{thm:fixed-rate} at finite $M$.  Each point is $\frac12+\frac12\hat d$, with $\hat d$ the Monte Carlo estimate of the total variation described in Appendix~\ref{app:exact-numerics}, and its interval is $\pm1.96$ standard errors.  Along $E\approx\sqrt M$ the attack accuracy falls from $0.553$ at $M=64$ toward chance.  Along $E\approx0.10M$ it stays near $0.547$.  At all ten points the prediction lies inside the $95\%$ Monte Carlo interval, with a largest deviation of $2.1\times10^{-4}$.

\begin{table}[t]
\caption{Comparison of five ways of setting $\sigma$ for a total-variation target at one epoch.
\emph{Bound/exact} divides the method's bound on $d_{\TV}$ by the exact value (\cref{thm:exact-tv}) at four
settings of $(\sigma,M)$, and for the closed form also at its $\sigma$ for the nine targets.  \emph{Noise} is how
much larger the method's $\sigma$ is than the exact $\sigma$, over nine targets.  \emph{Time} is the wall-clock
time to find $\sigma$, with the scan over $q$ for FS with $q$ tuned.
FS is Theorem~4.3 of \citet{FeldmanShenfeld2025} with auxiliary Poisson rate $q$, and its rows are the values that the library dp\_accounting computes.  All rows bound the
same random-allocation pair, and Appendix~\ref{app:numerics} lists the settings.
For the accountant of \citet{FeldmanShenfeld2026} at one epoch, see \cref{tab:accountant-regimes} in Appendix~\ref{app:experiments}.}
\label{tab:methods}
\centering
\footnotesize
\setlength{\tabcolsep}{3pt}
\begin{tabular}{@{}lccc@{}}
\toprule
method & bound/exact & noise & time \\
\midrule
closed form (\cref{cor:inverse}) & $1.2533$--$1.2561$ & $7.4$--$16.7\%$ & $\approx1\,\mu$s \\
\quad at the nine targets & $1.25$--$2.13$ & & \\
exact (\cref{thm:exact-tv}) & $1$ & $0$ & $0.27$--$42$\,s \\
FS, $q=1/M$ & $1.58$--$1.75$ & $9.0$--$34.3\%$ & $104$--$343$\,s \\
FS, $q$ tuned & $1.019$--$1.586$ & $0.2$--$3.5\%$ & $608$--$2181$\,s \\
\citet{VanDijkErtan2026} & $10.7$--$20.0$ & $128$--$267\%$ & $0.19$--$0.20$\,s \\
\bottomrule
\end{tabular}
\end{table}

\textbf{Settings of \cref{tab:methods}.} The bound/exact column uses the four settings $(\sigma,M)=(0.75,16384)$, $(1,16384)$, $(1.5,65536)$ and $(1,1{,}140{,}000)$ at $E=1$.  Its second row takes the closed-form $\sigma$ of each of the nine targets below and the exact value of \eqref{eq:exact-tv-cf} at $4{,}001$ and at $8{,}001$ Hermite nodes, which agree to within $10^{-11}$.  The ratio in that row is largest where $\sigma$ is smallest (Appendix~\ref{app:chi2-underdetermines}).  The FS row with $q=1/M$ uses the accountant's finest grid, $10^{-6}$.  FS is stated for $\eps>0$, and it also holds at $\eps=0$, where it bounds the total variation, because both sides of each of its inequalities are continuous in $\eps$.  The tuned row takes, in each setting, the smallest of $18$ runs, $qM\in\{1,0.5,0.2,0.1,0.05,0.02\}$ crossed with grid widths $\{10^{-4},10^{-5},10^{-6}\}$.  \citet{VanDijkErtan2026} is their Theorem~3.1 with the exact third absolute moment in place of its simplified factor, which is allowed when $\sigma\ge\sqrt{3/\ln M}$ and $M\ge3$ \citep{VanDijkErtan2026}, as in every setting here, and with the Berry--Esseen constant $B$ set to $0.4748$, the upper end of the range $[0.4097,0.4748]$ they give for it.  The noise and time columns use nine targets, $\lambda\in\{0.01,0.05,0.1\}$ crossed with $M\in\{4096,16384,65536\}$, at $E=1$.  The exact $\sigma$ that the noise column compares with solves $d_{\TV}=\lambda$ by root search on \eqref{eq:exact-tv-cf} to within $1.1\times10^{-6}$ in $\sigma$.  The timed root searches of every method stop at a tolerance of $10^{-4}$ in $\sigma$, and their wall-clock times are the fastest of three runs where repeated.  The FS noise and time use an accountant grid of $10^{-5}$.  For these two columns the tuned row solves for $\sigma$ at each of the six rates, takes the smallest $\sigma$, and reports the total time of the six searches.  The smallest $\sigma$ comes from $qM=0.02$ at $\lambda\ge0.05$ and from $qM\in\{0.05,0.1\}$ at $\lambda=0.01$.  With the accountant's grid width set to $10^{-5}$, the exact root search is faster than FS by a factor of $7$ to $384$ at $q=1/M$ and $44$ to $2{,}223$ with tuning.  On the accountant's default grid of $10^{-4}$, the noise and time of FS at $q=1/M$ are \methodsFScoarse.  On that grid, FS at $q=1/M$ is faster than the exact root search in three of the nine cells.  At $\eps=0$ the bound of FS is $\gamma$ times the total variation between $M$ steps that each include the record with probability $q$ and $M$ steps without it, where $\gamma=1/(1-(1-q)^M)$.  The first distribution is a mixture, with weight $(1-q)^M$ on the second distribution and the rest on the distribution given at least one participation, so the bound equals the total variation between that conditioned distribution and the second distribution.  As $q\to0$ the conditioned distribution tends to one participation at a uniformly chosen step, which gives the Gaussian pair, so in exact arithmetic the bound tends to the exact value.  The library dp\_accounting evaluates the Poisson total variation on a grid, and $\gamma$, at least $1/(qM)$, multiplies the numerical error of that evaluation.  With $q$ tuned over the six rates, the library's values need $0.2\%$ to $3.5\%$ more noise than the exact $\sigma$, an excess that comes from both the positive rates and the numerical error. The closed form of \citet{VanDijkErtan2026} is applicable to seven of the nine targets, that is, only for these does some $\sigma$ satisfy both the bound on $\delta$ and condition~(2) of their Theorem~3.1.  FS at $q=1/M$ needs less noise than the closed form at the largest targets and more at the smallest.

\begin{table}[H]
\caption{Exact total variation at the closed-form $\sigma$ on $27$ configurations, checked against the
accountant and Monte Carlo.  Each $\sigma$ is from \cref{cor:inverse} for the target $\lambda$, and \emph{exact
$d_{\TV}$} is that of $(P_M^{\otimes E},Q_M^{\otimes E})$ from \cref{thm:exact-tv}.  \emph{Accountant} is the
interval between the largest lower bound and the smallest upper bound on the total variation that the
privacy loss distribution accountant of \citet{FeldmanShenfeld2026} gives in the add and the remove
direction.  \emph{Monte Carlo} uses $2\times10^7$ samples, $\pm$ one standard error.  \emph{Ratio} is the bound
$\lambda$ divided by the exact value.}
\label{tab:exact27}
\centering
\small
\begin{tabular}{@{}rrrrcccc@{}}
\toprule
$\lambda$ & $M$ & $E$ & $\sigma$ & exact $d_{\TV}$ & accountant & Monte Carlo & ratio \\
\midrule
$0.10$ & $100$ & $10$ & $1.7370$ & $0.07885$ & $[0.07434,0.08359]$ & $0.07884\pm0.00002$ & $1.2683$ \\
$0.10$ & $100$ & $20$ & $2.3621$ & $0.07887$ & $[0.07436,0.08364]$ & $0.07888\pm0.00002$ & $1.2679$ \\
$0.10$ & $100$ & $40$ & $3.2689$ & $0.07888$ & $[0.07439,0.08364]$ & $0.07889\pm0.00002$ & $1.2678$ \\
$0.10$ & $250$ & $10$ & $1.2088$ & $0.07880$ & $[0.07486,0.08286]$ & $0.07877\pm0.00002$ & $1.2691$ \\
$0.10$ & $250$ & $20$ & $1.5826$ & $0.07886$ & $[0.07487,0.08286]$ & $0.07886\pm0.00002$ & $1.2681$ \\
$0.10$ & $250$ & $40$ & $2.1352$ & $0.07887$ & $[0.07491,0.08289]$ & $0.07888\pm0.00002$ & $1.2678$ \\
$0.10$ & $500$ & $10$ & $0.9592$ & $0.07870$ & $[0.07482,0.08269]$ & $0.07865\pm0.00002$ & $1.2706$ \\
$0.10$ & $500$ & $20$ & $1.2093$ & $0.07884$ & $[0.07497,0.08286]$ & $0.07884\pm0.00002$ & $1.2684$ \\
$0.10$ & $500$ & $40$ & $1.5829$ & $0.07887$ & $[0.07500,0.08289]$ & $0.07888\pm0.00002$ & $1.2679$ \\
$0.20$ & $100$ & $10$ & $1.0458$ & $0.15193$ & $[0.14778,0.15612]$ & $0.15200\pm0.00004$ & $1.3164$ \\
$0.20$ & $100$ & $20$ & $1.3403$ & $0.15254$ & $[0.14839,0.15681]$ & $0.15243\pm0.00004$ & $1.3111$ \\
$0.20$ & $100$ & $40$ & $1.7787$ & $0.15270$ & $[0.14852,0.15699]$ & $0.15273\pm0.00004$ & $1.3098$ \\
$0.20$ & $250$ & $10$ & $0.8018$ & $0.15079$ & $[0.14712,0.15449]$ & $0.15086\pm0.00004$ & $1.3263$ \\
$0.20$ & $250$ & $20$ & $0.9752$ & $0.15223$ & $[0.14856,0.15590]$ & $0.15226\pm0.00004$ & $1.3138$ \\
$0.20$ & $250$ & $40$ & $1.2336$ & $0.15262$ & $[0.14897,0.15628]$ & $0.15257\pm0.00004$ & $1.3105$ \\
$0.20$ & $500$ & $10$ & $0.6840$ & $0.14923$ & $[0.14575,0.15277]$ & $0.14923\pm0.00004$ & $1.3402$ \\
$0.20$ & $500$ & $20$ & $0.8025$ & $0.15172$ & $[0.14816,0.15534]$ & $0.15175\pm0.00004$ & $1.3182$ \\
$0.20$ & $500$ & $40$ & $0.9757$ & $0.15249$ & $[0.14890,0.15613]$ & $0.15248\pm0.00004$ & $1.3116$ \\
$0.35$ & $100$ & $10$ & $0.7849$ & $0.24117$ & $[0.23757,0.24485]$ & $0.24118\pm0.00006$ & $1.4513$ \\
$0.35$ & $100$ & $20$ & $0.9521$ & $0.24572$ & $[0.24203,0.24944]$ & $0.24575\pm0.00006$ & $1.4244$ \\
$0.35$ & $100$ & $40$ & $1.2003$ & $0.24724$ & $[0.24352,0.25100]$ & $0.24721\pm0.00006$ & $1.4156$ \\
$0.35$ & $250$ & $10$ & $0.6437$ & $0.23524$ & $[0.23207,0.23846]$ & $0.23527\pm0.00005$ & $1.4878$ \\
$0.35$ & $250$ & $20$ & $0.7459$ & $0.24318$ & $[0.23994,0.24645]$ & $0.24319\pm0.00006$ & $1.4393$ \\
$0.35$ & $250$ & $40$ & $0.8930$ & $0.24643$ & $[0.24314,0.24970]$ & $0.24644\pm0.00006$ & $1.4203$ \\
$0.35$ & $500$ & $10$ & $0.5716$ & $0.22923$ & $[0.22620,0.23233]$ & $0.22919\pm0.00005$ & $1.5268$ \\
$0.35$ & $500$ & $20$ & $0.6449$ & $0.24004$ & $[0.23692,0.24320]$ & $0.24001\pm0.00006$ & $1.4581$ \\
$0.35$ & $500$ & $40$ & $0.7467$ & $0.24523$ & $[0.24206,0.24842]$ & $0.24533\pm0.00006$ & $1.4272$ \\
\bottomrule
\end{tabular}
\end{table}

\FloatBarrier
\section{Related Work in Full}
\label{app:related}
\textbf{Random allocation.} Random allocation gives each record a fixed number of participations \citep{ChuaBallsBins2025,FeldmanShenfeld2025,DongChenOzgur2025}.  \citet{ChoquetteChooEtAl2025} propose the same sampling for one epoch as balls-in-bins batching, reuse the same batches in every epoch, and compute $(\eps,\delta)$ guarantees of matrix mechanisms with correlated noise by Monte Carlo accounting.  \citet{SchuchardtKalinin2026} compute such guarantees without sampling, from R\'enyi divergence bounds and conditional composition.  We study DP-SGD with independent noise and a fresh allocation in every epoch, as \citet{ChuaBallsBins2025} and \citet{FeldmanShenfeld2025} do, and bound the total variation in closed form.  \citet{KoskelaEtAl2023} study the Gaussian pair \eqref{eq:one-record-pair}, scaled by $\sigma$, in the shuffle model and estimate its hockey-stick divergence by Monte Carlo integration in up to seven dimensions.  Random allocation is related to random check-ins, in which records choose their participation times independently and may abstain \citep{BalleEtAl2020}.  \citet{DongChenOzgur2025} and \citet{DongOzgur2026} study balanced iteration subsampling, in which each record takes part in $k$ of $T$ steps chosen uniformly at random.  One epoch of random allocation is the case $k=1$ and $T=M$, and our $E$ epochs compose independent copies of it.  Balanced iteration subsampling with $k=E$ and $T=EM$ instead spreads the $E$ steps of a record over all $EM$ steps and need not use the record in every epoch.  Related schemes are truncated Poisson sampling \citep{Ganesh2025} and subsampling for banded matrix mechanisms \citep{DongGanesh2026}.

\textbf{Attack-based measures of privacy.} Total variation is the optimal advantage $2\operatorname{Acc}^*-1$ \citep{YeomEtAl2018} of a membership test for one pair of neighboring datasets \citep{ChatzikokolakisEtAl2023}, and \citet{GhaziIssa2024} study it as a privacy parameter.  For DP-SGD with Poisson subsampling, \citet{MahloujifarEtAl2022} bound this advantage by the total variation between a Gaussian and a Gaussian mixture, which they estimate by Monte Carlo, and \citet{KulynychEtAl2024} search with a privacy loss distribution accountant for the $\sigma$ that meets a target advantage or target attack error rates. \citet{CherubinEtAl2024} give closed-form record-inference bounds, up to an error term, for DP-SGD with Poisson sampling when all intermediate models are released, and \citet{GomezEtAl2026} advocate reporting guarantees as Gaussian differential privacy.  Gaussian differential privacy and its central limit theorem are due to \citet{DongRothSuGDP}.

\textbf{Setting parameters before training.} \citet{VanDijkEtAl2024ProactiveDP} also choose DP-SGD parameters before training, from a closed-form $(\eps,\delta)$ guarantee for Poisson subsampling derived with a generalized moments accountant.

\FloatBarrier
\section{Limitations in Detail}
\label{app:limitations}

Our bound \eqref{eq:main-bound} concerns one added or removed record, and by the triangle inequality the total-variation bound for a group of $k$ records is $k$ times as large.  A bound $\lambda$ on the total variation says little about tests with a small false-positive rate $\alpha$ \citep{CarliniEtAl2022,KulynychEtAl2024}, because the bound allows such a test a true-positive rate up to $\alpha+\lambda$.  The chi-square value $c_E$ gives a limit that shrinks with $\alpha$.  Every test on the released model with false-positive rate at most $\alpha\le\frac12$ has true-positive rate at most $\alpha+\sqrt{c_E\alpha(1-\alpha)}$ (Appendix~\ref{app:low-fpr}).  At $\alpha=10^{-3}$ and the exact $\sigma$ of our training cells for $\lambda=0.10$, the chi-square limit is at most $0.0092$, while the total variation alone allows $0.101$.  Total variation alone gives no $(\eps,\delta)$ guarantee with $\delta<\lambda$ (Appendix~\ref{app:background}), and a release that needs one can use a privacy-profile accountant.

\end{document}